%%
%% This is file `sample-sigconf.tex',
%% generated with the docstrip utility.
%%
%% The original source files were:
%%
%% samples.dtx  (with options: `all,proceedings,bibtex,sigconf')
%% 
%% IMPORTANT NOTICE:
%% 
%% For the copyright see the source file.
%% 
%% Any modified versions of this file must be renamed
%% with new filenames distinct from sample-sigconf.tex.
%% 
%% For distribution of the original source see the terms
%% for copying and modification in the file samples.dtx.
%% 
%% This generated file may be distributed as long as the
%% original source files, as listed above, are part of the
%% same distribution. (The sources need not necessarily be
%% in the same archive or directory.)
%%
%%
%% Commands for TeXCount
%TC:macro \cite [option:text,text]
%TC:macro \citep [option:text,text]
%TC:macro \citet [option:text,text]
%TC:envir table 0 1
%TC:envir table* 0 1
%TC:envir tabular [ignore] word
%TC:envir displaymath 0 word
%TC:envir math 0 word
%TC:envir comment 0 0
%%
%% The first command in your LaTeX source must be the \documentclass
%% command.
%%
%% For submission and review of your manuscript please change the
%% command to \documentclass[manuscript, screen, review]{acmart}.
%%
%% When submitting camera ready or to TAPS, please change the command
%% to \documentclass[sigconf]{acmart} or whichever template is required
%% for your publication.
%%
%%
\documentclass[journal]{IEEEtran}
%%
%% \BibTeX command to typeset BibTeX logo in the docs
\AtBeginDocument{%
  }

%% Rights management information.  This information is sent to you
%% when you complete the rights form.  These commands have SAMPLE
%% values in them; it is your responsibility as an author to replace
%% the commands and values with those provided to you when you
%% complete the rights form.

%%
%%  Uncomment \acmBooktitle if the title of the proceedings is different
%%  from ``Proceedings of ...''!
%%
%%\acmBooktitle{Woodstock '18: ACM Symposium on Neural Gaze Detection,
%%  June 03--05, 2018, Woodstock, NY}
%\acmISBN{978-1-4503-XXXX-X/2018/06}

\usepackage{url}
\usepackage{comment}
\usepackage{amsmath}
\usepackage{amssymb}

\usepackage{amsthm}
\usepackage{multirow}
\usepackage{array}
\usepackage{soul}
\usepackage{threeparttable}
\usepackage{makecell} 
\usepackage{setspace}
\usepackage{graphicx}
\newcolumntype{P}[1]{>{\centering\arraybackslash}m{#1}} 
\newcolumntype{L}[1]{>{\arraybackslash}m{#1}} 

\theoremstyle{definition} 
\newtheorem{definition}{Definition}[section] 

\theoremstyle{plain} % Use for theorems, lemmas, propositions
\newtheorem{theorem}{Theorem}[section] 
 % Shares numbering with theorems
\newtheorem{proposition}[theorem]{Proposition}

\theoremstyle{remark} % For unnumbered remarks or notes
\newtheorem*{remark}{Remark}

\makeatletter
\newcommand\footnoteref[1]{\protected@xdef\@thefnmark{\ref{#1}}\@footnotemark}
\makeatother

\newcommand{\miqing}[1]{\textcolor{red}{ MiqingComment: \textit{#1}}}

%%
%% Submission ID.
%% Use this when submitting an article to a sponsored event. You'll
%% receive a unique submission ID from the organizers
%% of the event, and this ID should be used as the parameter to this command.
%%\acmSubmissionID{123-A56-BU3}

%%
%% For managing citations, it is recommended to use bibliography
%% files in BibTeX format.
%%
%% You can then either use BibTeX with the ACM-Reference-Format style,
%% or BibLaTeX with the acmnumeric or acmauthoryear sytles, that include
%% support for advanced citation of software artefact from the
%% biblatex-software package, also separately available on CTAN.
%%
%% Look at the sample-*-biblatex.tex files for templates showcasing
%% the biblatex styles.
%%

%%
%% The majority of ACM publications use numbered citations and
%% references.  The command \citestyle{authoryear} switches to the
%% "author year" style.
%%
%% If you are preparing content for an event
%% sponsored by ACM SIGGRAPH, you must use the "author year" style of
%% citations and references.
%% Uncommenting
%% the next command will enable that style.
%%\citestyle{acmauthoryear}

%%
%% end of the preamble, start of the body of the document source.

\begin{document}

%%
%% The "title" command has an optional parameter,
%% allowing the author to define a "short title" to be used in page headers.

\title{On the Problem Characteristics of Multi-objective Pseudo-Boolean Functions in Runtime Analysis}

%%
%% The "author" command and its associated commands are used to define
%% the authors and their affiliations.
%% Of note is the shared affiliation of the first two authors, and the
%% "authornote" and "authornotemark" commands
%% used to denote shared contribution to the research.
\author{Zimin Liang and Miqing Li\(^*\)
\thanks{Zimin Liang and Miqing Li (\emph{corresponding author}) are with the School of Computer Science, University of Birmingham, Edgbaston, Birmingham B15 2TT, UK (emails: zxl525@student.bham.ac.uk; m.li.8@bham.ac.uk).}}

\maketitle
%%
%% By default, the full list of authors will be used in the page
%% headers. Often, this list is too long, and will overlap
%% other information printed in the page headers. This command allows
%% the author to define a more concise list
%% of authors' names for this purpose.

%%
%% The abstract is a short summary of the work to be presented in the
%% article.
\begin{abstract}
Recently, there has been growing interest within the theoretical community in analytically studying multi-objective evolutionary algorithms. This runtime analysis-focused research can help formally understand algorithm behaviour, explain empirical observations, and provide theoretical insights to support algorithm development and exploration. However, the test problems commonly used in the theoretical analysis are predominantly limited to problems with heavy ``artificial'' characteristics (e.g., symmetric, homogeneous objectives and linear Pareto fronts), which may not be able to well represent realistic scenarios. In this paper, we first discuss commonly used multi-objective functions in the theory domain and systematically review their features, limitations and implications to practical use. Then, we present several new functions with more realistic features, such as heterogenous objectives, local optimality and nonlinearity of the Pareto front, through simply mixing and matching classical single-objective functions in the area (e.g., LeadingOnes, Jump and RoyalRoad). We hope these functions can enrich the existing test problem suites, and strengthen the connection between theoretic and practical research. 
\end{abstract}

%%
%% The code below is generated by the tool at http://dl.acm.org/ccs.cfm.
%% Please copy and paste the code instead of the example below.
%%

%%
%% Keywords. The author(s) should pick words that accurately describe
%% the work being presented. Separate the keywords with commas.
\begin{IEEEkeywords}
Multi-objective optimisation, evolutionary computation, runtime analysis, pseudo Boolean functions.
\end{IEEEkeywords}
%% A "teaser" image appears between the author and affiliation
%% information and the body of the document, and typically spans the
%% page.

%\received{20 February 2007}
%\received[revised]{12 March 2009}
%\received[accepted]{5 June 2009}

%%
%% This command processes the author and affiliation and title
%% information and builds the first part of the formatted document.

\section{Introduction}

Many real-world challenges contain multiple conflicting objectives to be optimised simultaneously, known as multi-objective optimisation problems (MOPs). 
Instead of a single optimal solution, MOPs yield a set of solutions called \textit{Pareto optimal solutions}, each representing a distinct trade-off between the objectives. 
Multi-objective evolutionary algorithms (MOEAs) have demonstrated their ability to deal with MOPs in a variety of practical scenarios. 
Yet, the theoretical understanding of MOEAs, particularly their behaviours and performance guarantees, lags behind their practical success~\cite{zheng_theoretical_2023, knowles_evolutionary_2024}. 

In this regard, runtime analysis \cite{knowles_evolutionary_2024} emerges as a very useful tool. 
First, it can be used to theoretically analyse the expected runtime of MOEAs, including mainstream algorithms like NSGA-II~\cite{zheng_first_2022, bian_better_2022, doerr_first_2023}, SPEA2~\cite{ren2024first}, SMS-EMOA~\cite{Bian2023, zheng_how_2024, deng_runtime_2024}, MOEA/D~\cite{huang_running_2019, huang_runtime_2021, doerr_proven_2024} and NSGA-III~\cite{wietheger_mathematical_2024}, as well as simple heuristics like SEMO \cite{laumanns_running_2004, giel_effect_2006, wietheger_near-tight_2024} and G-SEMO \cite{giel_expected_2003, bian2018general, dang_crossover_2024}.
Second, runtime analysis can help confirm observations reported from empirical studies, for example, why NSGA-II is less effective for problems with three or more objectives~\cite{zheng_runtime_2024}.
Third, runtime analysis can provide insight and guidance in algorithm design (e.g., the use of crossover~\cite{dang_crossover_2024}, adaptive mutation~\cite{lehre_self-adaptation_2022} and the archive \cite{bian2024archive}), as well as in algorithmic parameter setup (e.g., mutation rate setting~\cite{lehre_negative_2010}).
Lastly, it can even be used to challenge conventional practice in the empirical community and guides the development of different mechanisms, for example introducing randomness in population update of MOEAs~\cite{bian2025stochastic,ren2025stochastic}.

Yet, multi-objective benchmark functions used in runtime analysis are predominantly limited to pseudo-Boolean ones with heavy ``artificial'' characteristics. 
They may not be able to well represent realistic scenarios. 
For example, in OneMinMax~\cite{giel_effect_2006}, a commonly used benchmark, any point in the decision space is a Pareto optimal solution.
That means the two objectives are completely conflicting, with no configuration improving both objectives or yielding dominated solutions. This is apparently not very realistic.

Another prominent feature in multi-objective pseudo-Boolean functions is the linear shape of their Pareto fronts, such as in the well-known benchmarks LeadingOnes-TrailingZeroes~\cite{laumanns_running_2002}, Count- 
Ones-CountZeroes~\cite{laumanns_running_2004} 
and OneJump-ZeroJump~\cite{doerr_theoretical_2021-1}. 
This may not be very representative of real-world problems and, more importantly, it can favour certain MOEAs (e.g., the decomposition-based algorithm MOEA/D \cite{Zhang2007}).
The performance of some practical MOEAs may heavily depend on the Pareto front shape due to their population maintenance mechanisms \cite{Li2016}. 
For example, MOEA/D uses a set of uniformly distributed weights in a simplex to maintain the population. 
As a result, the algorithm may achieve perfect uniformity when the function's Pareto front is of a linear or simplex-like shape, but may perform very poorly on other shapes \cite{Ishibuchi2016}. 
Solely considering such linear Pareto fronts may fail to provide a fair comparison between MOEAs, particularly to those who are robust to Pareto front shapes such as Pareto-based MOEAs \cite{Li2016}. 

Findings derived from very artificial functions may not hold on practical problems. For example, to our knowledge, all existing theoretical studies suggest that NSGA-II performs as least as well as SEMO, and sometimes better. However, recent empirical work \cite{li2024empirical} has shown that SEMO can outperform NSGA-II on many practical problems (e.g., the multi-objective TSP and QAP problems).

In this paper, we conduct a short review of multi-objective pseudo-Boolean functions commonly used in runtime analysis.
We discuss the characteristics that make these functions useful for theoretical study, and identify their limitations in capturing realistic features, as well as the implications for the practical community of algorithm development.
Moreover, we present several new functions by combining different single-objective functions such as LeadingOnes, Jump, and RoyalRoad.
These mix-and-match functions, aside from being composed of heterogenous objectives on which MOEAs may perform differently \cite{cosson2024bi}, can introduce more realistic features such as multi-modality (i.e., Pareto local optimality) and nonlinearity of the Pareto front, while still retaining analytical tractability.

It is worth stating that in this paper we focus on widely used, representative multi-objective pseudo-Boolean functions in runtime analysis, including OneMinMax, LeadingOnes-TrailingZeroes, CountOnes-CountZeroes and OneJump-ZeroJump, as well as ``asymmetric'' combinations of their single-objective components such as OneMax~\cite{muhlenbein_how_1992}, LeadingOnes~\cite{droste_analysis_2002}, Jump~\cite{jansen_analysis_1999} and RoyalRoad~\cite{mitchell_royal_1992}.
There do exist many other benchmarks in the area such as OneTrap-ZeroTrap~\cite{dang_illustrating_2024}, LPTNO~\cite{qian_analysis_2013}, SPG, ZPLG~\cite{qian_selection_2016} (extend from PLG~\cite{friedrich_illustration_2011}), Dec-obj-MOP~\cite{li_primary_2016} and multi-objective RealRoyalRoad~\cite{dang_crossover_2024}, and single-objective ones such as the Needle~\cite{oliveto_runtime_2011}, Trap~\cite{oliveto_runtime_2011} and RealRoyalRoad~\cite{jansen_real_2005}. 
However, 
such functions may be either too special (e.g., the global optimum in the Needle and Trap functions too hard to locate), or are regarded too artificial~\cite{doerr_theoretical_2021-1}.

The rest of the paper is structured as follows. Section II gives the preliminaries in multi-objective optimisation and single-objective pseudo-Boolean functions considered in the paper. 
Section III describes the function characteristics of interest. 
Sections IV and V discuss existing common benchmarks and new mix-and-match functions based on the aforementioned characteristics and their implications, followed by section VI that gives the limitations of the study.
Finally, Section VII concludes the paper. 

\section{Preliminaries}\label{sec:preliminaries}
In this section, we will first introduce basic concepts in multi-objective optimisation. We will then describe several well-known single-objective pseudo-Boolean functions used in runtime analysis and their extension to multi-objective pseudo-Boolean functions.

\subsection{Multi-Objective Optimisation}
Let \(f(x)=(f_1(x), f_2(x))\) be a bi-objective (maximisation) optimisation problem in decision space \(\Omega\) for all \(x\in\Omega\) that \(f_i(x):\Omega\to\mathbb{R}\).
As we only consider Boolean functions, each $x$ is a bit-string \(\{0,1\}^n\). % is the length of the bit-string. 
%Without loss of generality, all problems considered are maximisation problems. %in this paper are maximisation ones.

Considering two solutions \(x,x'\in\Omega\), we say \(x\) weakly (Pareto) dominates \(x'\), denoted \(x\succeq x'\), if \(f_1(x) \geq f_1(x')\) and \(f_2(x) \geq f_2(x')\); \(x\) dominates \(x'\) if one of the inequalities is strict, denoted \(x\succ x'\).
A solution \(x\) is \textit{Pareto optimal} if no other solution in $\Omega$ dominates it.
The set of all such solutions is called the \textit{Pareto set} in the decision space, and their image under \(f\) is the \textit{Pareto front} (objective space).

\subsection{Single-objective pseudo-Boolean Functions and Their Multi-objective Extensions}

In single-objective runtime analysis, a range of pseudo-Boolean functions have been studied. We focus on four representative ones: OneMax, LeadingOnes, Jump, and RoyalRoad.

\subsubsection{OneMax}
OneMax counts the number of \(1\)s in a bit-string.
\begin{definition} (\textit{OneMax}~\cite{muhlenbein_how_1992}). 
Let \( x \in \{0,1\}^n \) be a bit-string of length \( n \). The OneMax function is defined as:
\begin{align}
\text{OneMax}(x) = \sum_{i=1}^{n} x_i.
\end{align}
\end{definition}

Its symmetric counterpart is the OneMin (ZeroMax) function, as the second objective function of the OneMinMax benchmark.

\begin{comment}
\begin{definition}\label{def:oneMin}(\textit{OneMin (ZeroMax)}).
Let \( x \in \{0,1\}^n \) be a bit-string of length \( n \). The OneMin function is defined as:
\begin{align}
\text{OneMin}(x) = \sum_{i=1}^{n} (1-x_i)
\end{align}
\end{definition}
Considering optimising both OneMax and OneMin, a bi-objective function, OneMinMax, can be straightforwardly constructed.
\end{comment}

\begin{definition} (\textit{OneMinMax} (OMM)~\cite{giel_effect_2006}).
Let \( x \in \{0,1\}^n \) be a bit-string of length \( n \). The problem has two objectives:
\begin{align}
f_1(x) = \sum_{i=1}^{n} x_i \quad \text{and} \quad f_2(x) = \sum_{i=1}^{n} (1 - x_i).
\end{align}
\label{eq:oneminmax}
\end{definition}
The goal of the function is to simultaneously maximise both \( f_1(x) \) and \( f_2(x) \), which is inherently contradictory, as improving one objective results in decreasing the other.

\subsubsection{LeadingOnes}
This problem counts the number of consecutive ones starting from the leftmost side of the bit-string.
\begin{definition} (\textit{LeadingOnes}~\cite{rudolph1997convergence}).
Let \( x \in \{0,1\}^n \) be a bit-string of length \( n \). The LeadingOnes function is defined as:
\begin{align}
\text{LeadingOnes}(x) = \sum_{i=1}^{n} \prod_{j=1}^{i} x_j.
\end{align}
\end{definition}

%Here, the index $i$ starts from left to right (as opposed to the conventional). 
The symmetric counterpart of LeadingOnes is the TrailingZeroes function, as the second objective function of the LeadingOnes-TrailingZeroes benchmark in the following.

\begin{comment}
\begin{definition}\label{def:TZ}(\textit{TrailingZeroes}).
Let \( x \in \{0,1\}^n \) be a bit-string of length \( n \). The TrailingZeroes function is defined as:
\begin{align}
\text{TrailingZeroes}(x) = \sum_{i=1}^{n} \prod_{j=i}^{n} (1-x_j)
\end{align}
\end{definition}

Optimising both LeadingOnes and TrailingZeroes, a bi-objective function, LeadingOnes-TrailingZeroes, can be constructed.
\end{comment}

\begin{definition} (\textit{LeadingOnes-TrailingZeroes} (LOTZ) \cite{laumanns_running_2002}).
Let \( x \in \{0,1\}^n \) be a bit-string of length \( n \). The problem is defined as:
\begin{align}
f_1(x) = \sum_{i=1}^{n} \prod_{j=1}^{i} x_j \quad \text{and} \quad f_2(x) = \sum_{i=1}^{n} \prod_{j=i}^{n} (1 - x_{j}).
\end{align}
\end{definition}
The goal here is to maximise both \( f_1(x) \) and \( f_2(x) \), which encourages solutions with \(1\)s in the beginning and \(0\)s at the end.

\subsubsection{Jump}
The Jump function (also known as the $k$-Jump function) is a variation of the OneMax function but involves jumps (across a valley) in the fitness landscape. The objective function penalises solutions that are close to the optimum (the penalty becoming severer as getting closer). Formally, it is defined as follows.
\begin{definition} (\textit{$k$-Jump (OneJump)~\cite{jansen_analysis_1999}}).
Let \( x \in \{0,1\}^n \) be a bit-string of length \( n \) with a jump parameter \( k\in\{1,2,\dots,n\}\) indicating the size of the valley. The $k$-Jump function is formalised as:
\begin{align}
\text{Jump}_k(x) = 
\begin{cases}
k + |x|_1, & \text{if }  |x|_1 \leq n-k\,or\,x=1^n \\
n - |x|_1 , & \text{otherwise}.
\end{cases}
\end{align}
\label{eq:jump}
\end{definition}

In the area of multi-objective optimisation, the jump function and its symmetric counterpart --- ZeroJump function --- composed of the OneJump-ZeroJump benchmark, as in the following.
\begin{comment}
\begin{definition}\label{def:zeroJump} (\textit{ZeroJump}).
Let \( x \in \{0,1\}^n \) be a bit-string of length \( n \) with a jump parameter \( k\in\mathbb{Z}^+\) indicating the size of the valley. The function is defined as:
\begin{align}
\begin{split}
\text{ZeroJump}_k(x) = 
\begin{cases}
k + |x|_0, & \text{if }  |x|_0 \leq n-k\,or\,x=0^n \\
n - |x|_0 , & \text{otherwise}
\end{cases}
\end{split}
\end{align}
\end{definition}

The bi-objective version of the Jump problem is to consider OneJump and ZeroJump at the same time.
\end{comment}

\begin{definition} \label{def:OJZJ}(\textit{OneJump-ZeroJump} (OJZJ)~\cite{doerr_theoretical_2021-1}).
Let \( x \in \{0,1\}^n \) be a bit-string of length \( n \) and let \( k \) be a fixed jump parameter with \( 1 \leq k < \frac{n}{2} \). The OneJump-ZeroJump problem is defined as:
\begin{align}
\begin{split}
f_1(x) &= 
\begin{cases}
k + |x|_1, & \text{if }  |x|_1 \leq n-k\,or\,x=1^n \\
n - |x|_1 , & \text{otherwise}
\end{cases} \\
f_2(x) &= 
\begin{cases}
k + |x|_0, & \text{if }  |x|_0 \leq n-k\,or\,x=0^n \\
n - |x|_0 , & \text{otherwise}.
\end{cases}
\end{split}
\end{align}
\end{definition}

Here, \( f_1 (x) \) prefers bit-strings with a larger number of ones until it gets close to \( 1^n \), in which case the objective value drops. Similarly, \( f_2(x) \) prefers strings with a larger number of zeros until it is too large and close to \( 0^n \). This forms two symmetric and (partially) conflicting objectives, which we will explain in detail in Section IV.

\subsubsection{RoyalRoad}
The RoyalRoad function is designed to provide a fitness landscape with explicit building blocks which introduce neutral areas.
\begin{definition} (\textit{RoyalRoad}~\cite{mitchell_royal_1992}).
Let \( x \in \{0,1\}^n \) be a bit-string of length \( n \), partitioned into \( b \) disjoint blocks \( S_1, S_2, \dots, S_b \), each with the same length \( \ell \) (where \( n = b \cdot \ell \), \(b>1\)). The RoyalRoad function is defined as:
\begin{align}
\text{RoyalRoad}(x) = \sum_{j=1}^{b} \bigl(\ell\prod_{i\in S_j} x_i \bigr).
\end{align}
\end{definition}

Likewise to OneJump, we regard the RoyalRoad function here as OneRoyalRoad, its symmetric counterpart is ZeroRoyalRoad, as the second objective function of the OneRoyalRoadZeroRoyalRoad benchmark in the following.

\begin{comment}
\begin{definition} (\textit{ZeroRoyalRoad}).
Let \( x \in \{0,1\}^n \) be a bit-string of length \( n \), partitioned into \( b \) disjoint blocks \( S_1, S_2, \dots, S_b \), each with the same length \( \ell \) (where \( n = b \cdot \ell \), \(b>1\)). The function is defined as:
\begin{align}
\text{ZeroRoyalRoad}(x) = \sum_{j=1}^{b} \prod_{i\in S_j} (1-x_i)
\end{align}
\end{definition}
\end{comment}

To our knowledge, the RoyalRoad function has not been considered and studied in multi-objective optimisation\footnote{Note that there is a similar but different problem, called RealRoyalRoad~\cite{jansen_real_2005}, has been extended and studied recently in multi-objective optimisation \cite{dang_crossover_2024, bian2025stochastic, opris_many_2024}.}, though it can be trivially extended to by optimising the OneRoyalRoad and ZeroRoyalRoad functions. 

\begin{definition}\label{sec:royalroad} (\textit{OneRoyalRoad-ZeroRoyalRoad (ORZR)}).
Let \( x \in \{0,1\}^n \) be a bit-string of length \( n \), partitioned into \( b \) disjoint blocks \( S_1, S_2, \dots, S_b \), each with  the same length \( \ell \) (where \( n = b \cdot \ell \), \(b>1\)). The problem is:
\begin{align}
\begin{split}
f_1(x) = \sum_{j=1}^{b} \bigl(\ell\prod_{i \in S_j} x_i\bigr), \quad \text{and} \quad
f_2(x) = \sum_{j=1}^{b} \bigl(\ell\prod_{i \in S_j} (1 - x_i)\bigr).
\end{split}
\end{align}
\end{definition}

Here, \( f_1(x) \) prefers strings that have all bits set to one in each block, while \( f_2(x) \) prefers strings that have all bits set to zero in each block, forming two symmetric and conflicting objectives.

%\miqing{There is a gap between this section to the next one. The next section describes the characteristics of multi-objective functions formed by these single-objective functions, however we didn't explain these multi-objective functions explicitly, but only mentioning their single-objective components. I suggest we use a couple of sentences to note the common practice in the area to construct a multi-objective function; i.e., how they are trivially formed from these base single-objective functions we just described.}

\section{Characteristics of Multi-objective Optimisation Problems}\label{sec:features}

In this section, we will review representative characteristics of multi-objective optimisation problems. These include characteristics that existing multi-objective pseudo-Boolean functions commonly have, and characteristics that practical multi-objective optimisation problems typically have. 
In doing so, we try to explain what having (or not having) these characteristics may imply.

\subsection{Completely Conflicting Objectives}

An interesting characteristic resulting from the symmetric extension of single-objective pseudo-Boolean functions is that some multi-objective functions may have rather conflicting objectives. An extreme example is the OneMinMax function~\cite{giel_effect_2006} where the two objectives are completely conflicting. That is, an improvement on one objective always leads to a deterioration on the other. 
This characteristic can be defined as follows.
\begin{definition}(\textit{Completely conflicting objectives}).
\begin{align}
\begin{split}
\text{For all } x, x' \in \Omega: \quad f_1(x') < f_1(x) \implies f_2(x') > f_2(x) \\ \text{and} \quad f_2(x') < f_2(x) \implies f_1(x') > f_1(x).
\end{split}
\end{align}
\end{definition}

When a multi-objective optimisation problem has this characteristic, all solutions in the decision space are Pareto optimal ones. This apparently is not very realistic, though there do exist some early continuous benchmark functions having the characteristic, such as SCH~\cite{Veldhuizen1999}. 
From the perspective of MOEAs, this type of problem does not test an algorithms' ability to converge, but rather its ability to distribute solutions over the Pareto front. As such, some diversity/novelty-driven search~\cite{antipov_rigorous_2023, neumann_discrepancy-based_2018} strategies may have the edge in addressing such problems.

\subsection{Symmetry}
Another characteristic resulting from the straightforward extension of single-objective pseudo-Boolean functions is that the resultant multi-objective problems are symmetric, meaning that the objectives are similar in form or mirror each other.
For pseudo-Boolean problems, this means reversing each bit as well as their positions.
Formally, it can be defined as follows.
\begin{definition}\label{def:symm} (\textit{symmetry in pseudo-Boolean functions}).
Let \(f:\{0,1\}^n\to\mathbb{R}\) and \(g:\{0,1\}^n\to\mathbb{R}\) be two functions on binary strings of length \(n\).
\(f\) and \(g\) are \textbf{symmetric} if
\begin{align}
f(x)=g(B(R(x))) \quad \text{and} \quad g(x)=f(B(R(x)))
\end{align}
where \(R(x)\) is a function that reverses the order of bit-string \(x\) from \(\{1,2,\dots,n\}\) to \(\{n,\dots,2,1\}\); \(B(x)\) is a bitwise complementation function that flips all bits (i.e., \(x_i = 1 - x_i\) for \(i \in \{1,2,\dots,n\}\)).
\end{definition}

Crossover may benefit from such a symmetric setup. For example, if the all-zero solution is a non-dominated solution, then the all-one solution must be a non-dominated solution as well, hence being preserved highly likely in the search process of an MOEA. As all the bits of these two solutions are different, crossing them over may generate any bit-string, potentially speeding up the search. 

Another implication of having the symmetric setup is that solutions in the objective space are symmetric with respect to the line $f_1=f_2$. Such symmetry is rare in real-world scenarios, where objectives often differ in scale and structure. For example, in the multi-objective capacitated vehicle routing problem~\cite{Jozefowiez2008}, objectives like total distance and vehicle load balance are inherently heterogeneous and asymmetric. As for the effect on algorithms, some types of MOEAs are sensitive to problems with different objective ranges, such as indicator-based and decomposition-based algorithms~\cite{Li2016,ishibuchi2017effect,he2023effects}. 
%That is, they perform well when the ranges of different objectives are commensurate, whereas performing poorly when the problem has different ranges. In contrast, some Pareto-based algorithms (e.g., NSGA-II \cite{Deb2002}) are robust to the difference between objective ranges \cite{Li2021}. As such, test functions having the same or similar objective ranges may be favoured by certain MOEAs. 

\subsection{Disjoint Pareto Optimal Solutions}
Disjointness means that not all the optimal solutions (in decision space) are connected.
Connectedness can be defined using the notion of \textit{neighbourhood}, with an intuitive assumption that neighbouring solutions are connected to each other by default.
In terms of pseudo-Boolean functions, neighbourhood can be defined by one-bit difference of solutions.
\begin{definition} (\textit{Neighbourhood}).
The neighbourhood of a solution \(x\) is
\begin{align}
\mathcal{N}(x)=\{ x'\in\Omega \mid \text{Hamming}(x,x')=1 \}.
\end{align}
\end{definition}

Consider the set of Pareto optimal solutions as a directed graph where a node is a solution and the edge is defined by the neighbourhood relationship, this set is disjoint 
if the graph is disconnected (i.e., there exists no path between some pairs of nodes). Formally, it can be defined as follows.

\begin{definition}\label{def:connect}(\textit{Disjoint Pareto optimal solutions}).
For a set of Pareto optimal solutions \(\mathcal{PS}\), denote a graph 
\(G_{\mathcal{PS}}=(\mathcal{PS},E)\), where a node corresponds to a solution in $\mathcal{PS}$, and \(E=\{(x,x')\mid x,x'\in\mathcal{PS} \text{ and } x'\in \mathcal{N}(x)\}\). The problem has disjoint Pareto optimal solutions if \(G_{\mathcal{PS}}\) is disconnected.
\end{definition}

In many existing pseudo-Boolean functions for runtime analysis, Pareto optimal solutions are not disjoint, hence forming a connected region, such as in OneMinMax~\cite{giel_effect_2006} and LeadingOnes-TrailingZeroes~\cite{laumanns_running_2002}. 
In this case, local search can easily reach all Pareto optimal solutions, provided that it hits one of them.
However, many practical combinatorial problems have been shown that their Pareto optimal regions are disjoint, such as multi-objective linear programming~\cite{ehrgott1997connectedness}, spanning tree~\cite{ehrgott1997connectedness} and knapsack~\cite{gorski2011connectedness}. 
This can also be echoed by the multi-funnel structure in multi-objective combinatorial problems~\cite{ochoa2024funnels}.
As for algorithm design, having disjoint Pareto optimal solutions may highlight the importance of maintaining the diversity of solutions in decision space, rather than purely in objective space. 
Unfortunately, mainstream MOEAs only consider the latter, and performance of such algorithms can be improved substantially through considering the diversity of solutions in decision space~\cite{shir2009enhancing,cuate2019variation,ren2024maintaining}.

\subsection{Pareto Local Optimality}
Local optima are commonly observed in real-world problems, which may cause the search of an optimisation algorithm to be stuck. 
In single-objective optimisation with fitness function \(f:\Omega\to\mathbb{R}\), a solution \(x\) is a local optimum if, for some neighbourhood \(\mathcal{N}(x)\subset\Omega\), we have \(\nexists x'\in\mathcal{N}(x) \textrm{ such that } f(x')>f(x)\), meaning that its fitness is not worse than any of its neighbours. There are some single-objective pseudo-Boolean functions having local optimal solutions, such as $k$-Jump~\cite{jansen_analysis_1999}.

Similarly, in the multi-objective case, a solution is a Pareto local optimum if it is not dominated by any of its neighbours~\cite{paquete_local_2007}.
\begin{definition}\label{def:PLO}(\textit{Pareto local optimality}).
A solution \(x\) is a Pareto local optimum if 
\begin{align}
\nexists x'\in\mathcal{N}(x)\;\text{such that}\;x'\succ x.
\end{align}
\end{definition}

%Note that according to the definition, global optimal solutions are also local optimal solutions, but for the convenience of discussion, we exclude global optimal solutions when referring to local optimal solutions.
Note that global optimal solutions are also local optimal solutions by definition, but for the convenience of discussion, we exclude them when discussing local optimal solutions.

Pareto local optimality is an important characteristic that many practical problems have, and it can significantly increase the level of hardness for MOEAs to deal with. For example, in the DTLZ suite \cite{Deb2005a}, the only difference between DTLZ2 and DTLZ3 is that the latter has a number of local Pareto fronts, which imposes a much bigger challenge for MOEAs to converge~\cite{Li2013b}.

However, many well-established multi-objective pseudo-Boolean functions in runtime analysis do not have Pareto local optimal solutions, though some of their single-objective versions have.
For example, OneJump-ZeroJump~\cite{doerr_theoretical_2021-1}, the extension of $k$-Jump, does not have any Pareto local optimal solution. The local optimal solutions in $k$-Jump become non-dominated solutions (i.e., global optimal solutions in the multi-objective context) as no solution is better than them on both objectives. We will explain it in the section (Section IV-C).

\subsection{Separability of Variables}
Separability refers to how variables in the problem interact with one another with respect to the objective function.
\begin{definition}\label{def:separable} (\textit{Separability}~\cite{droste_rigorous_1998}).
An objective function \( f(x) \) with \( x\in\Omega\) is \textbf{separable} if it can be expressed as:
\begin{align}
f(x) = g_1(x_1) + g_2(x_2) + \dots + g_n(x_n)
\end{align}
where \( g_i \) is a function of the single variable \(x_i\). 
\end{definition}

The definition of separability is strict, requiring every variable of the problem is separable. With such a characteristic, an optimiser can optimise the problem variable by variable~\cite{droste_rigorous_1998, brockhoff_using_2022}.
However, not many real-world problems meet this condition. 
Instead, they may be partially separable, i.e., some of the variables can be optimised separably. 
For example, in many ZDT functions~\cite{Zitzler2000}, variables can be separated as the first variable \(x_1\) alone controls the first objective and the remaining variables contribute to the second objective.

\begin{table*}[tb]
    \centering
    \begin{threeparttable}
    \caption{Characteristics of practical multi-objective optimisation problems and whether existing and new multi-objective pseudo-Boolean functions in runtime analysis have them. 
    }
    \label{tab:features}        
    \scriptsize
    \begin{tabular}{@{}P{4cm}@{}|@{}P{1.6cm}@{}|@{}P{1.65cm}@{}|@{}P{1.6cm}@{}|@{}P{1.6cm}@{}|@{}P{1.6cm}@{}|@{}P{1.6cm}@{}|@{}P{1.6cm}@{}|@{}L{2.4cm}@{}}
    
        \hline
        \shortstack{Problem} & \shortstack{Non-symmetric\\objectives} & \shortstack{Non-completely\\ conflicting\\objectives} & \shortstack{Disjoint\\optimal\\solutions} & \shortstack{Not fully\\separable\\variables} & \shortstack{Low ratio of\\Pareto optimal\\solutions} & \shortstack{Non-linear\\Pareto front\\shape} & \shortstack{Pareto local\\optimality} & \shortstack{Studied in} \\ 
        \hline
        \shortstack{OneMinMax (OMM)} &  &  &  &  &  &  & & \cite{giel_effect_2006, nguyen_population_2015, doerr_runtime_2016, zheng_first_2022, zheng_better_2022, zheng_mathematical_2023, doerr_understanding_2023, zheng_runtime_2024-1, bian2024archive, ren2024maintaining, doerr_proven_2024, ren2024first} \\
        \hline
        \shortstack{LeadingOnes-TrailingZeroes (LOTZ)} & & \checkmark  &  & \checkmark & \checkmark &  & & \cite{laumanns_running_2002, giel_expected_2003,laumanns_running_2004, brockhoff_analyzing_2008, qian_analysis_2013, nguyen_population_2015, bian2018general, bian_better_2022, zheng_first_2022, zheng_better_2022, zheng_mathematical_2023, dang_analysing_2023, bian2024archive, ren2024maintaining, ren2024first} \\
        \hline
        \shortstack{OneJump-ZeroJump (OJZJ)} & & \checkmark & \checkmark & \checkmark & \checkmark\tnote{1} &  & & \cite{doerr_theoretical_2021-1, doerr_first_2023, zheng_theoretical_2023, doerr_run_2023, Bian2023, ren2024first} \\
        \hline
        \shortstack{CountOnes-CountZeroes (COCZ)} & \checkmark & \checkmark  &  &  & \checkmark &  & & \cite{laumanns_running_2004, qian_analysis_2013, li_primary_2016, bian2018general, huang_running_2019} \\
        \hline
        \shortstack{OneRoyalRoad-ZeroRoyalRoad (ORZR)} & &  \checkmark & \checkmark & \checkmark & \checkmark &  & \checkmark\tnote{2} & \\
        \hline 
        \hline
        \shortstack{OneMax-TrailingZeroes (OMTZ)} & \checkmark & \checkmark &  & \checkmark & \checkmark &  & & \\
        \hline
        \shortstack{OneMax-ZeroJump (OMZJ)} & \checkmark & \checkmark & \checkmark & \checkmark &  &  &  &\\
        \hline
        \shortstack{OneMax-ZeroRoyalRoad (OMZR)} & \checkmark & \checkmark & \checkmark & \checkmark & \checkmark &  & & \\
        \hline
        \shortstack{LeadingOnes-ZeroJump (LOZJ)} & \checkmark & \checkmark & \checkmark & \checkmark & \checkmark &  & \checkmark & \\
        \hline
        \shortstack{LeadingOnes-ZeroRoyalRoad (LOZR)} & \checkmark & \checkmark & \checkmark & \checkmark & \checkmark &  & \checkmark & \\
        \hline
        \shortstack{OneJump-ZeroRoyalRoad (OJZR)} & \checkmark & \checkmark & \checkmark & \checkmark & \checkmark & \checkmark\tnote{3} & \checkmark & \\
        \hline
    \end{tabular}
    \begin{tablenotes}
        \scriptsize
        \item[1] The ratio of Pareto optimal solutions is not low when the jump parameter \( k\leq\frac{n}{2}-\sqrt{\frac{n\ln4}{2}}\) (See Prop.~\ref{prop:OJZJ-k} in the Appendix).
        \item[2] Pareto local optima exist when block length \( \ell > 3 \).
        \item[3] The shape of the Pareto front is generally non-linear, except for the case that \( (n-k)\mod \ell = 0 \).
    \end{tablenotes}
    \end{threeparttable}
\end{table*}

\subsection{Ratio of Pareto Optimal Solutions}

In real-world problems, Pareto optimal solutions usually only take a small portion of the search space.
However, 
in some multi-objective pseudo-Boolean functions in runtime analysis (e.g., OneMinMax and OneJump-ZeroJump), most of their solutions are Pareto optimal ones. 
For a pseudo-Boolean function, the ratio of Pareto optimal solutions is \(\frac{|\mathcal{PS}|}{2^n}\), where $PS$ denotes the set of Pareto optimal solutions. 
We say a function has a low ratio if it converges to 0 as \(n\) grows to infinity. 
Most practical problems have a low ratio, even though the number of the Pareto optimal solutions can grow exponentially with the problem size~\cite{Ehrgott2006}, such as in bi-objective versions of the shortest path problem~\cite{hansen_bicriterion_1980}, minimum spanning tree problem~\cite{hamacher_spanning_1994} and integer minimum-cost flow problem~\cite{ruhe_complexity_1988}.

\subsection{Shape of the Pareto Front}
Real-world multi-objective optimisation problems may have a very complex geometry of their Pareto front shapes, such as various convex, concave, mixed, and/or disjoint fronts. 
However, existing multi-objective pseudo-Boolean functions in runtime analysis usually have a simple linear Pareto front.
For MOEAs which aim to approximate a good representation of a problem's Pareto front with a fixed population size, the shape of the problem matters. Some types of MOEAs favour linear Pareto front, such as hypervolume-based algorithms \cite{Auger2009} and decomposition-based algorithms \cite{Ishibuchi2016}. 
In contrast, Pareto-based algorithms are robust to Pareto front shapes \cite{Li2016}. 
This may be one of the reasons that NSGA-II may generally perform better than SMS-EMOA \cite{Beume2007} and MOEA/D \cite{Zhang2007} in classic combinatorial problems \cite{li2024empirical} and real-world problems \cite{ishibuchi2023performance}.

\begin{figure*}[htbp]
\begin{center} 
\small
    \includegraphics[scale=0.52]{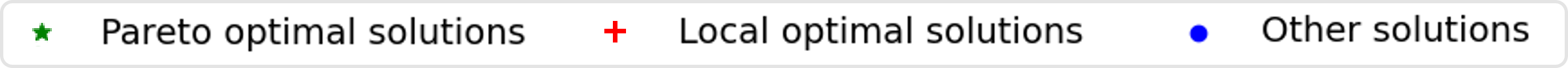}
    \renewcommand{\arraystretch}{0.1}
    \begin{tabular}{@{}c@{}c@{}c}
		\includegraphics[scale=0.45]{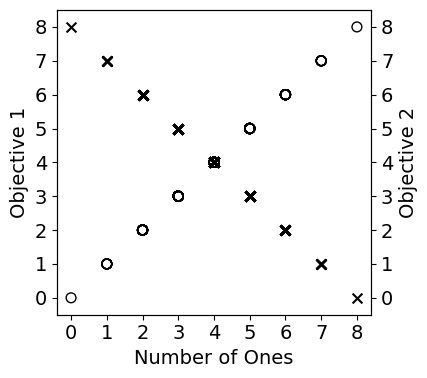}~&~
		\includegraphics[scale=0.45]{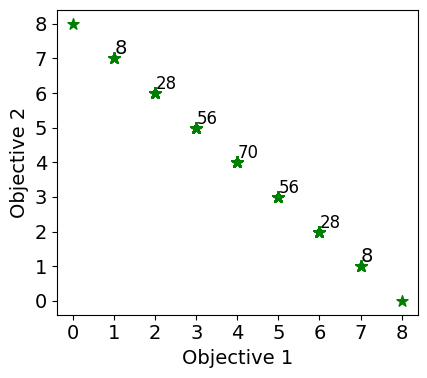}~&~
        \includegraphics[scale=0.45]{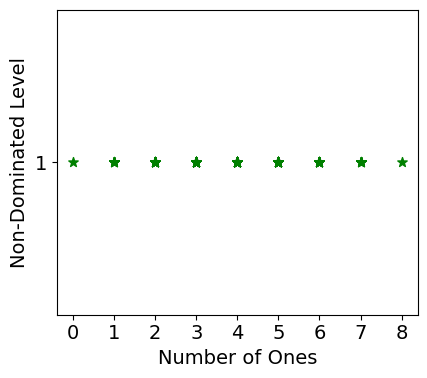}\\
		(a) Two individual objectives wrt \#\(1\) ~&~ 
		(b) Objective space ~&~
            (c) Nondominated level wrt \#\(1\) 
    \end{tabular}
    \end{center}

\caption{\footnotesize \textbf{OneMinMax (OMM)} ($n=8$) %\miqing{change the font to footnotesize as well}. 
(a) Two individual objectives (OneMax and OneMin) of the OMM problem with respect to the number of ones. (b) Objective space, where the number associated with a solution means how many solutions in the decision space map to that solution. (c) The level of solutions with respect to the number of ones based on the Pareto non-dominated sorting~\cite{Goldberg1989}.
In (b) and (c), green, red and blue points indicate Pareto optimal solutions, local optimal solutions, and other solutions respectively. In this problem, all solutions are Pareto optimal.
}

\label{fig:OMM}
\end{figure*}

\section{Review of Multi-objective Boolean Functions}
In this and the next sections, we will review multi-objective pseudo-Boolean functions based on the characteristics described in Section~\ref{sec:features}. This section covers the well-known benchmarks such as OneMinMax~\cite{giel_effect_2006}, LeadingOnes-TrailingZeroes~\cite{laumanns_running_2002}, OneJump-Zero-\\Jump~\cite{doerr_theoretical_2021-1} and CountOnes-CountZeroes~\cite{laumanns_running_2004}; the next section introduces new mix-and-match benchmarks of the single-objective functions.
Table~\ref{tab:features} summarises how these functions reflect the key characteristics of practical multi-objective optimisation problems. 
%In the table, checkmark means the function fully holds it and half-checkmark means the function partially holds it. 

%Table~\ref{tab:features} summarises the realistic problem characteristics in Sec~\ref{sec:features} with respect to the problems detailed in this section. Note that the column ``Low Ratio of Pareto PS'' represents the proportion of Pareto optimal solutions in all \(2^n\) solutions with problem size \(n\). The indication of ``low'' is rather empirical in that, the ratio decays significantly as \(n\) increases. As for column ``Non-Linear PF Shape'', this indicates that all Pareto-optimal solutions in the objective space can be represented along a single straight line.

\subsection{OneMinMax (OMM)} 

\begin{comment}

OneMinMax (OMM)~\cite{giel_effect_2006}, as aforementioned, has the first objective as the OneMax function and the second objective as the OneMin (Def~\ref{def:oneMin}) function. Formally,

\begin{definition} (\textit{OneMinMax}~\cite{giel_effect_2006})
Let \( x \in \{0,1\}^n \) be a bit-string of length \( n \). The OneMinMax problem has two objectives:
\begin{align}
f_1(x) = \sum_{i=1}^{n} x_i \quad \text{and} \quad f_2(x) = \sum_{i=1}^{n} (1 - x_i).
\end{align}
\end{definition}
Here, \( f_1(x) \) is the count of ones in the string \( x \), and \( f_2(x) \) is the count of zeros. The goal is to simultaneously maximize both \( f_1(x) \) and \( f_2(x) \), which is inherently contradictory as maximising one objective results in minimizing the other.

\end{comment}

OMM \cite{giel_effect_2006} is to simultaneously maximise the number of ones and zeros in a bit-string, defined in Definition~\ref{eq:oneminmax}. It has been commonly used in runtime analysis (see Table~\ref{tab:features}).
%\cite{giel_effect_2006, nguyen_population_2015, doerr_runtime_2016, zheng_first_2022, zheng_better_2022, zheng_mathematical_2023, doerr_understanding_2023, zheng_runtime_2024-1, bian2024archive, ren2024maintaining, doerr_proven_2024, ren2024first}.
OMM is arguably one of the least realistic multi-objective pseudo-Boolean functions.
As can be seen in Table~\ref{tab:features}, OMM has none of the characteristics that practical multi-objective problems may have.

Figure~\ref{fig:OMM} illustrates the 8-bit OMM function from three different perspectives. Specifically, Figure~\ref{fig:OMM}(a) plots the solutions on two individual objectives (OneMax and OneMin) with respect to the number of ones. Figure~\ref{fig:OMM}(b) plots the objective space, where the number associated with a solution means how many solutions in the decision space map to that solution. 
Figure~\ref{fig:OMM}(c) plots the non-dominated levels \cite{Goldberg1989} of solutions with respect to the number of ones, which helps indicate how easy solutions may move into a better place with respect to the Pareto non-dominated levels (similar to \textit{Pareto Landscape} \cite{liang2024pareto}).

As can be seen from Figure~\ref{fig:OMM}(a), the two objectives OneMax and OneMin are completely conflicting, making all solutions Pareto optimal (Figure~\ref{fig:OMM}(b) and (c)).
%The shape of the Pareto front is a straight line, and there is no disjoint Pareto optimal regions, which implies that MOEAs reaching one optimal solution may easily find other optimal solutions. 
In addition, the objectives in OMM are fully separable with respect to their variables, allowing MOEAs to optimise the problem one variable (bit) at a time.

\subsection{LeadingOnes-TrailingZeroes (LOTZ)}

LOTZ~\cite{laumanns_running_2002} is to simultaneously maximise the number of consecutive ones from left to right and the number of consecutive zeroes from right to left.
It is one of the earliest and most studied multi-objective pseudo-Boolean functions.
As can be seen in Table~\ref{tab:features}, LOTZ has some unrealistic characteristics such as symmetric objectives, a linear Pareto front, no disjoint optimal solutions nor local optima, (see Figure~\ref{fig:LOTZ}). The last two may make the problem generally easy to solve (i.e., finding all the optimal solutions).

Unlike OneMinMax, the LOTZ function exhibits characteristics %commonly shared in real-world problems 
such as non-completely conflicting objectives and a low ratio of Pareto optimal solutions.
As can be seen from Figure~\ref{fig:LOTZ}(a),
the presence of dominated solutions indicates that the objectives are not completely conflicting.
%Figure~\ref{fig:LOTZ} illustrates the 8-bit LeadingOnes-TrailingZeroes function.
%the two objectives in the function are not completely conflicting, 
%\miqing{I am wondering how we can see this from the figure}, 
%leading to the presence of the dominated solutions.
%There are many dominated solutions in the function. 
The number associated with a point in Figure~\ref{fig:LOTZ}(b) stands for the number of solutions with identical objective values (a point without a number is a unique solution). %a point without such an associated number is a unique solution.
Clearly, most of the solutions are dominated ones and there are only $n+1$ solutions in the Pareto front. %, in the form of $(1^k\,0^{n-k})$, where $n$ denotes the problem size (i.e., the length of the bit-string) and $0\leq k \in \mathbb{N} \leq n$. This leads to a low ratio of Pareto optimal solutions (i.e., $\frac{n+1}{2^n}$).
In addition, unlike OMM, the objectives in LOTZ are not fully separable, implying that MOEAs can no longer optimise them independently on a bit-wise basis.

%Despite being arguably the simplest multi-objective boolean function, OneMinMax is not the multi-objective pseudo-Boolean problem to be studied. 
%The first proposed, and the most studied multi-objective problem in the theory community is the LeadingOnes-TrailingZeroes (LOTZ)~\cite{laumanns_running_2002}.

\begin{figure*}[htbp]
\begin{center} 
\small
    \includegraphics[scale=0.52]{figures/legend_horizontal.png}
    \renewcommand{\arraystretch}{0.1}
    \begin{tabular}{@{}c@{}c@{}c}
		\includegraphics[scale=0.45]{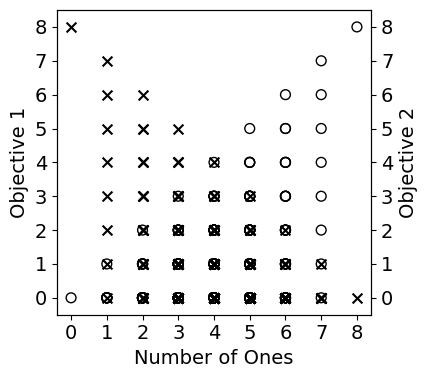}~&~
		\includegraphics[scale=0.45]{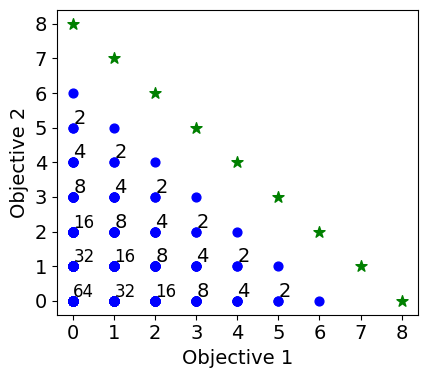}~&~
        \includegraphics[scale=0.45]{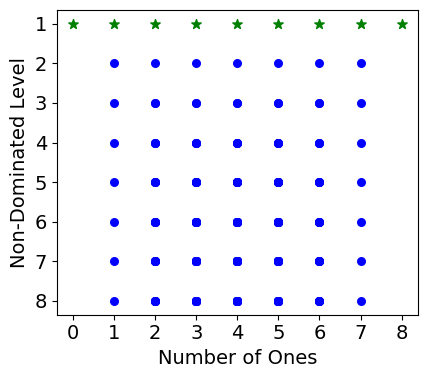}\\
		(a) Two individual objectives wrt \#\(1\) ~&~ 
		(b) Objective space ~&~
        (c) Nondominated level wrt \#\(1\) 
    \end{tabular}
    \end{center}

\caption{\footnotesize \textbf{LeadingOnes-TrailingZeroes (LOTZ) (\(n=8\))} (a) Two individual objectives (LeadingOnes and TrailingZeroes) of the LOTZ problem with respect to the number of ones. (b) Objective space, where the number associated with a solution means how many solutions in the decision space map to that solution. (c) The level of solutions with respect to the number of ones based on the Pareto non-dominated sorting. 
In (b) and (c), green, red and blue points indicate Pareto optimal solutions, local optimal solutions, and other solutions respectively.
}
\label{fig:LOTZ}

\end{figure*}

\begin{figure*}[htbp]
\begin{center} 
\small
    \includegraphics[scale=0.52]{figures/legend_horizontal.png}
    \renewcommand{\arraystretch}{0.1}
    \begin{tabular}{@{}c@{}c@{}c}
		\includegraphics[scale=0.45]{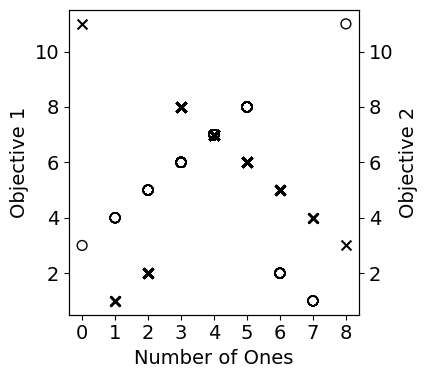}~&~
		\includegraphics[scale=0.45]{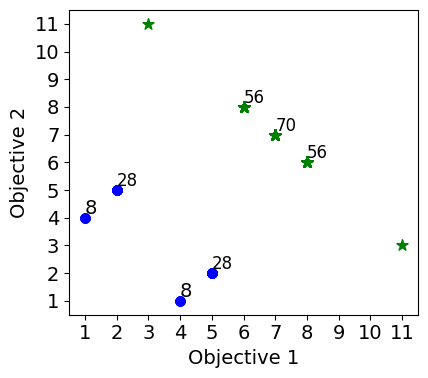}~&~
        \includegraphics[scale=0.45]{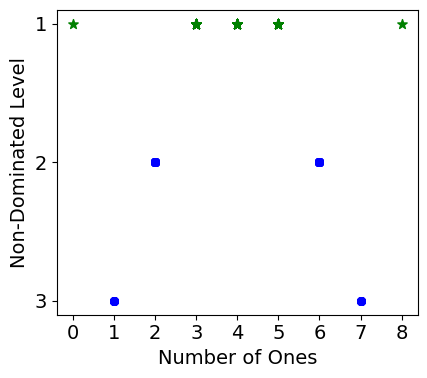}\\
		(a) Two individual objectives wrt \#\(1\) ~&~ 
		(b) Objective space ~&~
        (c) Nondominated level wrt \#\(1\) 
    \end{tabular}
    \end{center}

\caption{\footnotesize \textbf{OneJump-ZeroJump (OJZJ)} (\(n=8\), \(k=2\) where \(k\) is the jump parameter) (a) Two individual objectives (OneJump and ZeroJump) of the OJZJ problem with respect to the number of ones. (b) Objective space, where the number associated with a solution means how many solutions in the decision space map to that solution. (c) The level of solutions with respect to the number of ones based on the Pareto non-dominated sorting. 
In (b) and (c), green, red and blue points indicate Pareto optimal solutions, local optimal solutions, and other solutions respectively. %In this problem, the dominated solutions are the ones that drop into the valley of an objective.
%\miqing{I think it would be good not to remove the explanation statements. We can adjust the format of the references}
} 
\label{fig:OJZJ}

\end{figure*}

\begin{figure*}[htbp]
\begin{center} 
\small
    \includegraphics[scale=0.52]{figures/legend_horizontal.png}
    \renewcommand{\arraystretch}{0.1}
    \begin{tabular}{@{}c@{}c@{}c}
		\includegraphics[scale=0.45]{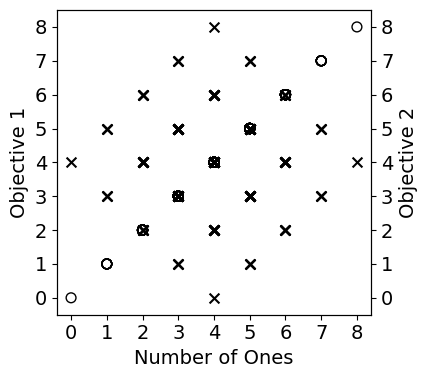}~&~
		\includegraphics[scale=0.45]{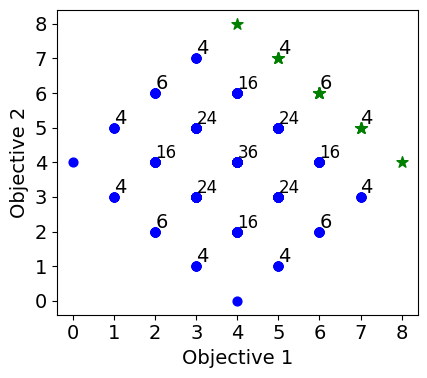}~&~
        \includegraphics[scale=0.45]{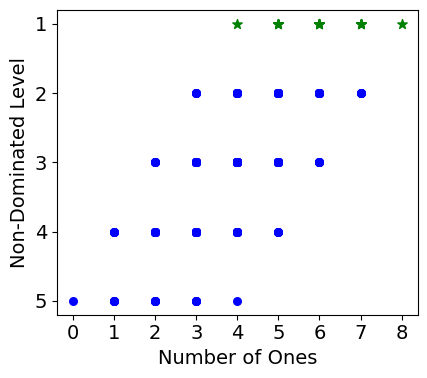}\\
		(a) Two individual objectives wrt \#\(1\) ~&~ 
		(b) Objective space ~&~
        (c) Nondominated level wrt \#\(1\) 
    \end{tabular}
    \end{center}

\caption{\footnotesize \textbf{CountOnes-CountZeroes (COCZ) (\(n=8\))} (a) Two individual objectives (OneMax and CountOnes) of the COCZ problem with respect to the number of ones. (b) Objective space, where the number associated with a solution means how many solutions in the decision space map to that solution. (c) The level of solutions with respect to the number of ones based on the Pareto non-dominated sorting. 
In (b) and (c), green, red and blue points indicate Pareto optimal solutions, local optimal solutions, and other solutions respectively.
}

\label{fig:COCZ}
\end{figure*}

\subsection{OneJump-ZeroJump (OJZJ)}\label{sec:OJZJ}
OJZJ~\cite{doerr_theoretical_2021-1} is to simultaneously maximise the number of ones and zeros while introducing a deceptive fitness gap that requires crossing a valley in each objective. The size of the valley is controlled by a parameter called the \textit{jump parameter} \(k\).
Its single-objective version, \textit{OneJump} function (Eq.~\ref{eq:jump}), is known for the presence of a local optimum. Interestingly, in the multi-objective version, the local optimum becomes a Pareto optimal solution. 
Figure~\ref{fig:OJZJ}(a) plots the solutions on two individual objectives (OneJump and ZeroJump) with respect to the number of ones for the 8-bit OJZJ problem with $k=2$. As seen, on the objective $f_1$, the solution with five ones is a local optimum for that objective; however, it becomes non-dominated if considering the objective $f_2$. The global optimal solution on each objective (i.e., $(1^n)$ and $(0^n)$) becomes the boundary solution in this two-objective problem.    

A difference of OJZJ from the previous problems is that it has disjoint optimal solutions. 
As can be seen in Figure~\ref{fig:OJZJ}(c), 
there is a gap between the boundary solutions and the remaining Pareto optimal solutions. 
This brings difficulty in finding the whole Pareto front, especially with a big $k$.
The ratio of Pareto optimal solutions in the problem depends on the size of the valley (i.e., $k$) as only solutions in the valley are dominated ones. When $k$ is small, relative to the number of total bits ($n$), the ratio is high. For example, if \( k\leq\frac{n}{2}-\sqrt{\frac{n\ln4}{2}}\), the ratio is not less than 0.5 with a sufficiently large \(n\) (Prop.~\ref{prop:OJZJ-k}). When $k$ is large (e.g., \(k=\frac{n}{2}-1\)), the ratio converges toward 0 with a sufficiently large \(n\) (Prop.~\ref{prop:OJZJ-r}).

\subsection{CountOnes-CountZeroes (COCZ)}

% Tell reader about COCZ first
% 1. emphasise non-symmetric objectives and being well-studied
% 2. explain this problem, using OneMinMax
% 3. show the formal definition for the first time

% Then, show that COCZ is as easy as OMM
% 4. with table, showing it has many unrealistic characteristics as OneMinMax
% 5. diamond shape objective space because the cooperative part decides the distance from the PF and the conflicting part alone makes PF being OneMinMax with half of variables.
% 6. low ratio of Pareto optimal solutions

COCZ~\cite{laumanns_running_2004} 
%\miqing{I am thinking if we should use acronym of all the functions we consider; if yes, we need to implement it into the table, figures and text.}\zimin{Acronym is almost the default in theory paper, but may not be reader-friendly for the practical community. I suggest using acronym in its corresponding subsection, but full name in other locations in the paper.} 
is a problem known for having asymmetric objectives. 
%and has been extensively studied in the area~\cite{qian_analysis_2013, li_primary_2016, huang_running_2019}.
The problem consists of two parts: a cooperative part (the first half of the decision variables) and a conflicting part (the second half of the decision variables).
In the cooperative part, both objectives aim to maximise the number of ones (like two OneMax functions).
In the conflicting part, one objective maximises the number of ones and another objective maximises the number of zeroes (like OneMinMax).
Formally, COCZ can be formulated below.
\begin{definition}\label{def:COCZ}(\textit{CountOne-CountZeroes (COCZ)}~\cite{laumanns_running_2004})
Let \( x \in \{0,1\}^n \) be a bit-string of even bits. The COCZ function is defined as
\begin{align}
f_1(x) = \sum_{i=1}^{n} x_i \quad \text{and} \quad f_2(x) = \sum_{i=1}^{n/2} x_i + \sum_{i=n/2+1}^{n} (1 - x_i).
\end{align}
\end{definition}

The first objective is the OneMax function, and the second objective is called the single-objective \textit{CountOnes} function~\cite{droste_rigorous_1998} which treats half of the decision variables as OneMax and another half as OneMin.
This feature makes the problem non-symmetric and non-completely conflicting with respect to the two objectives. 

Figure~\ref{fig:COCZ} illustrates the 8-bit COCZ function. As can be seen from Figure~\ref{fig:COCZ}(b), the solutions in the objective space form a diamond shape. This occurrence can be attributed to the fact that the cooperative part of the bit-string determines the distance between a solution and the Pareto front (from the upper right to the lower left), the conflicting part alone controls the positions of solutions on the Pareto front. This bipartite feature partly resembles several continuous benchmark suites widely used in the empirical community, such as ZDT~\cite{Zitzler2000} and WFG~\cite{Deb2005a}.
In addition, the ratio of the Pareto optimal solutions is low (\(1/2^{n/2}\)) as only the conflicting part affects the Pareto front.%, hence the number of the Pareto optimal solutions equal to that of OneMinMax with half of bit-string length. 

\begin{figure*}[htbp]
\begin{center} 
\small
    \includegraphics[scale=0.52]{figures/legend_horizontal.png}
    \renewcommand{\arraystretch}{0.1}
    \begin{tabular}{@{}c@{}c@{}c}
		\includegraphics[scale=0.45]{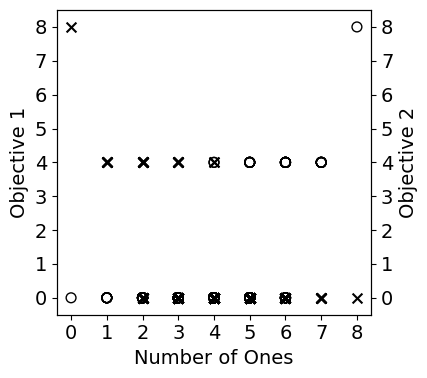}~&~
		\includegraphics[scale=0.45]{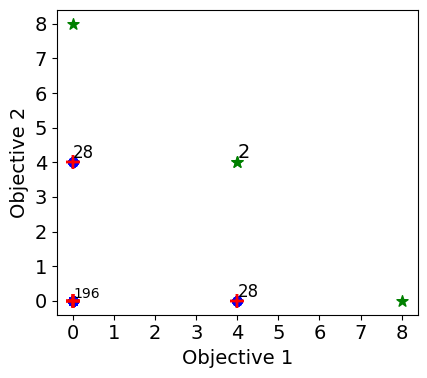}~&~
        \includegraphics[scale=0.45]{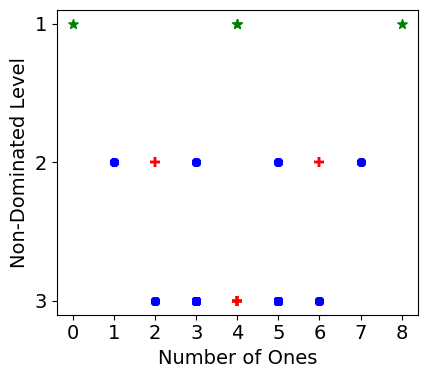}\\
		(a) Two individual objectives wrt \#\(1\) ~&~ 
		(b) Objective space ~&~
        (c) Nondominated level wrt \#\(1\) 
    \end{tabular}
    \end{center}
\caption{\footnotesize \textbf{OneRoyalRoad-ZeroRoyalRoad (ORZR)} (\(n=8\), \(b=2\) where \(b\) is the number of blocks) (a) Two individual objectives (OneRoyalRoad and ZeroRoyalRoad) of the ORZR problem with respect to the number of ones. (b) Objective space, where the number associated with a solution means how many solutions in the decision space map to that solution. (c) The level of solutions with respect to the number of ones based on the Pareto non-dominated sorting. 
In (b) and (c), green, red and blue points indicate Pareto optimal solutions, local optimal solutions, and other solutions respectively. 
In this problem, local optimal solutions (red crosses) and other solutions (blue dots) overlap in (b).}
\label{fig:ORZR}
\end{figure*}

%\vspace{-5pt}
\subsection{OneRoyalRoad-ZeroRoyalRoad (ORZR)}\label{sec:ORZR}
Unlike the previous problems, to our knowledge ORZR has not yet been considered in runtime analysis. It divides the bit-string into equal-sized blocks, each requiring all ones or zeroes to contribute to the objective (see Section~\ref{sec:royalroad}).
This design leads to disjoint optimal solutions as they are at least \(b\) bits from each other (where \(b\) denotes the number of bits in a block), and a low ratio of Pareto optimal solutions (i.e., \(\frac{2^{n/b}}{2^n}\)) as optimal solutions must consist of only blocks of all ones (or all zeroes). For the formal form of Pareto set and Pareto local optimal solutions, see Prop.~\ref{prop:ORZR-ps} and Prop.~\ref{prop:ORZR-plo} in the Appendix.

The single-objective RoyalRoad function~\cite{mitchell_royal_1992} is known for the presence of the neutral area, where there exists at least a solution that all of its neighbours are of the same fitness. 
Such a neutral area is considered a challenging feature~\cite{van_nimwegen_statistical_1999, verel_local_2011}. The multi-objective version inherits this feature; there exist solutions such that all of their neighbours are non-dominated to itself.
This introduces a feature that is prevalent in practical problems but does not exist in the other existing multi-objective pseudo-Boolean functions -- Pareto local optimality. 
Figure~\ref{fig:ORZR} illustrates the 8-bit ORZR with two blocks.
As can be seen from Figure~\ref{fig:ORZR}(c), there are three local optimal solutions whose neighbours are not better than them.
It is worth noting that in ORZR the local optimal solutions and their neighbours are not ``incomparable'' but have the same objective values (i.e., mapped into the same objective vectors, see Figure~\ref{fig:ORZR}(b)). This may pose the challenge for mainstream MOEAs that maintain diversity only in the objective space, as they cannot identify potential solutions that help in finding promising areas \cite{ren2024maintaining}.

In addition, ORZR exhibits a property that is common in real-world applications but rare in the pseudo-Boolean benchmarks. While optimal solutions require each block to consist entirely of ones or zeroes, some blocks in random solutions have roughly the same number of ones and zeroes.
This creates a position at the ``centre'' of a plateau where improving the block requires multiple simultaneous bit flips. This makes ORZR challenging for algorithms like SEMO that relies on local variation.

\section{Mix-and-Match Multi-Objective Boolean Functions}\label{sec:mix-and-match}

%\miqing{I don't think the following should appear here; the right place for us to mention this is when we explain the first mix-match problem. In addition, I think the following should be split into two parts. One is about new mix and match functions. And the other is about the symmetric function that has not been considered, i.e., OneRoyalRaod-ZeroRoyalRoad, which should appear at the beginning of the next section} 
In the previous section, the bi-objective functions are formed by extending the symmetric version of single-objective ones. In this section, we introduce the asymmetric mixes of two different objectives: 
the first objective remains its original form, while the second is its symmetric counterpart. For example, mixing LeadingOnes and OneJump yields \textit{LeadingOnes-ZeroJump}.
Due to space limitation, the mixes involving OneMax are provided separately in the appendix, as they often trivialise the problem. Formal properties (e.g., the Pareto solution set and local optimal set) of all the new benchmarks are given Prop.~\ref{prop:ORZR-ps} to Prop.~\ref{prop:OMZR-ps} in the Appendix.

\begin{figure*}[htbp]
\begin{center} 
\small
    \includegraphics[scale=0.52]{figures/legend_horizontal.png}
    \renewcommand{\arraystretch}{0.1}
    \begin{tabular}{@{}c@{}c@{}c}
		\includegraphics[scale=0.45]{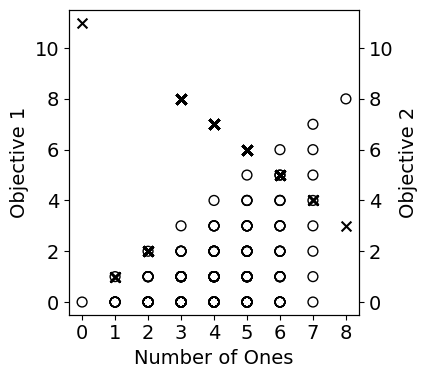}~&~
		\includegraphics[scale=0.45]{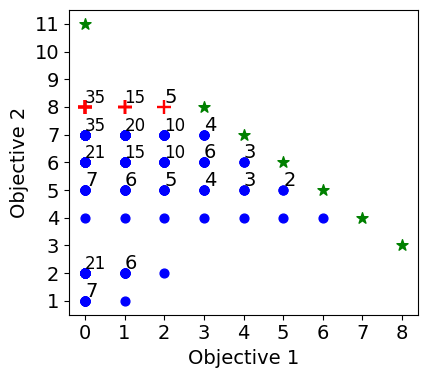}~&~
        \includegraphics[scale=0.45]{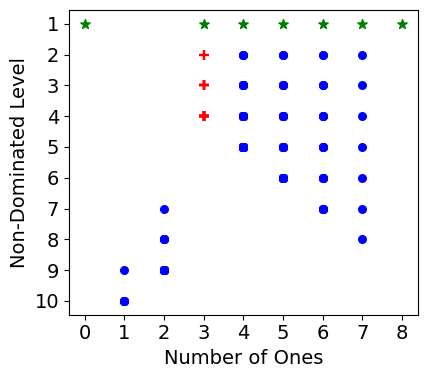}\\
		(a) Two individual objectives wrt \#\(1\) ~&~ 
		(b) Objective space ~&~
        (c) Nondominated level wrt \#\(1\)  
    \end{tabular}
    \end{center}
    %\vspace{-12pt}
\caption{\footnotesize \textbf{LeadingOnes-ZeroJump (LOZJ)} (\(n=8\), \(k=3\) where $k$ is the jump parameter, controlling the size of the valley). (a) Two individual objectives (LeadingOnes and ZeroJump) of the LOZJ problem with respect to the number of ones. (b) Objective space, where the number associated with a solution means how many solutions in the decision space map to that solution. (c) The level of solutions with respect to the number of ones based on the Pareto non-dominated sorting. 
In (b) and (c), green, red and blue points indicate Pareto optimal solutions, local optimal solutions and other solutions, respectively.
}
\label{fig:LOZJ}
%\vspace{-7pt}
\end{figure*}

%\vspace{-5pt}
\subsection{LeadingOnes-ZeroJump (LOZJ)}\label{sec:LOZJ}

The problem LOZJ combines the LeadingOnes function with the ZeroJump function, and it simultaneously maximises the number of consecutive ones (from left to the right), and the number of zeroes in the bit-string with a valley (determined by the parameter $k$). 
Formally, LOZJ can be formulated as follows.

\begin{definition} (\textit{LeadingOnes-ZeroJump}).
Let \( x \in \{0,1\}^n \) be a bit-string of length \( n \) and \(k\) be the jump parameter with \(1<k<\frac{n}{2}\). The problem is defined as:
\begin{align}
\begin{split}
f_1(x) = \sum_{i=1}^{n} \prod_{j=1}^{i} x_j, \,
f_2(x) = 
\begin{cases}
k + |x|_0, & \text{if }  |x|_0 \leq n-k\,or\,x=0^n \\
n - |x|_0 , & \text{otherwise}.
\end{cases}
\end{split}
\end{align}
\end{definition}
%\vspace{-5pt}

As LOZJ combines LeadingOnes-TrailingZeroes with OneJump-ZeroJump, it inherits their characteristics such as having disjoint optimal solutions and a low ratio of optimal solutions (Table~\ref{tab:features}). However, a distinct characteristic of LOZJ is its Pareto local optimality.
Figure~\ref{fig:LOZJ}(c) shows three local optima near the valley (in red), corresponding to ZeroJump's local optima. For these local optimal solutions, flipping any bit reduces the ZeroJump value, hence these solutions are non-dominated by their neighbours. Take the local optimal solution 10010010 in Figure~\ref{fig:LOZJ}(c) as an example, where the solution is located in non-dominated level 3 with objective values $(1,7)$. Flipping any of its bits from one to zero will potentially lead to one of the three solutions, 00010010 (level 9), 10000010 and 10010000 (level 8). 
Likewise, flipping any bit from zero to one will potentially lead to one of the five solutions, 11010010 (level 3), 10110010, 10011010, 10010110 and 10010011 (level 4).

It is worth mentioning is that the valley in LOZJ poses a bigger challenge for MOEAs to reach the all-zero optimal solution than in OneJump-ZeroJump (OJZJ).
To do so, (i.e., the top-left solution in Figure~\ref{fig:LOZJ}(b)), MOEAs generally need to find the nearest Pareto optimal solution first, i.e., the one with objective values \((3,7)\) in Figure~\ref{fig:LOZJ}(b).
However, in LOZJ, this solution corresponds to only one bit-string (11100000), whereas in OJZJ, it corresponds to any bit-string with exactly three \(1\) bits (e.g., 56 bit-strings in Figure~\ref{fig:OJZJ}(b)).
This many-to-one mapping in OJZJ enables crossover to speed up the jump~\cite{doerr_runtime_2023}, but this is not possible in LOZJ.
Such hard-to-reach boundary solutions also appear in real-world problems~\cite{wang2018generator, chu2024improving}.

\begin{figure*}[htbp]
\begin{center} 
\small
    \includegraphics[scale=0.52]{figures/legend_horizontal.png}
    \renewcommand{\arraystretch}{0.1}
    \begin{tabular}{@{}c@{}c@{}c}
		\includegraphics[scale=0.45]{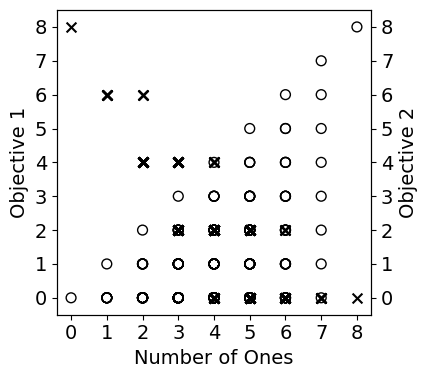}~&~
		\includegraphics[scale=0.45]{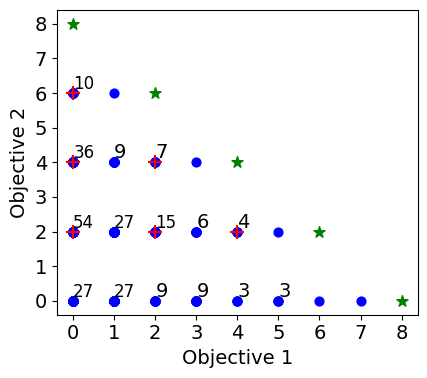}~&~
        \includegraphics[scale=0.45]{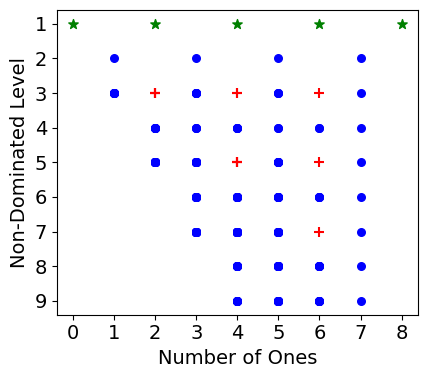}\\
		(a) Two individual objectives wrt \#\(1\) ~&~ 
		(b) Objective space ~&~
        (c) Nondominated level wrt \#\(1\) 
    \end{tabular}
    \end{center}
    %\vspace{-12pt}
\caption{\footnotesize \textbf{LeadingOnes-ZeroRoyalRoad (LOZR)} (\(n=8\), \(b=4\) where \(b\) denotes the number of blocks) (a) Two individual objectives (LeadingOnes and ZeroRoyalRoad) of the LOZR problem with respect to the number of ones. (b) Objective space, where the number associated with a solution means how many solutions in the decision space map to that solution. (c) The level of solutions with respect to the number of ones based on the Pareto non-dominated sorting. 
In (b) and (c), green, red and blue points indicate Pareto optimal solutions, local optimal solutions, and other solutions respectively. 
In this problem, the local optimal solutions (red cross) overlap with other solutions (blue dots) in the (b).
}
%\vspace{-7pt}
\label{fig:LOZR}
\end{figure*}

% information list
% 1. introduce LOZR
% 1.5. formal definition
% 2. inherits characteristics from LOTZ and ORZR (block design -> disjoint)
% 3. 

%%\vspace{-10pt}
\subsection{LeadingOnes-ZeroRoyalRoad (LOZR)}

LOZR combines the LeadingOnes function with the ZeroRoyalRoad function, simultaneously maximising the number of consecutive ones (from left to right) in a bit-string, and the number of blocks with all zero bits.
Formally, LOZR can be formulated as follows.

\begin{definition} (\textit{LeadingOnes-ZeroRoyalRoad}).
Let \( x \in \{0,1\}^n \) be a bit-string of length \( n \), partitioned into \( b \) disjoint blocks \( S_1, S_2, \dots, S_b \), with each of length \( \ell \) (where \( n = b \cdot \ell , b>1\)). 
\begin{align}
\begin{split}
f_1(x) = \sum_{i=1}^{n} \prod_{j=1}^{i} x_j,  \quad \text{and} \quad f_2(x) = \sum_{j=1}^{b} \bigl(\ell\prod_{i \in S_j} (1 - x_i)\bigr).
\end{split}
\end{align}
\end{definition}

LOZR shares many characteristics with OneRoyalRoad-Zero-\\RoyalRoad (ORZR), for example, having disjoint optimal solutions, a low ratio of optimal solutions, and Pareto local optimality (Table~\ref{tab:features}). However, an interesting difference is that in ORZR, Pareto local optimal solutions only exist when the block length is greater than three (\(\ell>3\)), whereas in LOZR they exist with any legitimate block length (i.e., \(\ell \geq2\)). The reason for this is as follows.

In ORZR, a Pareto local optimal solution needs to contain at least one block that is neither \(1^\ell\) nor \(0^\ell\), preventing it from being globally optimal.
For such a block, flipping a single bit must not turn it into \(1^\ell\) or \(0^\ell\), as that would improve at least one objective (while the other stays unchanged).
This condition can only be met when the block length \(\ell>3\), i.e., having at least two \(1\)s and at least two \(0\)s.
%For example, in ORZR with \(\ell=2\) (e.g., a block 10) or \(\ell=3\) (e.g., a block 110), a single bit flip always completes the block, eliminating Pareto local optimality.
In contrast, in LOZR a requirement for being a Pareto local optimal solution is that a single-bit flip cannot turn a block into \(0^\ell\) (but can be \(1^\ell\) as it only considers the ZeroRoyalRoad function).
As a result, LOZR does not require a block length greater than three because a block of length two (i.e., 11) is sufficient to prevent itself from being turned into an all-zero block via a single-bit flip.

\begin{figure*}[htbp]
\begin{center} 
\small
    \includegraphics[scale=0.52]{figures/legend_horizontal.png}
    \renewcommand{\arraystretch}{0.1}
    \begin{tabular}{@{}c@{}c@{}c}
		\includegraphics[scale=0.45]{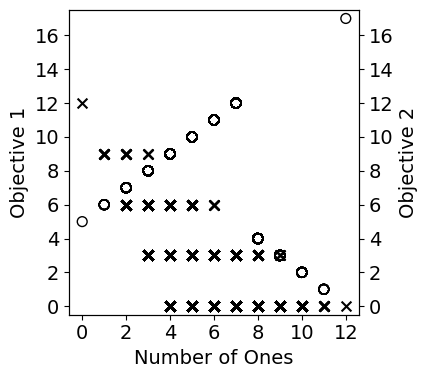}~&~
		\includegraphics[scale=0.45]{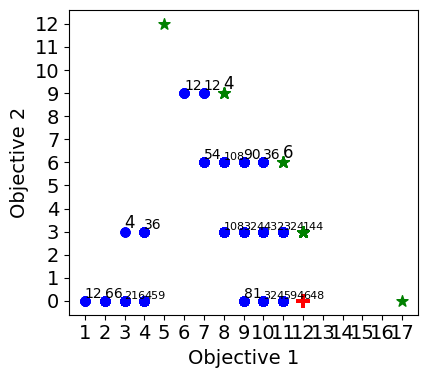}~&~
        \includegraphics[scale=0.45]{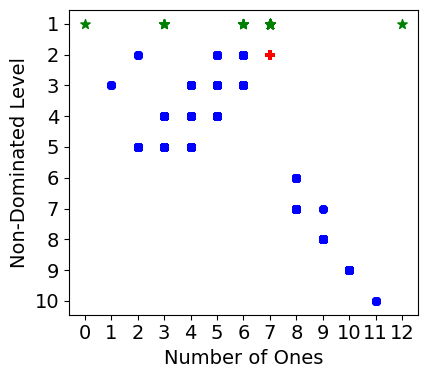}\\
		(a) Two individual objectives wrt \#\(1\) ~&~ 
		(b) Objective space ~&~
        (c) Nondominated level wrt \#\(1\) 
    \end{tabular}
    \end{center}
    %\vspace{-12pt}
\caption{\footnotesize \textbf{OneJump-ZeroRoyalRoad (OJZR)} (\(n=12, k=5, b=4\) where \(k\) is the jump parameter and \(b\) is the number of blocks). 
%\miqing{how people know what k and b are?}) 
(a) Two individual objectives of OJZR with respect to the number of ones. (b) Objective space, where the number associated with a solution means how many solutions in the decision space map to that solution. (c) The non-dominated level of solutions with respect to the number of ones. In (b) and (c), green, red and blue points indicate Pareto optimal solutions, local optimal solutions, and other solutions respectively. 
In this problem, the local optimal solution (red cross) overlaps with other solutions (blue dots) in (b). Note that the Pareto front in (b) is not linear --- the Pareto optimal solution \((11,3)\) forms a concave region. 
%\miqing{what if people cannot see this is not linear from a first glimpse?}}
}
\label{fig:OJZR}
%\vspace{-7pt}
\end{figure*}

\begin{figure}[htbp]
\begin{center} 
\small
    \includegraphics[scale=0.37]{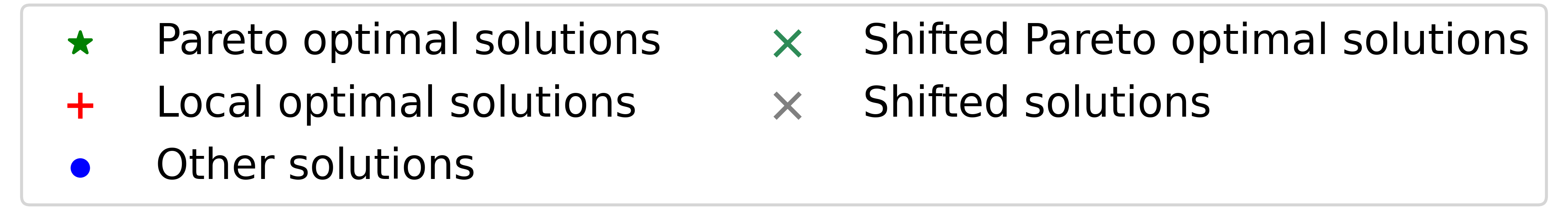}
    \renewcommand{\arraystretch}{0.1}
    %\begin{tabular}{@{}P{4.4cm}@{}@{}P{4.5cm}@{}}
    \begin{tabular}{@{}c@{  }@{}c}
		\includegraphics[scale=0.36]{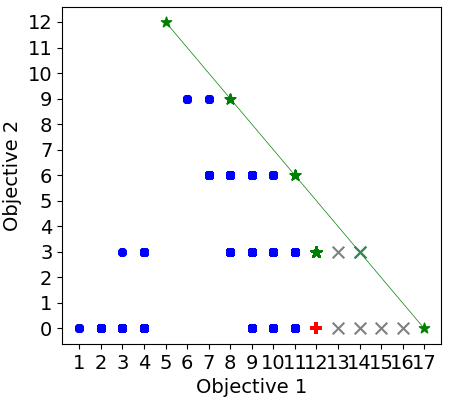} ~&~
		\includegraphics[scale=0.36]{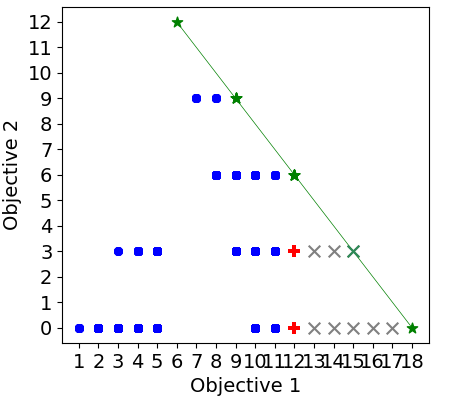}  \\
		(a) With a concave region (\(k=4\)) ~&~ 
		(b) Without a concave region (\(k=5\))
    \end{tabular}
    \end{center}
    %\vspace{-12pt}
\caption{\footnotesize \textbf{The objective space of two OneJump-ZeroRoyalRoad (OJZR) instances} (\(n=12, b=4, k=5 \textrm{ or } k=6\)): one having a concave Pareto front and the other having a linear Pareto front, where \(b\) is the number of blocks and \(k\) is the jump parameter. (a) The OJZR instance with \(k=5\) 
%indicating the length of the gap in which the solutions within the gap are shifted to the left part of the objective space because they fall into the valley of the OneJump objective. 
(b) The OJZR instance with \(k=6\). In both cases, by OneJump, several solutions (the crosses) are shifted to the left. When \(k=5\), the solution \((11,3)\) becomes a Pareto optimal solution, creating a concave region in the Pareto front. In contrast, when \(k=6\) the solution \((11,3)\) is still a dominated solution since it is dominated by \((11,6)\).} 
\label{fig:OJZRexample}
%\vspace{-12pt}
\end{figure}

%\vspace{-5pt}
\subsection{OneJump-ZeroRoyalRoad (OJZR)}

OJZR combines the OneJump function with the ZeroRoyalRoad function, simultaneously maximising the number of ones with a valley (determined by the parameter \(k\)), and the number of blocks with all zero bits.
It can be formulated as follows.

\begin{definition} (\textit{OneJump-ZeroRoyalRoad}).
Let \( x \in \{0,1\}^n \) be a bit-string of length \( n \), partitioned into \( b \) disjoint blocks \( S_1, S_2, \dots, S_b \), each of length \( \ell \) (where \( n = b \cdot \ell , b>1\)). Let \( k \in \mathbb{Z}^+\) be the jump parameter with \(1<k<\lfloor \frac{n}{2} \rfloor\). The problem is defined as:
\begin{align}
\begin{split}
f_1(x) = 
\begin{cases}
k + |x|_1, &\text{if }  |x|_1 \leq n-k\, \\ & \text{ }\text{ }\text{ or}\,x=1^n \\
n - |x|_1 , &\text{otherwise}
\end{cases} \,
f_2(x) = \sum_{j=1}^{b} \bigl(\ell\prod_{i \in S_j} (1 - x_i)\bigr).
\end{split}
\end{align}
\end{definition}

OJZR exhibits all the characteristics listed in Table~\ref{tab:features}, such as the Pareto local optimality and a low ratio of Pareto optimal solutions.
Notably, it is the only problem that has a non-linear Pareto front (having a concave region). Taking the 12-bit OJZR problem (\(k=5, b=4\)) in Figure~\ref{fig:OJZR} as an example, the problem has a concave region near the solution with objective values \((11,3)\) (Figure~\ref{fig:OJZR}(b)).
The reason for this occurrence is as follows.

%The objective space of OJZR is similar to that of OneMax-ZeroRoyalRoad (OMZR) (Figure~\ref{fig:OMZR}(b)), due to the similarity between the OneJump and OneMax functions. 
%However, the OneJump objective shifts the positions of some solutions in the objective space (thus forming a valley), which may turn a dominated solution (in OMZR) into a new Pareto optimal solution, thereby changing the shape of the Pareto front.
The occurrence of the concave region can be attributed to the fact that the OneJump objective shifts the positions of some solutions in the objective space and ``creates'' a new Pareto optimal solution. 
Figure~\ref{fig:OJZRexample}(a) illustrates and explains this in the objective space of a 12-bit OJZR problem,
%\miqing{that is confusing; apparently people do not expect two situations and which they are for. I feel this paragraph needs to be restructured} 
where the grey and green crosses (columns $12,13,14,15$ with respect to the first objective) represent the solutions that are shifted to the left and leave a gap in the objective space (as seen in Figure~\ref{fig:OJZR}(b)).
Since the Pareto optimal solution \((13,3)\) is shifted to \((3,3)\), the solution \((11,3)\) becomes a new Pareto optimal solution.
This leads to a concave region below the original straight-line Pareto front (the green line in Figure~\ref{fig:OJZRexample}(a)).
%Concave Pareto front is more difficult for certain MOEAs, particularly decomposition-based approaches such as MOEA/D, to find the newly introduced Pareto optimal solutions~\cite{Ishibuchi2008} (as the linear weight vectors do not adapt well to concave Pareto fronts).
% It is worth mentioning that this concave Pareto optimal solution \((11,3)\) has much more preimages from the decision space ($144$) than the other Pareto optimal solutions (see Figure~\ref{fig:OJZR}(b)). However, the ratio of the Pareto optimal solutions remains low, see Prop.~\ref{prop:OJZR} in the Appendix.

However, such a shift does not always create a new Pareto optimal solution, depending on the interplay between the parameters $n, k, \ell$ (where \(\ell\) is the block length). When \((n-k)\mod \ell = 0\), the concave region no longer appears.
As can be seen in Figure~\ref{fig:OJZRexample}(b), the solution next to the valley \((11,3)\) cannot become a new Pareto optimal solution, as it is dominated by another Pareto optimal solution \((11,6)\).
As such, the shape of the Pareto front remains a straight line, though there is a gap between the bottom-right Pareto optimal solution and the others optimal ones. 

\section{Discussion and Limitations}

%\miqing{It would be good if we could add a Limitations section, explaining the drawbacks of the functions like 1) bi-objective, 2) many-to-one mappings, 3) not many local optima spreading over the whole search space...}

While this work focuses on pseudo-Boolean problems, other optimisation types exist (e.g., permutation and integer problems). %Despite the extensive research on pseudo-Boolean problems in the area,
Theoreticians also investigate many other multi-objective problems, such as minimum spanning tree~\cite{neumann_expected_2007, do2023rigorous,neumann_runtime_2024}, vertex cover problem~\cite{kratsch_fixed-parameter_2013}, and integer-based optimisation~\cite{rudolph_runtime_2023,doerr_runtime_2024}. 
%\miqing{is this the right paper? you may also include the first paper published in SSCI23}. 

%In addition, we limit our scope on pseudo-Boolean functions, though there exist studies on other types of multi-objective functions in the area such as permutation problems\textcolor{blue}{~\cite{neumann_minimum_2006, neumann_expected_2007, neumann_runtime_2024}} and integer problems \cite{xx}. \miqing{need to add other types of functions studied in the theory community such as multi-objective MST}
%\miqing{may also need to state that we focus on bi-objective functions. Many-objective problems such as \cite{xxxxx} can be readily extended to. Or can it?}

%Theoretical studies in multi-objective optimisation consider a wide range of problem, such as minimum spanning tree~\cite{neumann_minimum_2006}, vertex cover problem~\cite{kratsch_fixed-parameter_2013}, integer-based optimisation~\cite{doerr_runtime_2024} and pseudo-Boolean problems. Among these, pseudo-Boolean problems are arguably the most foundational problem class. However, they may not fully capture the complexities of real-world combinatorial problems, where variables often exhibit structural dependencies (e.g., graph-based interactions) and constraints that affect the search process. Despite this, pseudo-Boolean problems still provide many valuable theoretical insights as they allow a more tractable mathematical analysis of algorithm behaviour.

In this study, we also limit our scope to bi-objective problems. However, many practical problems have three or more objectives. Adding more objectives may rapidly increase the complexity of optimisation problems \cite{allmendinger2022if}, and MOEAs may behave very differently \cite{Wagner2007,Ishibuchi2008a,Li2018a,li2018evolutionary}. Studies in the theory area mainly consider many-objectives problems through extending existing bi-objective problems by partitioning a bit-string into multiple sections, with each section corresponding to two objectives, such as in~\cite{huang_runtime_2021, zheng_runtime_2024, wietheger_near-tight_2024, wietheger_mathematical_2024, zheng_crowding_2024, dang_level-based_2024, opris_runtime_2024}. One feature of such problems is that the number of objectives to be optimised needs to be even. With the mix-and-match approach in this paper, one may naturally consider objectives with more than two objectives, for example, an interesting (but maybe hard) problem can be constructed by considering the three individual functions LeadingOnes, ZeroJump and ZeroRoyalRoad. 
%\miqing{why not LeadingOnes, ZeroJump, and ZeroRoyalRoad?}. 

%This study focuses on bi-objective pseudo-Boolean problems. However, practical problems might be of three or many objectives (i.e., problems with more than three objectives). Many objective settings introduce additional challenges for MOEAs, including performance deterioration due to selection pressure and difficulties in balancing trade-offs across multiple objectives. To investigate these challenges, recent theoretical studies extend existing bi-objective benchmarks to many-objective by bit-string partitioning, where different sections of a bit-string correspond to different objectives (which limits the number of objectives to even values), such as in~\cite{huang_runtime_2021, wietheger_near-tight_2024, zheng2024runtime, dang_level-based_2024, opris_runtime_2024}.
%In contrast, many-objective problems can also be constructed by assigning different base functions to each objective, such as combining LeadingOnes, TrailingZeroes and OneJump. This may better reflects heterogeneous landscapes in practical problems for studying the performance of MOEAs.

A notable feature of all the problems considered in this paper is the common presence of many-to-one mappings between the decision space and objective space, i.e., multiple bit-strings having identical objective values. This is due to the use of counting-based (e.g., OneMax, Jump) and block-based (e.g., RoyalRoad) functions.
%which aggregate information rather than assessing each bit independently \miqing{the last clause sounds having grammar issue}.
However, there exist many practical problems that only have one-to-one mappings between the two spaces, in particular many continuous problems like the ZDT~\cite{Zitzler2000} and DTLZ \cite{Deb2005a} suites. 
The feature of many-to-one mapping may disadvantage MOEAs that only consider the diversity of solutions in objective space, which unfortunately all mainstream MOEAs do. As presented recently, for such problems, considering the diversity of solutions in decision space can substantially speed up the search \cite{ren2024maintaining}.

%A notable feature of the problems in this paper is the presence of many-to-one mappings in the objective space, meaning that multiple bit-strings can have identical objective values, as the counting-based (e.g., OneMax, Jump) and block-based (e.g., RoyalRoad) functions rather aggregate information than assessing each bit independently.
%However, there exist many practical problems that are featured by one-to-one mappings, such as the multiobjective NK-Landscape~\cite{aguirre_insights_2004}, which may not be well represented by the problems with many-to-one mappings.
%This is because the many-to-one mappings may favour the algorithms that consider diversity in the decision space.
%Mainstream MOEAs like NSGA-II only consider diversity in the objective space, which makes them struggle to find diverse solutions in the decision space when many solutions share the same objective values.

Some problems constructed in this paper have local optimal solutions. This is in line with practical scenarios that many multi-objective combinatorial problems have local optimal solutions \cite{paquete_local_2007}. In general, there are two types of Pareto local optimal solutions, plateau-type and non-plateau-type. 
Plateau-type solutions have at least one identical neighbouring solution. In our problems, such plateau local optimal solutions, introduced by the RoyalRoad function, scatter dispersedly in the search space (e.g., in LeadingOnes-ZeroRoyalRoad). This is aligned with some practical optimisation scenarios like time tabling~\cite{sakal_genotype_2023}. In contrast, the non-plateau local optimal solutions, primarily caused by the Jump function, concentrate in specific regions of the search space, as seen in the problems LeadingOnes-ZeroJump and OneJump-ZeroRoyalRoad. However, in some practical problems, the non-plateau local optimal solutions can be more dispersed such as in multi-objective NK-Landscape \cite{aguirre_insights_2004,liefooghe2023pareto}, making MOEAs being trapped in different regions~\cite{Li2023}.

\section{Conclusion}
In this paper, we conducted a short review of pseudo-Boolean problems used in runtime analysis for evolutionary multi-objective optimisation. Based on characteristics that real-world optimisation problems have, we discussed commonly used benchmarks in the area, including their limitations and implications for practical use.  

We also presented several new pseudo-Boolean problems by mix-and-matching different single-objective functions, and shown that they have more characteristics commonly seen in real-world applications (see Table~\ref{tab:features}). This includes plateau-type local optima (OneJump-ZeroRoyalRoad), a non-linear Pareto front (OneJump-ZeroRoyalRoad), and hard-to-reach boundary solutions (LeadingOnes-ZeroJump) which may not be easily obtained by crossover. 
Of course, these functions may not fully reflect the complexity of practical problems, but we hope they can help strengthen the connection between theoretic and practical research in the area of evolutionary multi-objective optimisation.

%\miqing{we need to add acknowledgement}
\section*{Acknowledgment}
The authors would like to acknowledge the discussions with Prof Chao Qian, Dr Chao Bian, Mr Shengjie Ren, Dr Duc-Cuong Dang, and Prof Per Kristian Lehre that motivated this work.

\begingroup
\footnotesize
\setstretch{0.9}
\bibliography{reference}
\bibliographystyle{unsrt}

\endgroup

\appendix

\section*{Mix-and-Match Multi-Objective Boolean Functions with OneMax}

In the main paper, we only present the mixes of LeadingOnes, Jump and RoyalRoad as mixing OneMax with other functions may be trivial, as it simplifies the problem. 
Here, we present the mixes involving the OneMax function.  

\begin{figure*}[htbp]
\begin{center} 
% \scriptsize
    \includegraphics[scale=0.6]{figures/legend_horizontal.png}
    \begin{tabular}{@{}c@{}c@{}c}
		\includegraphics[scale=0.47]{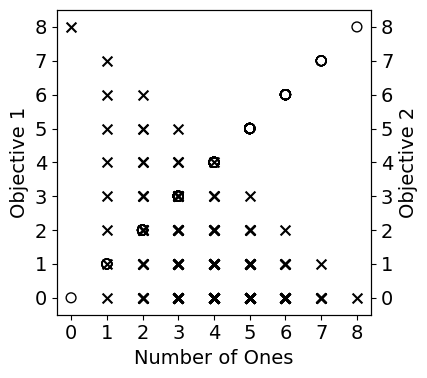}~&~
		\includegraphics[scale=0.47]{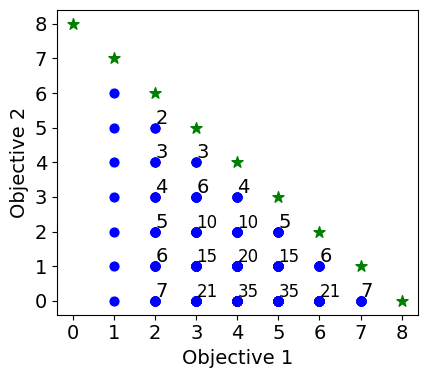}~&~
        \includegraphics[scale=0.47]{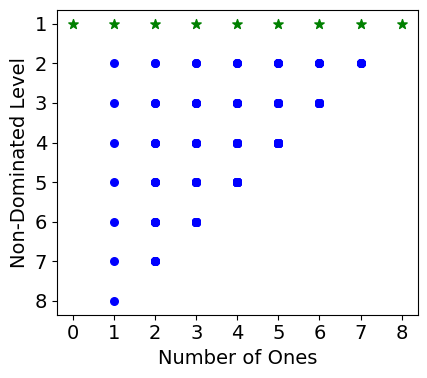}\\
		(a) Two individual objectives wrt \#\(1\) ~&~ 
		(b) Objective space ~&~
        (c) Nondominated level wrt \#\(1\) 
    \end{tabular}
    \end{center}
\caption{\footnotesize \textbf{OneMax-TrailingZeroes (OMTZ) (\(n=8\))} (a) Two individual objectives (OneMax and TrailingZeroes) of the OMTZ problem with respect to the number of ones. (b) Objective space, where the number associated with a solution means how many solutions in the decision space map to that solution. (c) The level of solutions with respect to the number of ones based on the Pareto non-dominated sorting. 
In (b) and (c), green, red and blue points indicate Pareto optimal solutions, local optimal solutions, and other solutions respectively.}
\label{fig:OMTZ}
\end{figure*}

% Information list for OMTZ
% 1. Introduce OMTZ
% 2. Shares characteristics with LOTZ such as a linear Pareto front, no disjoint optimal solutions and local optima, and a low ratio of Pareto optimal solutions. Different: assymetric objectives -> ↓
% 3. OM easier than TZ -> challenge for MOEAs as solutions become biased toward regions with good performance on the OneMax.
% 4. Use figure to show the above

\subsection{OneMax-TrailingZeroes (OMTZ)}
OMTZ combines the OneMax function with the TrailingZeroes function, thus a simultaneous maximisation of the number of ones in the bit-string and the number of consecutive zeroes (from right to left). 
Formally, OMTZ can be formulated as follows.
\begin{definition} (\textit{OneMax-TrailingZeroes}).
Let \( x \in \{0,1\}^n \) be a bit-string of length \( n \). The OMTZ problem is defined as:
\begin{align}
\begin{split}
f_1(x) = \sum_{i=1}^{n} x_i, \quad \text{and} \quad f_2(x) = \sum_{i=1}^{n} \prod_{j=i}^{n} (1 - x_j).
\end{split}
\end{align}
\end{definition}

%\zimin{I assume reader is getting familiar with OneMax and TrailingZeroes at this point. For conciseness, I skip "OMTZ is to simultaneously maximise the number of ones and the number of consecutive zeroes from right to the left."}

OMTZ shares some characteristics with LeadingOnes-TrailingZeroes (LOTZ), 
%\miqing{I noticed that LOTZ and many others throughout the paper appears first in a subsection without their full name; it would be better to fix this issue}, 
such as a linear Pareto front, no disjoint optimal solutions, no local optima, and a low ratio of Pareto optimal solutions.
The primary difference between them is that OMTZ is non-symmetric with respect to its objectives, which is commonly seen in real-world problems.

The OneMax objective in OMTZ is easier to improve on than the TrailingZeroes objective, since improving on OneMax involves flipping any bit from zero to one, whereas improving on TrailingZeroes involves flipping a specific one bit to zero.
This difference creates an objective imbalance, a common feature in many test suites (e.g., ZDT \cite{Zitzler2000}).
Figure~\ref{fig:OMTZ} illustrates the 8-bit OMTZ.
As seen in Figure~\ref{fig:OMTZ}(b), more solutions concentrate in a region with fairly good value on the objective OneMax but poor value on the objective TrailingZeroes.

\begin{figure*}[htbp]
\begin{center} 
% \scriptsize
    \includegraphics[scale=0.6]{figures/legend_horizontal.png}
    \begin{tabular}{@{}c@{}c@{}c}
		\includegraphics[scale=0.47]{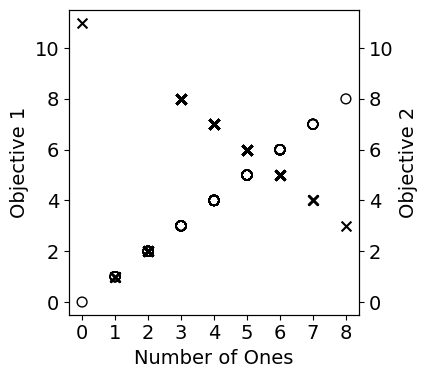}~&~
		\includegraphics[scale=0.47]{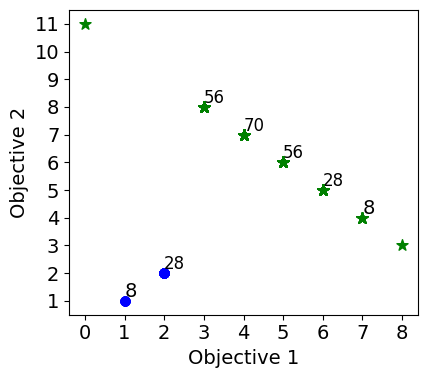}~&~
        \includegraphics[scale=0.47]{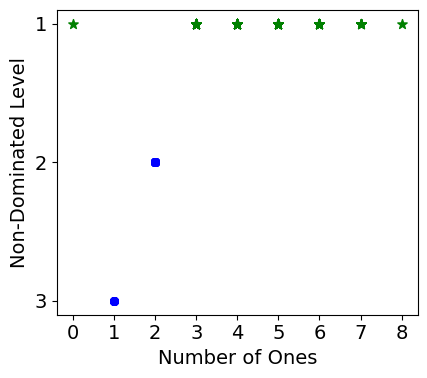}\\
		(a) Two individual objectives wrt \#\(1\) ~&~ 
		(b) Objective space ~&~
        (c) Nondominated level wrt \#\(1\) 
    \end{tabular}
    \end{center}
\caption{\footnotesize \textbf{OneMax-ZeroJump (OMZJ)} (\(n=8\), jump parameter \(k=3\)) (a) Two individual objectives (OneMax and ZeroJump) of the OMZJ problem with respect to the number of ones. (b) Objective space, where the number associated with a solution means how many solutions in the decision space map to that solution. (c) The level of solutions with respect to the number of ones based on the Pareto non-dominated sorting. 
In (b) and (c), green, red and blue points indicate Pareto optimal solutions, local optimal solutions, and other solutions respectively.}
\label{fig:OMZJ}
\end{figure*}

% 1. Introduce OMZJ
% 2. Similar to OJZJ
% 3. but ratio is not low  -> only one valley in OMZJ

\subsection{OneMax-ZeroJump (OMZJ)}

OMZJ combines the OneMax function with the ZeroJump function, aiming to simultaneously maximise the number of ones in the bit-string while crossing a valley in the search space determined by the number of zeroes.
Like OneJump-ZeroJump (OJZJ), the size of this valley is controlled by the \emph{jump parameter} \(k\), with larger values of \(k\) making it more difficult to reach the boundary solution for the ZeroJump objective (i.e., \((0^n)\)).
Formally, OMZJ can be formulated as follows.

\begin{definition} (\textit{OneMax-ZeroJump}).
Let \( x \in \{0,1\}^n \) be a bit-string of length \( n \) and let \(k\) be a fixed jump parameter with \(1<k<\frac{n}{2}\). The OneMax-ZeroJump problem is defined as:
\begin{align}
\begin{split}
f_1(x) &= \sum_{i=1}^{n} x_i, \\
f_2(x) &= 
\begin{cases}
k + |x|_0, & \text{if }  |x|_0 \leq n-k\,or\,x=0^n \\
n - |x|_0 , & \text{otherwise}.
\end{cases}
\end{split}
\end{align}
\end{definition}

OMZJ shares some characteristics with OJZJ, such as having disjoint optimal solutions.
However, unlike OZJZ, OMZJ does not exhibit a low ratio of Pareto optimal solutions.
This is because OMZJ has only one valley (where dominated solutions are located) from the ZeroJump objective (in contrast to the two valleys in OZJZ), thus at least half of the solutions are Pareto optimal, regardless of the value of \(k\).
Figure~\ref{fig:OMZJ}(b) plots the objective space of the 8-bit OMZJ problem, where it is clear that most of the solutions are Pareto optimal ones.

\begin{figure*}[htbp]
\begin{center} 
% \scriptsize
    \includegraphics[scale=0.6]{figures/legend_horizontal.png}
    \begin{tabular}{@{}c@{}c@{}c}
		\includegraphics[scale=0.47]{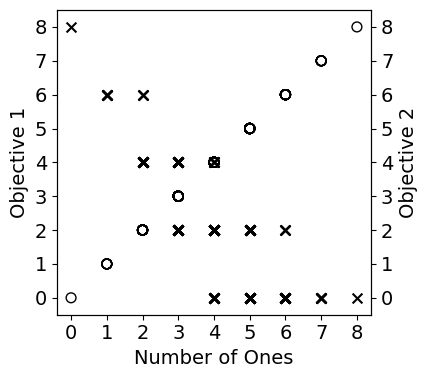}~&~
		\includegraphics[scale=0.47]{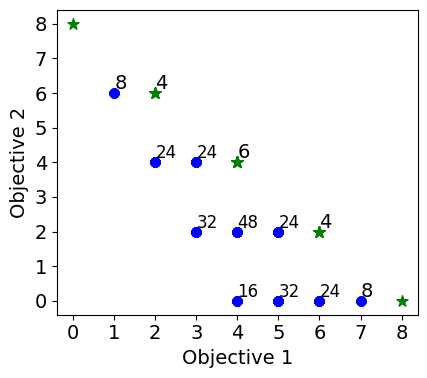}~&~
        \includegraphics[scale=0.47]{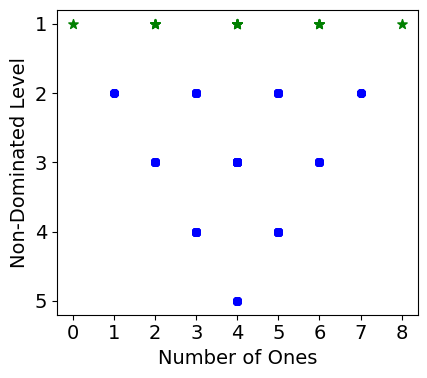}\\
		(a) Two individual objectives wrt \#\(1\) ~&~ 
		(b) Objective space ~&~
        (c) Nondominated level wrt \#\(1\) 
    \end{tabular}
    \end{center}
\caption{\footnotesize \textbf{OneMax-ZeroRoyalRoad (OMZR)} (\(n=8\), \(b=4\) where \(b\) denotes the number of blocks) (a) Two individual objectives (OneMax and ZeroRoyalRoad) of the OMZR problem with respect to the number of ones. (b) Objective space, where the number associated with a solution means how many solutions in the decision space map to that solution. (c) The level of solutions with respect to the number of ones based on the Pareto non-dominated sorting. 
In (b) and (c), green, red and blue points indicate Pareto optimal solutions, local optimal solutions and other solutions, respectively.}
\label{fig:OMZR}
\end{figure*}

% 1. Introduce OneMax ZeroRoralRoad
% 2. share similar characteristics with ORZR, like a low ratio of Pareto optimal solution and having disjoint optimal solutions as they are at least \(b\) bits from each other.
% 3. except for local optima, because any non-optimal solution can improve the OneMax objective by flipping a zero to one
% 4. Use figure to show the above

\subsection{OneMax-ZeroRoyalRoad (OMZR)}
OMZR combines the OneMax function with the ZeroRoyalRoad function, and it simultaneously maximises the number of ones in a bit-string and the number of blocks with all zero bits.
Formally, OMZR can be formulated as follows.
\begin{definition} (\textit{OneMax-ZeroRoyalRoad}).
Let \( x \in \{0,1\}^n \) be a bit-string of length \( n \), partitioned into \( b \) disjoint blocks \( S_1, S_2, \dots, S_b \), each of length \( \ell \) (where \( n = b \times \ell, b>1 \)). The problem is defined as:
\begin{align}
f_1(x) = \sum_{i=1}^{n} x_i \quad \text{and} \quad
f_2(x) = \sum_{j=1}^{b} \bigl(\ell\prod_{i \in S_j} (1 - x_i)\bigr).
\end{align}
\end{definition}

%As can be seen in Table~\ref{tab:features}, OMZR shares several characteristics with ORZR.
Like OneRoyalRoad-ZeroRoyalRoad (ORZR), the block design of the ZeroRoyalRoad objective in the OMZR problem leads to disjoint optimal solutions and a low ratio of Pareto optimal solutions.
However, unlike ORZR, OMZR does not exhibit Pareto local optima, as any non-optimal solution can improve on its OneMax objective by flipping a zero to one.
Figure~\ref{fig:OMZR} illustrates the 8-bit OMZR problem.
As can be seen in Figure~\ref{fig:OMZR}(b), all dominated solutions can move rightward by improving on their OneMax objective (without affecting the ZeroRoyalRoad objective) until reaching a Pareto optimal solution.
This may make OMZR easier to be dealt with than the problem consisting of two RoyalRoad functions (i.e., ORZR). 

\begin{table*}[t]
  \centering
  \caption{Summary of global and local Pareto properties for the new benchmarks. \\ \(x\) denotes a solution and each solution is a bit-string of length \(n\); \(k\) denotes the jump parameter of the Jump function component; \(b\) and \(\ell\) denotes the number of blocks and the corresponding block length of the RoyalRoad function component.}
  \label{tab:benchmark-properties}
  \small
  \begin{tabular}{l@{ }p{5.5cm}p{5cm}p{4.2cm}}
    \textbf{Benchmark} & \textbf{Pareto set \(\mathcal{P}\)} & \textbf{Pareto front} & \textbf{Local optima \(\mathcal{L}\)} \\ \hline
    ORZR & 
      \(\displaystyle\{\,x \mid |\{S=S_i=1^\ell\}|+|\{S=0^\ell\}|=b\}\) 
      & \(\{(i\ell,(b-i)\ell)\mid i=0,\dots,b\}\)
      & 
      \(\displaystyle\{\,x \mid \forall S:|S|_1>1\ \wedge\ |S|_0>1,\;|\{S=1^\ell\}|+|\{S=0^\ell\}|<b\}\) (\(\ell>3\)) \\[1ex] \hline 
    LOZJ & 
      \(\{0^n\}\cup\{1^i0^{n-i}\mid i=k,\dots,n\}\) 
      & \(\{(0,n+k)\}\cup\{(i,n+k-i)\mid i=k,\dots,n\}\)
      & 
      \(\{\,1^is\mid i<k,\;|s|_0=n-k\}\) \\[1ex] \hline 
    LOZR & 
      \(\{\,1^{i\ell}0^{n-i\ell}\mid i=0,\dots,b\}\) 
      & \(\{(i\ell,(b-i)\ell)\mid i=0,\dots,b\}\)
      & 
      \(\bigl\{1^is \mid i\in\{0,\ell,\dots,(b-1)\ell\},\;S_{\lceil i/\ell\rceil+1}=0^\ell,\;\forall j>\lceil i/\ell\rceil,\;|S_j|_1>1\bigr\}\) \\[1ex] \hline 
    OJZR  & 
      \(\{1^n\}\cup\{\,x \mid |x|_1\le n-k,\;|\{1^\ell\}|+|\{0^\ell\}|=b\}\) & 
      \(\{(n+k,0)\}\cup\{(i\ell+k,n-i\ell)\mid i=0,\dots,\lfloor k/\ell\rfloor\}\) & 
      \(\{\,x\mid|x|_1=n-k,\;|\{0^\ell\}|<\lfloor k/\ell\rfloor\}\) \\[1ex]
      & \multicolumn{3}{l}{\(\!\!\)(case \((n-k)\bmod\ell=0\))} \\[1ex]
    OJZR & 
      \(\{1^n\}\cup\{x\mid|x|_1<n-k,\;|\{1^\ell\}|+|\{0^\ell\}|=b\}\,\cup\,\{x\mid|x|_1=n-k,\;|\{0^\ell\}|=\lfloor k/\ell\rfloor\}\) 
      & \(\{(n+k,0)\}\cup\{(n-k,\lfloor k/\ell\rfloor\ell)\}\cup\{(i\ell+k,n-i\ell)\mid i=0,\dots,\lfloor k/\ell\rfloor\}\)
      & same as above \\[1ex]
      & \multicolumn{3}{l}{\(\!\!\)(case \((n-k)\bmod\ell>0\))} \\[1ex] \hline 
    OMTZ &
      \(\{0^n\}\cup\{1^i0^{n-i}\mid i=0,\dots,n\}\) 
      & \(\{(i,n-i)\mid i\in\{0,1,\dots,n\}\}\)
      & none \\[1ex] \hline 
    OMZJ &
      \(\{0^n\}\cup\{x\mid|x|_0\le n-k\}\) 
      & \(\{(0, n+k))\}\cup\{(i,n+k-i)\mid i=k,\dots,n\}\)
      & none \\[1ex] \hline 
    OMZR &
      \(\{\,x\mid|\{1^\ell\}|+|\{0^\ell\}|=b\}\) 
      & \(\{(i\ell,(b-i)\ell)\mid i=0,\dots,b\}\)
      & none \\ \hline 
  \end{tabular}
\end{table*}

\section*{Formal Properties of the New Benchmarks}
In this section, we present the formal properties, such as the Pareto front and the Pareto set under general parameters, for each newly introduced benchmark.
The properties are summarised in Table~\ref{tab:benchmark-properties}. The proofs are given below.

We start from the benchmarks presented in the main paper, namely OneRoyalRoad-ZeroRoyalRoad, LeadingOnes-ZeroJump, LeadingOnes-ZeroRoyalRoad and OneJump-ZeroRoyalRoad.

\subsection{OneRoyalRoad-ZeroRoyalRoad (ORZR)}
\begin{proposition}\label{prop:ORZR-ps}
Let \(n,b,\ell\in\mathbb{N}, n=b\cdot\ell,b>1\), given a bit-string of length \(n\), partitioned into \( b \) disjoint blocks \( x = S_1  S_2  \dots  S_b \), each with the same length \( \ell \). For the benchmark ORZR, the Pareto optimal set is the following.
\begin{align}
\mathcal{P}=\{ x \,\, \bigl| \,\, |\{S \mid S=1^\ell\}|+|\{S \mid S=0^\ell\}|=b \}
\end{align}
\end{proposition}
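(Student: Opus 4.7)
The plan is to exploit the identity that each objective in ORZR is determined entirely by how many blocks are uniformly set. For a block $S_j$, it contributes exactly $\ell$ to $f_1$ if it equals $1^\ell$, exactly $\ell$ to $f_2$ if it equals $0^\ell$, and $0$ to both otherwise. So for any $x$ we have the bound $f_1(x)+f_2(x)\le b\cdot\ell=n$, and equality holds if and only if every block is uniform (all-ones or all-zeros). This observation will drive both inclusions.

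For the containment $\supseteq$, I take any $x$ in the claimed set so that $f_1(x)+f_2(x)=n$. If some $y$ strictly dominated $x$, we would have $f_1(y)+f_2(y)>n$, contradicting the universal upper bound. Hence $x$ is Pareto optimal. For the containment $\subseteq$, I argue by contrapositive: suppose $x$ has at least one mixed block $S_j$ (neither $0^\ell$ nor $1^\ell$). Construct $y$ from $x$ by replacing $S_j$ with $1^\ell$ while leaving all other blocks untouched. Because $S_j$ in $x$ is not $0^\ell$, this replacement neither removes an all-zero block nor creates one, so $f_2(y)=f_2(x)$. Because $S_j$ in $x$ is not already $1^\ell$, the replacement adds a new all-one block, giving $f_1(y)=f_1(x)+\ell>f_1(x)$. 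Therefore $y\succ x$, so $x$ is not Pareto optimal.

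Combining the two inclusions yields the stated characterization. Since the Pareto front entries are fully determined by the number of all-one blocks, this also immediately gives the front $\{(i\ell,(b-i)\ell)\mid i=0,\dots,b\}$ listed in Table~\ref{tab:benchmark-properties}.

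The hard part will be essentially nothing technical: the proof is a one-line additive bound plus a single-block modification. The only care needed is in the constructive step, to ensure that flipping the mixed block to $1^\ell$ (rather than $0^\ell$) is indeed safe, i.e., the strict improvement occurs on $f_1$ while $f_2$ remains invariant. A symmetric alternative using $0^\ell$ works equally well, so the choice is arbitrary.
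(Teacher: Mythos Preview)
Your proposal is correct and follows essentially the same approach as the paper: both exploit the block-additive structure (each block contributes $\ell$ to exactly one objective or nothing), use the resulting bound $f_1+f_2\le n$ to certify non-domination of points in $\mathcal{P}$, and then complete a mixed block to exhibit a dominating solution for $x\notin\mathcal{P}$. Your write-up is slightly more explicit (you specify replacing by $1^\ell$ and use the sum-bound contradiction directly), but the underlying idea is identical.
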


\begin{proof}
For each \(S\) block, it either contributes to one objective only (by being \(1^\ell\) or \(0^\ell\)) by a constant quantity \(\ell\) or contributes to nothing. Denoting \(1^\ell\) or \(0^\ell\) as a completed block. This implies that all blocks must be completed to become a Pareto optimal solutions, as otherwise a solution is dominated by another solution that completes its uncompleted block.

For those solutions in \(\mathcal{P}\), all blocks are completed and we have \(f_1+f_2=n\). Therefore, solutions in \(\mathcal{P}\) are non-dominated to each other, as they cannot improve one objective without reducing another.

For \(x\notin\mathcal{P}\), they have at least one uncompleted block. Completing that block yields a solution that dominates \(x\). Therefore, \(x\notin\mathcal{P}\) are dominated.
\end{proof}

The corresponding Pareto front is thereby \(\{\bigl(i\ell, (b-i)\ell\bigr) \mid i\in\{0,1,\dots,b\}\}\).

\begin{proposition}\label{prop:ORZR-plo}
With the same notation above, given the block length \(\ell>3\), the local Pareto optimal solutions of ORZR are the following.
\begin{align}
\mathcal{L} = \big\{ x \,\big|\,\ 
  &\forall S : |S|_1 > 1 \wedge |S|_0 > 1, \\
  &|\{S  \mid s = 1^\ell\}| 
   + |\{S  \mid s = 0^\ell\}| < b
\big\}
\end{align}
\end{proposition}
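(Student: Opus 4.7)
The plan is to prove the characterisation of $\mathcal{L}$ by double inclusion, exploiting that each ORZR objective decomposes as a sum of per-block contributions, so that a single-bit flip only affects the term associated with the block containing that bit.

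For the forward direction ($\mathcal{L}$ is contained in the set of non-global Pareto local optima), I would take $x$ satisfying both conditions and first observe that the condition $\forall S : |S|_1 > 1 \wedge |S|_0 > 1$ (which is feasible precisely because $\ell > 3$) forces no block to be $1^\ell$ or $0^\ell$, so $f_1(x) = f_2(x) = 0$. For any neighbour $x' \in \mathcal{N}(x)$ the flipped bit lies in some block $S_j$ with $|S_j|_1 \geq 2$ and $|S_j|_0 \geq 2$, hence after the flip $S_j'$ still contains at least one $1$ and one $0$, so it remains neither $1^\ell$ nor $0^\ell$; all other blocks are unchanged, so $f(x') = (0,0) = f(x)$ and no neighbour strictly dominates $x$. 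The second condition, via Prop.~\ref{prop:ORZR-ps}, then rules out global Pareto optimality.

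For the reverse direction, I would argue by contrapositive: take a non-global $x$ violating the first condition and construct a dominating neighbour. The main case is some block $S_j$ with $|S_j|_1 = 1$ (or, symmetrically, $|S_j|_0 = 1$); flipping the singleton bit completes $S_j$ into $0^\ell$ (or $1^\ell$), raising one objective by $\ell$ without lowering the other since $S_j$ was not already completed on the opposite side, producing a strict dominator and contradicting Pareto local optimality. The main obstacle is the residual subcase in which every offending block $S_j$ already satisfies $|S_j|_1 = 0$ or $|S_j|_0 = 0$, i.e., is completed; here flipping inside $S_j$ destroys a contribution instead of creating one, so the dominating neighbour must be built from a different block, and one needs to leverage the non-global hypothesis $|\{S : S = 1^\ell\}| + |\{S : S = 0^\ell\}| < b$ to guarantee the existence of a non-completed block on which the main argument can be re-run. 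This is also the step where $\ell > 3$ plays its role, since for smaller $\ell$ the defining condition $|S|_1 > 1 \wedge |S|_0 > 1$ is vacuous and the characterisation collapses.
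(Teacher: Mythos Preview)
Your forward direction is fine, but the reverse direction has a genuine gap precisely at the ``residual subcase'' you flag. Your plan there is to invoke the non-global hypothesis to obtain some uncompleted block and then ``re-run the main argument'' on it. That does not work: the uncompleted block you obtain need not have $|S|_1=1$ or $|S|_0=1$; it can perfectly well satisfy $|S|_1\ge 2$ and $|S|_0\ge 2$, in which case no single flip completes it and you cannot produce a dominating neighbour. Concretely, take $\ell=4$, $b=2$, and $x=1111\,0011$. The first block is $1^4$ (so the literal condition $\forall S:|S|_1>1\wedge|S|_0>1$ fails because $|S_1|_0=0$), and the second block has two ones and two zeros. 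Then $f(x)=(4,0)$; flipping any bit in block~1 drops $f_1$ to $0$, and flipping any bit in block~2 leaves both objectives at $(4,0)$. So $x$ is a non-global Pareto local optimum, yet under your literal reading $x\notin\mathcal{L}$, and the reverse inclusion fails.

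What is actually going on is an ambiguity in the statement that you resolved differently from the paper. A tell-tale sign is that, under your reading, the first condition already forces every block to be uncompleted, so the second condition $|\{S=1^\ell\}|+|\{S=0^\ell\}|<b$ becomes redundant; the paper would hardly state it if it were. The paper's proof (and the surrounding discussion) makes clear that the intended scope of the universal quantifier is the \emph{uncompleted} blocks only: every block is either $1^\ell$, or $0^\ell$, or has at least two ones and at least two zeros, and at least one block is uncompleted. With that reading your forward direction must additionally handle flips inside completed blocks (these only decrease one objective), and in the reverse direction the negation directly yields an uncompleted block with $|S|_1=1$ or $|S|_0=1$, on which your ``main case'' argument applies without any residual subcase.
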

\begin{proof}
We first show that the solutions \(x\in\mathcal{L}\) are non-dominated by its 1-bit neighbours. 
Since solutions in \(\mathcal{L}\) have the property \(\forall S : |S|_1 > 1 \wedge |S|_0 > 1\), all uncompleted blocks cannot be completed (become \(1^\ell\) or \(0^\ell\)) via one bit-flip, implying flipping a bit in an uncompleted block has no effect. 
On the other hand, flipping any bit in completed blocks will reduce one objective without improving another.
Thus, \(x\in\mathcal{L}\) cannot be improved via a single bit flip, meaning that it is not dominated by any of its 1-bit neighbours.

Then, we show that the solutions \(x\notin\mathcal{P}\cup\mathcal{L}\) are dominated by one of its neighbours.
Since \(\forall S  : |S|_1 > 1 \wedge |S|_0 > 1\) is false, we have \(\exists S:|S|_0\leq1 \vee |S|_1\leq1\). Since \(x\notin\mathcal{P}\), there exists at least one uncompleted block (i.e., \(|S|_1\neq 0\wedge|S|_0\neq0\)), thus we have \(\exists S:|S|_0=1 \vee |S|_1=1\). For such block, flipping that one \(1\) left (or one \(0\) left) completes this block and yields improvement on one objective, which dominates \(x\).
\end{proof}

\begin{remark}
The above proof of the Pareto local optima relies on the condition \(\ell>3\). This is because, the condition \(\forall S : |S|_1 > 1 \wedge |S|_0 > 1\) implies that the minimum \(|S|_1\) and \(|S|_0\) are both 2, thus \(\ell=|S|=|S|_1+|S|_0 \geq4\). Therefore, for any \(\ell\leq3\), the Pareto local optimal set \(\mathcal{L}\) is empty.
\end{remark}

\subsection{LeadingOnes-ZeroJump (LOZJ)}
\begin{proposition}
Let \(n\in\mathbb{N}, 1<k<\frac{n}{2}\), given a bit-string of length \(n\), for the benchmark LOZJ, the Pareto optimal set is the following.
\begin{align}
\mathcal{P}=\{\,0^n\,\}\,\cup\,\{ \,1^i0^{n-i} \mid i\in\{k,k+1,\dots,n\} \}
\end{align}
\end{proposition}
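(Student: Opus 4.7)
The plan is to establish the Pareto set in two standard steps: first verify that the listed candidates are mutually non-dominated, then show every $x \notin \mathcal{P}$ is strictly dominated by one of them.

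For the first step, I would evaluate $(f_1,f_2)$ directly from the definition. For $x=0^n$ the explicit clause $x=0^n$ of ZeroJump gives $(0,n+k)$. For $x=1^i 0^{n-i}$ with $i\in\{k,\dots,n\}$ we have $|x|_0=n-i\le n-k$, so the non-valley branch of ZeroJump yields $f_2=k+(n-i)$ while $f_1=i$. Thus every candidate lies on the line $f_1+f_2=n+k$ with pairwise distinct $f_1$-coordinates $\{0,k,k+1,\dots,n\}$, which rules out mutual dominance.

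For the second step, I would fix an arbitrary $x\notin\mathcal{P}$, set $\ell:=\mathrm{LeadingOnes}(x)$ and $z:=|x|_0$, and split on whether $x$ lies in the ZeroJump valley (i.e.\ $n-k<z<n$). In the non-valley case, $x\neq 0^n$ forces $z\le n-k$ and $f_2(x)=k+z$. If $\ell\ge k$, I would pick $y=1^\ell 0^{n-\ell}\in\mathcal{P}$: it matches $f_1$ and satisfies $f_2(y)-f_2(x)=(n-\ell)-z\ge 0$, with strict inequality whenever $x\neq y$, because then some non-leading position of $x$ carries a $1$ and hence $z<n-\ell$. If $\ell<k$, the candidate $y=1^k 0^{n-k}\in\mathcal{P}$ strictly improves $f_1$ (as $k>\ell$) while $f_2(y)=n\ge k+z=f_2(x)$. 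In the valley case, $z>n-k$ yields $f_2(x)=n-z<k$, and since the total number of ones is $n-z$ we also have $\ell\le n-z<k$; then $y=1^n\in\mathcal{P}$ strictly beats $x$ on both coordinates.

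The main, though mild, obstacle is the bookkeeping around the boundary $\ell=k$ and the exclusion of the two degenerate situations in which the proposed dominator would coincide with $x$: namely $x=0^n$ (already in $\mathcal{P}$) and $x=1^\ell 0^{n-\ell}$ with $\ell\ge k$ (handled by the strictness clause in the first sub-case, using that $x\neq y$ implies a non-leading $1$ somewhere). Everything else reduces to routine case-checking against the piecewise definition of ZeroJump.
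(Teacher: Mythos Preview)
Your proposal is correct and follows essentially the same two-step strategy as the paper: first observing that all candidates lie on the line $f_1+f_2=n+k$ with distinct $f_1$-coordinates, then dominating every $x\notin\mathcal{P}$ by case analysis on the number of leading ones. The only cosmetic difference is that you split first on valley vs.\ non-valley and use $1^n$ as the dominator in the valley case, whereas the paper splits directly on $i'<k$ versus $i'\ge k$ and uses $1^k0^{n-k}$ to cover both the valley and the $\ell<k$ non-valley sub-case at once; your version is slightly more explicit about the valley but otherwise equivalent.
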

\begin{proof}
Let \(i=f_1(x)\) be the number of leading ones, and \(t=|x|_0\). Since the first \(i\) bits of \(x\) are 1s and the last \(t\) bits are 0. 
\[i+t\leq n,\]
First, we show that all solutions in \(\mathcal{P}\) are non-dominated. \(0^n\) is non-dominated, as it is the optimal solution of the ZeroJump. As for \(x_i=1^i0^{n-i}\) with \(i\geq k\), we have \(f_1(x_i)=i\) and \(f_2(x_i)=k+n-i\). Since \(f_1\) has \(i\) and \(f_2\) has \(-i\), it is obvious that no point dominates another in \(\mathcal{P}\).

Then, we show that for any \(x'\notin\mathcal{P}\), it is dominated by some \(x\in\mathcal{P}\). Denote its leading ones as \(i'\) and \(t'=|x'|_0\), we consider two cases. 

\textbf{Case 1: \(i'<k\)}. We can compare it with \(x=1^k0^{n-k}\). Then, we have\(f_1(x')=i'<k=f_1(x)\). As for \(f_2\), by \(0^{n-k}\), \(f_2(x)\) already has maximum ZeroJump value below \(0^n\), thus \(f_2(x')\leq f_2(x)\). 

\textbf{Case 2: \(i'\in\{k,\dots,n\}\)}. 
Since \(x'\) is not in \(\mathcal{P}\), it cannot have the form \(1^{i'}0^{n-i'}\), then \(i'+t'<n\). We compare it with \(x=1^{i'} 0^{n-i'}\) in \(\mathcal{P}\). Then, we have \(f_1(x')=i'=f_1(x)\) and \(f_2(x')=t'<n-i'=f_2(x)\).

In both cases, \(x\) strictly dominates \(x'\), hence no \(x' \notin\mathcal{P}\) is Pareto-optimal.
\end{proof}

The corresponding Pareto front of LOZJ is \(\{(0, n+k)\}\cup\{(k,n),(k+1,n-1),\dots,(n,k)\}\).

Notably, besides the global optima, there are Pareto local optimal solutions in LOZJ. 

\begin{proposition}
With the same notation above, the local Pareto optimal solutions of LOZJ are the following.
\begin{align}
\mathcal{L}=\{\,1^is\mid i<k, |s|_0=n-k\}
\end{align}
where \(s\) is a sub bit-string of length \(n-i\).
\end{proposition}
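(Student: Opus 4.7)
I will prove the claim by two inclusions: every $x = 1^i s$ with $i < k$ and $|s|_0 = n-k$ is a Pareto local optimum that lies outside $\mathcal{P}$; and conversely, every $x \notin \mathcal{P} \cup \mathcal{L}$ is strictly dominated by some Hamming-$1$ neighbour. A minor subtlety is that the expression for $\mathcal{L}$ as written literally collects every $x$ with $|x|_0 = n - k$ (take $i = 0$, $s = x$), so I will silently exclude the single string $1^k 0^{n-k} \in \mathcal{P}$, consistent with the convention stated below Definition~\ref{def:PLO}.

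\textbf{Direction (i): members of $\mathcal{L}$ are local optima.} Let $x = 1^i s$ be the exact leading-ones decomposition (so $s$ starts with $0$). Then $f_1(x) = i < k$, and because $|x|_0 = n-k$ satisfies the non-valley branch of ZeroJump, $f_2(x) = k + (n-k) = n$. I then sweep the four neighbour types. Flipping one of the $i$ leading ones sends $|x|_0$ to $n-k+1$, which triggers the valley branch and collapses $f_2$ down to $k-1$, while also reducing $f_1$. Flipping the $0$ at position $i+1$ raises $f_1$ but reduces $|x|_0$ to $n-k-1$, so $f_2$ falls by one to $n-1$. Flipping any later $0$ keeps $f_1 = i$ and again lowers $f_2$ by one. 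Flipping a later $1$ keeps $f_1 = i$ and pushes $|x|_0$ to $n-k+1$, collapsing $f_2$ into the valley. In none of the four cases is the new objective vector componentwise at least $(i, n)$, so no neighbour dominates $x$.

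\textbf{Direction (ii): every other $x$ is dominated by a neighbour.} Write $\ell = f_1(x)$ and $t = |x|_0$ and case-split on the pair $(\ell, t)$. If $\ell \geq k$ and $x \notin \mathcal{P}$, then $x \neq 1^\ell 0^{n-\ell}$, so a $1$ lives past position $\ell + 1$; flipping it to $0$ keeps $f_1 = \ell$, sends $t \mapsto t+1 \leq n-\ell \leq n-k$ (still outside the valley), and strictly raises $f_2$. If $\ell < k$ and $t < n-k$, a counting argument ($|x|_1 = n - t > k > \ell$) supplies a $1$ past position $\ell$, and the same flip works. If $\ell < k$ and $t > n-k$ (so $x$ is inside the valley and $x \neq 0^n$ since $0^n \in \mathcal{P}$), flip the $0$ at position $\ell + 1$; this strictly increases $f_1$ while either crossing out of the valley, which jumps $f_2$ from $n - t \leq k-1$ up to $k + (n-k) = n$, or remaining inside the valley, which raises $f_2$ by $1$. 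The only leftover case, $\ell < k$ and $t = n-k$, is exactly $\mathcal{L}$.

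\textbf{Main obstacle.} The logic is mostly routine case work; the delicate point is the valley boundary $|x|_0 = n-k$, where the piecewise ZeroJump sits on its edge. The proof hinges on the fact that at this edge $f_2$ attains its non-valley maximum $n$, so any flip that adds a zero plunges $f_2$ into the valley and any flip that removes a zero strictly reduces $f_2$ by one. The hypothesis $1 < k < n/2$ is used precisely to ensure that $k - 1 < n$ (so the plunge is a genuine deterioration) and that the value $|x|_0 = n-k$ is achievable with fewer than $k$ leading ones; if $k = 1$ the valley collapses and the whole local-optima family disappears.
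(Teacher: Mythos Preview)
Your proof is correct and follows essentially the same approach as the paper's: both verify direction~(i) by exhausting the four neighbour types at the valley edge $|x|_0=n-k$, and both handle direction~(ii) by the same case split on $\ell\ge k$ versus $\ell<k$ (and within the latter, $t<n-k$ versus $t>n-k$). Your write-up is in fact a little more careful than the paper's---you compute the $f_2$ values explicitly, you flag the definitional subtlety that $\mathcal{L}$ as written literally contains $1^k0^{n-k}\in\mathcal{P}$, and in the valley sub-case you pick the flip at position $\ell+1$ so that both coordinates strictly improve---but the underlying argument is identical.
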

\begin{proof}
For any solution \(x\), let \(i\) denotes the number of leading ones and \(t=|x|_0\).

Firstly, we show that solutions in \(\mathcal{L}\) are Pareto local optimal solutions. Flipping any \(1\) among the first \(i\) bits only reduces the LeadingOnes objective.
Flipping any \(1\) to \(0\) in \(s\) only reduces the ZeroJump objective, as \(s\) has more than \(n-k\) \(0\)s and drop into the valley.
Flipping any \(0\) to \(1\) in \(s\) may or may not increase the LeadingOnes objective, but it definitely reduces the ZeroJump objective as there are fewer zeroes.
For all cases, flipping one bit cannot generate any solution that dominates solution in \(\mathcal{L}\).

Secondly, we show that any \(x\notin\mathcal{P}\cup\mathcal{L}\) is dominated by at least one of its neighbour (i.e., can be improved via a single bit flip).
Similarly, we denote \(i\) as the number of leading ones and \(t=|x|_0\).
We consider two cases.

\textbf{Case 1: \(i<k\)}. Since it is not in \(\mathcal{P}\), we have \(t\neq n-k\). If \(t<n-k\), then there is at least one \(1\) among the non-leading bits because \(t<n-k<n-i\), so flipping any such \(1\) immediately improves the ZeroJump objective. On the other hand, if \(t>n-k\), then \(f_2(x)=n-t=i\). Here, flipping any \(0\) to \(1\) can improve the ZeroJump objective (and may improve the LeadingOnes objective if that \(0\) is at position \(i+1\)). For both cases, that improved neighbouring solution dominates the current solution.

\textbf{Case 2: \(i\geq k\)}. Since \(x\notin\mathcal{P}\cup\mathcal{L}\), we have \(i+t<n\). Here, there exists at least one \(1\) that is not in the first \(i+1\) \(1\)s (otherwise \(i\) will be larger). Flipping such \(1\) to \(0\) improve the ZeroJump objective without affecting the LeadingOnes objective, thus dominating the current solution \(x\).

In all cases, \(x\) has a one-bit neighbour that dominates it, therefore not being Pareto local optimal.
\end{proof}

\subsection{LeadingOnes-ZeroRoyalRoad (LOZR)}
\begin{proposition}
Let \(n\in\mathbb{N}, 1<k<\frac{n}{2}\), for the benchmark LOZR, partitioned into \( b \) disjoint blocks \( x = S_1  S_2  \dots  S_b \), each with the same length \( \ell \). For the benchmark LOZR, the Pareto optimal set is the following.
\begin{align}
\mathcal{P}=\{ \,1^i0^{n-i} \mid i\in\{0,\ell, 2\ell,\dots,b\ell\} \}
\end{align}
\end{proposition}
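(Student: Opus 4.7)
The plan is to prove the two standard containments: first, that every element of the claimed set \(\mathcal{P}\) is Pareto optimal, and second, that any \(x \notin \mathcal{P}\) is strictly dominated by some element of \(\mathcal{P}\); combined with transitivity of domination, these two facts together identify \(\mathcal{P}\) with the true Pareto set.

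For the forward direction, index the candidates as \(y_j = 1^{j\ell} 0^{(b-j)\ell}\) for \(j \in \{0, 1, \ldots, b\}\). A direct evaluation gives \(f_1(y_j) = j\ell\), since all leading ones sit inside \(j\) complete all-ones blocks, and \(f_2(y_j) = (b-j)\ell\), since the trailing \(b-j\) blocks are exactly all-zero. Because \(f_1(y_j)\) is strictly increasing and \(f_2(y_j)\) is strictly decreasing in \(j\), no two elements of \(\mathcal{P}\) dominate each other.

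For the reverse direction, the crucial observation is a bound linking the LeadingOnes value of an arbitrary \(x\) to the number of all-zero blocks it can contain. Let \(i = f_1(x)\) and let \(z(x)\) denote the number of all-zero blocks of \(x\). Each of the first \(\lceil i/\ell \rceil\) blocks must contain at least one \(1\), so none of them is all-zero; hence \(z(x) \leq b - \lceil i/\ell \rceil\) and \(f_2(x) \leq (b - \lceil i/\ell \rceil)\ell\). I would then split on whether \(i\) is a multiple of \(\ell\). If \(i = j\ell\), compare \(x\) with \(y_j\): the two agree on \(f_1\), and unless \(x = y_j\) some \(1\) must lie among the last \((b-j)\ell\) bits, forcing \(z(x) \leq b-j-1\) and strict improvement on \(f_2\). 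If instead \(i = j\ell + r\) with \(0 < r < \ell\), compare \(x\) with \(y_{j+1}\), which exists since \(r > 0\) forces \(i < n\) and hence \(j+1 \leq b\): here \(f_1(y_{j+1}) = (j+1)\ell > i = f_1(x)\), and the ceiling bound yields \(f_2(x) \leq (b-j-1)\ell = f_2(y_{j+1})\), so \(y_{j+1}\) weakly dominates on \(f_2\) and strictly on \(f_1\).

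The main delicate point is the ceiling bound \(z(x) \leq b - \lceil i/\ell \rceil\) in the non-multiple case: one must note that the block straddling position \(i\) contains at least one leading \(1\) yet by assumption is not entirely ones, so it is still not all-zero and does not contribute to \(z(x)\); this is what distinguishes LOZR from the more permissive bound one would get from counting only complete all-ones prefix blocks. The only other boundary check, that \(j+1 \leq b\) whenever \(y_{j+1}\) is invoked, is immediate from \(r > 0\) and \(i < n\). The remaining bookkeeping mirrors the routine case analysis already used for ORZR and LOZJ earlier in the Appendix.
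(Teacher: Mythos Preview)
Your proof is correct and follows the same overall decomposition as the paper: mutual non-domination among the $y_j$, then a case split on whether $i$ is a multiple of $\ell$. The one place you diverge is the non-multiple case: the paper argues that flipping the single $0$ at position $i+1$ produces a dominating neighbour (since that bit lies in a block already containing a leading $1$, so no $0^\ell$ block is destroyed), whereas you compare directly against $y_{j+1}\in\mathcal{P}$ via the explicit ceiling bound $z(x)\le b-\lceil i/\ell\rceil$. Your route is slightly cleaner in that it exhibits a dominator in $\mathcal{P}$ itself, avoiding any appeal to iterated improvement or finiteness; the paper's one-bit-flip argument is more local and reuses the same mechanics as its Pareto-local-optima proof. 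Both rest on the identical structural observation that the block containing position $i+1$ already holds a leading $1$ and hence cannot contribute to $f_2$.
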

\begin{proof}
Firstly, we show that solutions in \(\mathcal{P}\) are non-dominated. Let \(i=f_1(x)\) be the number of leading ones. 
Since \(n=b\ell\), the format \( \,1^i0^{n-i}\) where \(i\in\{0, \ell, 2\ell,\dots,b\ell\} \) indicates that the trailing zeroes resembles \(\frac{n-i}{\ell}\) blocks of \(0^\ell\). This yields an objective of \((i, n-i)\). Clearly, \(i\) and \(-i\) in the objective values indicate that solutions in this set are non-dominated to each other. 

Secondly, we show that \(x\notin\mathcal{P}\) are dominated.
For any solutions \(x\notin\mathcal{P}\), let \(i = f_1(x)\), we can rewrite those solutions as \(1^i 0 s\) for \(i\in\{0,\dots,n-1\}\) and arbitrary \(s\) where \(|s|=n-2\).
If \(i \mod \ell=0 \), the rest of the bits cannot be entirely \(0^{n-i}\) as \(x\notin\mathcal{P}\), indicating that there exist blocks that are not \(0^\ell\).
Clearly, \(x\) is dominated by the solution \(1^i 0^{n-i}\) as \(1^i 0^{n-i}\) has a higher ZeroRoyalRoad objective. If \(i \mod \ell\neq0\), it means that flipping the \(0\) at the \(i+1\) position (before \(s\)) will not interrupt any \(0^\ell\) blocks in \(s\) (if there is any) but only improve the LeadingOnes objective, making \(x\) dominated by this new solution.
\end{proof}

The corresponding Pareto front is thereby \(\{\bigl(i\ell, (b-i)\ell\bigr) \mid i\in\{0,1,\dots,b\}\}\), identical to the Pareto front of ORZR.

\begin{proposition}
With the same notation above, the local Pareto optimal solutions of LOZR are the following.
\begin{align}
\begin{split}
\mathcal{L} = \big\{ \,1^is \,\big|\,\ 
  &\forall S_j,j>\lceil\frac{i}{\ell}\rceil : |S|_1>1, \\
  &S_{\lceil\frac{i}{\ell}\rceil+1}=0^\ell\\
  & i\in\{0, \ell, 2\ell,\dots,(b-1)\ell\}
\big\}
\end{split}
\end{align}
where \(s\) is an arbitrary sub bit-string of length \(n-i\).
\end{proposition}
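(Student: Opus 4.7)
The plan is to follow the two-direction template already used for Prop.~\ref{prop:ORZR-plo} and for the LOZJ local-optimum characterisation: first verify that every $x \in \mathcal{L}$ has no strictly dominating one-bit neighbor, then show that every $x \notin \mathcal{P} \cup \mathcal{L}$ does. The structural description of $\mathcal{L}$ suggests a three-region decomposition of $x = 1^is$ that will drive both directions: (R1) the leading $\lceil i/\ell\rceil$ all-ones blocks, (R2) the distinguished buffer block $S_{\lceil i/\ell\rceil+1} = 0^\ell$, and (R3) the remaining tail blocks.

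For the forward direction, I fix $x \in \mathcal{L}$ and analyse a generic one-bit flip region by region. A flip in R1 is necessarily $1 \to 0$; it strictly decreases $f_1$ without changing $f_2$, because a single flip cannot turn $1^\ell$ into $0^\ell$ when $\ell \geq 2$. A flip in R2 destroys the buffer's all-zero status, dropping $f_2$ by $\ell$; $f_1$ rises only when the flipped position is $i+1$, which yields an incomparable neighbor, and is otherwise unchanged. A flip in R3 leaves $f_1$ untouched because position $i+1$ remains $0$; the tail condition $|S_j|_1 > 1$ ensures no single flip completes a new $0^\ell$ block, so $f_2$ can only stay equal or drop. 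In every sub-case the neighbor fails to strictly dominate, so $x$ is Pareto local optimal.

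For the converse, I fix $x \notin \mathcal{P} \cup \mathcal{L}$, set $i = f_1(x)$, and split into cases according to which defining clause of $\mathcal{L}$ fails. \textbf{Case A} ($\ell \nmid i$): position $i+1$ sits inside a partially-filled block whose leading bits are $1$; flipping the $0$ at position $i+1$ strictly increases $f_1$ and leaves every block's all-zero status unchanged, yielding a dominating neighbor. \textbf{Case B} ($\ell \mid i$ but $S_{\lceil i/\ell\rceil+1} \neq 0^\ell$): the buffer begins with $0$ and contains at least one $1$; if it contains exactly one $1$ I flip it, completing $0^\ell$ and raising $f_2$ by $\ell$ with $f_1$ unchanged, otherwise I flip the $0$ at position $i+1$, strictly improving $f_1$ while the buffer remains non-$0^\ell$ both before and after, so $f_2$ is unchanged. \textbf{Case C} (both prior clauses hold but some tail block violates $|S_j|_1 > 1$): the violating block must contain exactly one $1$; flipping it converts $S_j$ into $0^\ell$, raising $f_2$ by $\ell$ without touching $f_1$.

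The main obstacle is Case B with a multi-$1$ buffer, where one must track carefully that flipping position $i+1$ neither completes nor destroys any $0^\ell$ block: the buffer contributes $0$ to $f_2$ before and after, while no other block is touched. Secondary subtleties are the boundary cases $i = b\ell$ (which yields $1^n \in \mathcal{P}$ and must be excluded from the converse) and $i = 0$ (for which R1 is vacuous and the argument reduces to R2/R3), together with verifying exhaustiveness of the clause-failure split so that Cases A--C together cover the complement of $\mathcal{P} \cup \mathcal{L}$.
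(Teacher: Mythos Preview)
Your proposal is correct and follows essentially the same two-direction template as the paper's proof: verify that every $x\in\mathcal{L}$ has no dominating one-bit neighbour, then split $x\notin\mathcal{P}\cup\mathcal{L}$ by which defining clause fails. Your three-region decomposition in the forward direction and the extra sub-case in Case~B (buffer with exactly one $1$) are more explicit than the paper's argument, but structurally the two proofs are the same.
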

\begin{proof}
We first show that the solutions \(x\in\mathcal{L}\) are non-dominated by its 1-bit neighbours. 
The first condition \(\forall S_j,j>\lceil\frac{i}{\ell}\rceil : |S|_1>1\) only considers the blocks that do not contain any leading ones. This condition implies that the rest of the blocks cannot be turned into \(0^\ell\) via flipping an \(1\) to \(0\), thus the ZeroRoyalRoad objective cannot be improved.
The second condition implies that the first block that does not contain leading ones is an all-zeroes block, and the third condition implies that extending the leading ones will destroy this all-zeroes block, thus the LeadingOnes objective cannot be improved without reducing the ZeroRoyalRoad objective.
In both cases, flipping one bit cannot yield any solution that dominates the current one.

Then, we show that the solutions \(x\notin\mathcal{P}\cup\mathcal{L}\) are dominated by one of its neighbours.
Consider the case \(i\notin\{0, \ell, 2\ell, \dots, (b-1)n\}\), the block containing position \(i\) is not completed and extending the leading ones to \(i+1\) can improve the LeadingOnes objective without reducing the ZeroRoyalRoad objective, as it has not reached the next block yet.
As for the case \(i\notin\{0, \ell, 2\ell, \dots, (b-1)n\), since \(x\notin\mathcal{L}\), the conditions \(\forall S_j,j>\lceil\frac{i}{\ell}\rceil : |S|_1>1\) and \(S_{\lceil\frac{i}{\ell}\rceil+1}=0^\ell\)  cannot hold at the same time.
If the former does not hold, then there exists a block that does not contain any leading ones, but there is only one \(1\). Flipping this \(1\) to \(0\) yields a \(0^\ell\) block and improves the ZeroRoyalRoad objective.
If the latter does not hold, extending the leading ones does not break any \(0^\ell\) and thus it only improves the LeadingOnes objective.
For all cases, the solution \(x\) can be improved by a single bit flip and this new solution dominates \(x\).
\end{proof}

\subsection{OneJump-ZeroRoyalRoad (OJZR)}
As illustrated in the main paper, the Pareto set of OJZR can be linear or non-linear, depending on the condition \((n-k)\mod \ell=0\) where \(n\) is the length of the bit-string, \(k\) is the jump parameter of OneJump and \(ell\) is the block length of ZeroRoyalRoad.
These two cases have different Pareto set, and thus we treat them as two propositions as the following.

\begin{proposition}\label{prop:OJZR-ps1}
Let \(n\in\mathbb{N}, 1<k<\frac{n}{2}\), given a bit-string of length \(n\), partitioned into \( b \) disjoint blocks \( x = S_1  S_2  \dots  S_b \), each with the same length \( \ell<k \). For the benchmark OJZR, when \((n-k)\mod\ell=0\), the Pareto optimal set is the following.
\begin{align}
\begin{split}
&\mathcal{P}=\{ \,1^n\,\} \;\cup \\ 
&\{\; x \,\, \bigl| \,\, |x|_1\leq n-k, \; |\{S \mid S=1^\ell\}|+|\{S \mid S=0^\ell\}|=b \}
\end{split}
\end{align}
\end{proposition}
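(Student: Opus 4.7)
I will split the proof into two parts: (a) every element of \(\mathcal{P}\) is Pareto optimal, and (b) every \(x \notin \mathcal{P}\) is dominated by some element of \(\mathcal{P}\). The guiding observation is that \(\mathcal{P}\) lies on the single line \(f_1 + f_2 = n + k\) in objective space: \(1^n\) yields \((n+k, 0)\), while any completed \(x\) with \(p\) all-one blocks and \(b - p\) all-zero blocks satisfies \(|x|_1 = p\ell \leq n-k\), giving \(f_1(x) = k + p\ell\) and \(f_2(x) = (b-p)\ell = n - p\ell\). In fact, a short case split on whether \(|x|_1\) exceeds \(n-k\) shows that \(f_1(x) + f_2(x) \leq n + k\) for \emph{every} \(x\), with equality exactly on \(\mathcal{P}\). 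This immediately gives (a), since a strict dominator of any \(x \in \mathcal{P}\) would achieve \(f_1 + f_2 > n + k\), a contradiction.

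\textbf{Case analysis for (b).} For \(x \notin \mathcal{P}\) I would split into three situations. If \(|x|_1 > n - k\) and \(x \neq 1^n\), then \(f_1(x) = n - |x|_1 < k\) and \(f_2(x) \leq n\), so \(0^n \in \mathcal{P}\) (with objective \((k, n)\)) strictly dominates \(x\). Otherwise \(|x|_1 \leq n-k\) but \(x\) is not completed; define the ``completion'' \(x^\ast\) by turning every non-\(0^\ell\) block of \(x\) into \(1^\ell\), which preserves \(f_2\) and strictly increases the number of ones. If \(|x^\ast|_1 \leq n-k\) then \(x^\ast \in \mathcal{P}\) and dominates \(x\) (same \(f_2\), strictly larger \(f_1\)). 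If instead \(|x^\ast|_1 > n-k\), let \(q\) be the number of all-zero blocks of \(x\) and write \(n - k = m\ell\); then \((b - q)\ell > m\ell\) forces \(q < b - m = k/\ell\). Crucially, \(k/\ell\) is an integer because \((n - k) \bmod \ell = 0\), so \(q \leq k/\ell - 1\) and \(f_2(x) = q\ell \leq k - \ell < k\). The completed point \(y = 1^{n-k} 0^{k} \in \mathcal{P}\) then has \(f_1(y) = n \geq k + |x|_1 = f_1(x)\) and \(f_2(y) = k > f_2(x)\), so \(y \succ x\).

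\textbf{Main obstacle.} The delicate step, and the one that truly consumes the hypothesis \((n - k) \bmod \ell = 0\), is the last subcase: showing that when the completion \(x^\ast\) falls inside the OneJump valley, \(x\) is still dominated by the ``corner'' Pareto point \(y = 1^{n-k} 0^{k}\). Integrality of \(k/\ell\) is what turns the soft bound \(q < k/\ell\) into the strict \(q\ell < k\) needed for a strict inequality in \(f_2\); without it, the bound only yields \(q\ell \leq k\) and the dominance argument fails --- which is precisely why the companion case (for \((n-k) \bmod \ell > 0\)) must enlarge \(\mathcal{P}\) with an extra family of solutions. The remaining ingredients, namely the lemma \(f_1 + f_2 \leq n + k\) and the verification that \(1^n\) is never dominated (its \(f_1 = n+k\) strictly exceeds every other \(f_1\) value), reduce to routine case splits on \(|x|_1\) versus \(n-k\).
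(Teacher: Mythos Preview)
Your proof is correct and somewhat cleaner than the paper's own argument. The key difference is structural.

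For part (a), the paper argues only that members of $\mathcal{P}$ are pairwise non-dominated (breaking a $0^\ell$ block hurts $f_2$, adding one hurts $f_1$), and then relies implicitly on part (b) plus a finite-chain argument to conclude they are globally Pareto optimal. Your budget-line observation $f_1 + f_2 \leq n+k$, with equality precisely on $\mathcal{P}$, dispatches (a) in one stroke and simultaneously explains why the Pareto front is linear in this case---which is exactly the geometric point the paper makes informally in Section~V.C.

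For part (b), the paper proceeds by \emph{local} moves: in Case~1 ($n-k<|x|_1<n$) it flips a single $1\!\to\!0$; in Case~2 ($|x|_1<n-k$) it flips a single $0\!\to\!1$ inside a mixed block; in Case~3 ($|x|_1=n-k$) it observes that $(n-k)\bmod\ell=0$ forces at least two mixed blocks and then \emph{swaps} ones across blocks to complete an extra $0^\ell$ block while keeping $|x|_1$ fixed. None of these dominators is required to lie in $\mathcal{P}$. You instead construct \emph{explicit} dominators in $\mathcal{P}$: $0^n$, the completion $x^\ast$, or the corner point $1^{n-k}0^{k}$. Your route makes the role of the hypothesis more transparent---it is exactly what forces $k/\ell\in\mathbb{Z}$ and hence the strict gap $q\ell\leq k-\ell<k$ in your third subcase---while the paper's route has the complementary benefit of exhibiting improving neighbours, which ties directly to the local-search intuition used elsewhere in the paper.
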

\begin{proof}
One may notice that this Pareto set is similar to the Pareto set of ORZR (Prop.~\ref{prop:ORZR-ps}). The key difference is that the OneJump function makes those Pareto solutions with \(n-k<|x|_1<n\) no longer Pareto optimal, leaving the rest Pareto optimal solutions in \(\mathcal{P}\).

For any solution in \(x\in\mathcal{P}\), breaking \(0^\ell\) blocks decreases the ZeroRoyalRoad objective, completing more \(0^\ell\) blocks reduces the OneJump objective. Therefore, solutions in \(\mathcal{P}\) are non-dominated to each other.

For \(x\notin\mathcal{P}\), we show that they are dominated by considering three cases.

\textbf{Case 1: \(n-k<|x|_1<n\)}. Here, the OneJump objective drops into the valley. Any solution that flips one \(1\) to \(0\) has a strictly larger OneJump objective and possibly larger ZeroRoyalRoad objective (if it completes a \(0^\ell\) block).

\textbf{Case 2: \(|x|_1 < n-k\)}. Since this solution \(x\) is not in \(\mathcal{P}\), we have \( |\{S \mid S=1^\ell\}|+|\{S \mid S=0^\ell\}|<b\), implying that there exists at least one block that mixes \(1\)s and \(0\)s. 
Here, turning a \(0\) \(1\) yields a solution that dominates \(x\). 

\textbf{Case 3: \(|x|_1 = n-k\)}. In this case, only the global optima \(1^n\) is better than \(x\) on the OneJump objective. However, like in case 2, at least one block is a mix between \(0\)s and \(1\)s. Since \(n\mod\ell=0\) by definition, there must be more than one mix block, as otherwise this assumption cannot hold. 
By swapping the \(1\)s and \(0\)s between different blocks, one can complete at least one \(0^\ell\) block (increasing the ZeroRoyalRoad) without changing the number of \(1\)s (keeping the OneJump unchanged).
This yields a solution that dominates \(x\).

For all cases, \(x\notin\mathcal{P}\) are dominated. 
\end{proof}

In the case \((n-k)\mod\ell=0\), the corresponding Pareto front is \(\bigl\{(n+k,0)\bigr\}\cup\bigl\{(i\ell+k,n-i\ell) \mid i\in\{0,1,\dots,\lfloor\frac{k}{\ell}\rfloor\}\bigr\}\).

\begin{proposition}\label{prop:OJZR-ps2}
Let \(n\in\mathbb{N}, 1<k<\frac{n}{2}\), given a bit-string of length \(n\), partitioned into \( b \) disjoint blocks \( x = S_1  S_2  \dots  S_b \), each with the same length \( \ell<k \). For the benchmark OJZR, when \((n-k)\mod\ell>0\), the Pareto optimal set is the following.
\begin{align}
\begin{split}
&\mathcal{P}=\{ \,1^n\,\} \;\,\cup\\ 
&\{ x  \mid \, |x|_1< n-k, \; |\{S \mid S=1^\ell\}|+|\{S \mid S=0^\ell\}|=b \} \, \cup \\ &\{ x \mid  |x|_1=n-k,\,|\{S\,|\,S=0^\ell\}|=\lfloor\frac{k}{\ell}\rfloor\}
\end{split}
\end{align}
\end{proposition}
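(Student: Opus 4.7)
The plan is to mirror the proof of Proposition~\ref{prop:OJZR-ps1} by establishing (a) pairwise non-domination inside the claimed $\mathcal{P}$, and (b) that every $x\notin\mathcal{P}$ is strictly dominated by some element of $\mathcal{P}$. The essential new ingredient is the third component, which captures solutions sitting in the ``column immediately before the OneJump valley'' (i.e., $|x|_1=n-k$); such solutions appear on the Pareto front precisely because, under the hypothesis $(n-k)\bmod\ell>0$, no complete-block configuration has $|x|_1=n-k$, leaving a vacant slot that is filled by incomplete configurations carrying the maximal feasible ZeroRoyalRoad value.

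First I would write $k=q\ell+r$ with $q=\lfloor k/\ell\rfloor$ and $r\in\{0,\ldots,\ell-1\}$; the hypothesis $(n-k)\bmod\ell>0$, equivalent to $k\bmod\ell\ne 0$ since $n$ is a multiple of $\ell$, forces $r\ge 1$. A direct computation then gives the three types of objective vectors realised by $\mathcal{P}$: $(n+k,0)$ for $1^n$; $(k+i\ell,(b-i)\ell)$ for $i\in\{0,\ldots,b-q-1\}$ for the second component; and the single vector $(n,q\ell)$ for the third component. Pairwise non-domination is then routine: the second-component points lie on the line $f_1+f_2=n+k$ with distinct coordinates, $1^n$ uniquely maximises $f_1$, and $(n,q\ell)$ beats its ``nearest neighbour'' $(n+r-\ell,(q+1)\ell)$ on $f_1$ by $\ell-r>0$ while losing $\ell$ units on $f_2$, so the two are incomparable.

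For the second half I would split $x\notin\mathcal{P}$ by $|x|_1$. If $|x|_1\in(n-k,n)$ then $x$ lies in the OneJump valley, giving $f_1(x)\le k-1$; moreover $|x|_0\le k-1$ bounds the number of all-zero blocks by $\lfloor(k-1)/\ell\rfloor=q$ (using $r\ge 1$), so $f_2(x)\le q\ell$ and any third-component vector $(n,q\ell)$ strictly dominates $x$. If $|x|_1=n-k$ but the all-zero-block count differs from $q$, it must be at most $q-1$ (since $|x|_0=k<(q+1)\ell$), so $f_2(x)\le(q-1)\ell<q\ell$ while $f_1(x)=n$, and the third component dominates again. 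Finally, if $|x|_1<n-k$ and $x$ has an incomplete block, I would use a second-component dominator at $|x'|_1=\lceil|x|_1/\ell\rceil\cdot\ell$ whenever that value is at most $(b-q-1)\ell$, and fall back on the third-component vector for the ``upper sliver'' $|x|_1\in((b-q-1)\ell,n-k)$.

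The main obstacle will be this upper-sliver subcase, because the natural complete-block dominator is unavailable there. The key observation to resolve it is that $(n-k)\bmod\ell>0$ pins $\lfloor(n-|x|_1)/\ell\rfloor$ to exactly $q$ on this interval (since $n-|x|_1\in(q\ell,(q+1)\ell)$), which caps $f_2(x)$ at $q\ell$; combined with $f_1(x)=k+|x|_1<n$, the third-component vector $(n,q\ell)$ strictly dominates. Assembling all these cases then yields the full characterisation of $\mathcal{P}$.
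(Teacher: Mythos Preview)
Your proof is correct and more detailed than the paper's. The paper proceeds differently: for the two ``old'' cases $n-k<|x|_1<n$ and $|x|_1<n-k$ it simply refers back to Proposition~\ref{prop:OJZR-ps1}, where domination is shown by exhibiting a single improving bit-flip (or a block swap) rather than by naming a dominator inside $\mathcal{P}$; transitivity of dominance then does the rest. Only the genuinely new case $|x|_1=n-k$ is treated explicitly, by comparison with the third component. Your approach instead constructs an explicit $\mathcal{P}$-dominator for every excluded point, which forces the additional ``upper sliver'' subcase $(b-q-1)\ell<|x|_1<n-k$; your resolution there via $\lfloor(n-|x|_1)/\ell\rfloor=q$ is clean and correct. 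The payoff of your route is a self-contained argument that makes the Pareto-front geometry transparent (and, incidentally, your third-component objective vector $(n,q\ell)$ is the right value; the paper's stated $(n-k,\lfloor k/\ell\rfloor\ell)$ is a slip). The paper's route is shorter but leans on ps1 and on chasing dominance chains.
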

\begin{proof}
Similar to Prop.\ref{prop:OJZR-ps1}, solutions in the first and the second components \(\{ \,1^n\,\} \,\cup
\{\; x \,\, \bigl| \,\, |x|_1< n-k, \; |\{S \mid S=1^\ell\}|+|\{S \mid S=0^\ell\}|=b \}\) are non-dominated to each other. 
The third component \(\{\, x \mid  |x|_1=n-k,\,|\{S\,|\,S=0^\ell\}|=\lfloor\frac{k}{\ell}\rfloor\}\) corresponds to the objective \((n-k, \lfloor\frac{k}{\ell}\rfloor\ell)\) in the objective space. 
Solutions in the third component have exactly \(\lfloor\frac{k}{\ell}\rfloor\) because there are only \(n-|x|_1=k\) zeroes available.
They are not dominated by \(1^n\) as they have a higher ZeroRoyalRoad objective. They are also not dominated by any solution in the third component (those with \(|x|_1<n-k\)) as they have higher OneJump objective with \(|x|_1=n-k\). The reason for their occurrence is explained in the main text.

For those \(x\notin\mathcal{P}\), similar to Prop.~\ref{prop:OJZR-ps1}, we may consider the cases \(n-k<|x|_1<n\), \(|x|_1<n-k\) and \(|x|_1=n-k\). The proof of the first two cases is the same as Prop.\ref{prop:OJZR-ps1}. As for the third case, the solutions left from \(\mathcal{P}\) are the \(\{x\mid |x|_1=n-k,|x|_1=n-k,\,|\{S\,|\,S=0^\ell\}|<\lfloor\frac{k}{\ell}\rfloor\}\).
It is easy to see that they have the same OneJump objective but a smaller ZeroRoyalRoad objective compared to the third component of \(\mathcal{P}\). Therefore, \(x\notin\mathcal{P}\) are all dominated.
\end{proof}

In the case \((n-k)\mod\ell>0\), the corresponding Pareto front is \(\bigl\{(n+k,0)\bigr\} \cup \bigl\{(n-k, \lfloor\frac{k}{\ell}\rfloor\ell)\bigr\} \cup \bigl\{(i\ell+k,n-i\ell) \mid i\in\{0,1,\dots,\lfloor\frac{k}{\ell}\rfloor\}\bigr\}\).

Unlike the Pareto set, the Pareto local optimal set for OJZR has only one case, as it is mainly attributed to the OneJump objective. They are the solutions right below the Pareto optimal solution at \(f_1(x)=n-k\), as shown in the following.

\begin{proposition}
With the same notation above, the local Pareto optimal solutions of OJZR are the following.
\begin{align}
\begin{split}
\mathcal{L} = \big\{ x \mid |x|_1=n-k, |\{S\mid S=0^\ell\}|<\lfloor\frac{k}{\ell}\rfloor
\big\}
\end{split}
\end{align}
\end{proposition}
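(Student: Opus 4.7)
My plan is to verify the characterisation by a two-sided argument: (i) every $x \in \mathcal{L}$ is not dominated by any $1$-bit neighbour and lies outside $\mathcal{P}$; (ii) every $x \notin \mathcal{P} \cup \mathcal{L}$ has a $1$-bit neighbour that strictly dominates it. The key observation driving both directions is that $|x|_1 = n-k$ sits exactly at the right edge of the OneJump ``peak'': any bit-flip away from this count either leaves the peak (a $1 \to 0$ flip drops $f_1$ by one) or plunges into the valley (a $0 \to 1$ flip drops $f_1$ to $k-1$).

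For direction (i), I would fix $x \in \mathcal{L}$, compute $f_1(x) = n$, and then check the two flip types: a $1 \to 0$ flip gives a neighbour with $f_1' = n-1$ and $f_2' \geq f_2(x)$ (the flipped position may complete an $0^\ell$ block but cannot break one), so the neighbour is incomparable; a $0 \to 1$ flip lands in the valley with $f_1' = k-1 < n$ and $f_2' \leq f_2(x)$ (it may break an $0^\ell$ block but never creates one), again incomparable. For disjointness with $\mathcal{P}$, I would use the bound $|\{S = 0^\ell\}| \leq \lfloor k/\ell \rfloor$, which holds because $|x|_0 = k$ and each $0^\ell$ block consumes $\ell$ zeros, together with the strict inequality in the definition of $\mathcal{L}$, to rule out membership in any component of Propositions~\ref{prop:OJZR-ps1} and \ref{prop:OJZR-ps2}.

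For direction (ii), I would split by $|x|_1$. If $n-k < |x|_1 < n$, a $1 \to 0$ flip strictly increases $f_1$ (by $1$ inside the valley, or by a large jump if it exits the valley) and leaves $f_2$ weakly better, giving a dominator. If $|x|_1 < n-k$, then $x \notin \mathcal{P}$ forces the existence of a ``mixed'' block (neither $0^\ell$ nor $1^\ell$), and flipping a $0$ inside such a block to $1$ raises $f_1$ by $1$ while leaving $f_2$ unchanged. The remaining case, $|x|_1 = n-k$ with $x \notin \mathcal{P} \cup \mathcal{L}$, is handled by a counting argument: $|\{S = 0^\ell\}| \geq \lfloor k/\ell \rfloor$ combined with $|x|_0 = k$ forces equality, which in turn places every zero inside an $0^\ell$ block and every remaining block inside $\{1^\ell\}$, contradicting $x \notin \mathcal{P}$ by Propositions~\ref{prop:OJZR-ps1} and \ref{prop:OJZR-ps2}.

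The main obstacle I anticipate is this last subcase of (ii): the pigeonhole-style accounting of where the $k$ zeros reside, together with its interaction with the remainder $(n-k) \bmod \ell$. When the remainder is zero, the filled blocks fit exactly and $x$ falls into the first/second component of the Pareto set; when it is positive, the ``extra'' Pareto point with objective $(n-k, \lfloor k/\ell \rfloor \ell)$ from Proposition~\ref{prop:OJZR-ps2} becomes the relevant target. Making sure no gap remains between $\mathcal{P}$ and $\mathcal{L}$ under both parities is the one place where bookkeeping needs care.
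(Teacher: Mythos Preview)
Your proposal is correct and follows essentially the same two-sided verification and $|x|_1$-case split as the paper's proof. In fact you are more careful than the paper in one place: the paper's second direction only treats $|x|_1<n-k$ and $n-k<|x|_1<n$ and silently relies on the fact that $|x|_1=n-k$ with $|\{S=0^\ell\}|\ge\lfloor k/\ell\rfloor$ forces $x\in\mathcal{P}$ (via Prop.~\ref{prop:OJZR-ps1} or the third component of Prop.~\ref{prop:OJZR-ps2}), whereas you make this pigeonhole step explicit and correctly separate the two parity cases of $(n-k)\bmod\ell$.
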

\begin{proof}
Firstly, we show every solution \(x\in\mathcal{L}\) is not dominated by any of its neighbours. 
If we flip any \(1\) to \(0\), it simply reduces the OneJump objective.
If we flip any \(0\) to \(1\), it reduces the OneJump objective (as \(|x|_1>n-k\) now lies in the ``valley'').
Since no neighbour can yield improvement, \(x\in\mathcal{L}\) are not dominated by any of its neighbours.

Then, we show that \(x\notin\mathcal{P}\cup\mathcal{L}\) are dominated by at least one of its neighbours.
For solutions with \(|x|_1<n-k\), since it is not in \(\mathcal{P}\), there exists at least one mix block, meaning that there are mixes of \(1\)s and \(0\)s. 
Flipping \(0\) to \(1\) in such block can improve the OneJump objective without changing the ZeroRoyalRoad objective.
For solutions with \(n-k<|x|_1<n\), flipping any \(1\) to \(0\) increase the OneJump objective (escaping from the ``valley'') and possibly increases the second objective.
For all cases, \(x\notin\mathcal{P}\cup\mathcal{L}\) have at least one neighbour that dominates itself.
\end{proof}

\subsection{OneMax-TrailingZeroes (OMTZ)}
Now we move to the benchmarks involving OneMax. 

\begin{proposition}
Let \(n\in\mathbb{N}\), given a bit-string of length \(n\), for the benchmark OMTZ, the Pareto optimal set is the following.
\begin{align}
\mathcal{P}=\bigl\{\,0^n\,\}\,\cup\,\{ \,1^i0^{n-i} \mid i\in\{0,1,\dots,n\} \bigr\}
\end{align}
\end{proposition}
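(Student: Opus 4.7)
The plan is to follow the same two-part template used in the earlier propositions (e.g., Prop.~\ref{prop:ORZR-ps} and the LOZJ proposition): first show that every element of the proposed set $\mathcal{P}$ is non-dominated by any other element of $\mathcal{P}$, and then show that every $x \notin \mathcal{P}$ is strictly dominated by some $y \in \mathcal{P}$. (Note that the $\{0^n\}$ in the statement is redundant, as it is the $i=0$ case of the second set, so I will treat $\mathcal{P}$ as just $\{1^i 0^{n-i} \mid i \in \{0,1,\dots,n\}\}$, which has $n+1$ distinct elements.)

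For the first direction, I would simply evaluate the two objectives on $x_i = 1^i 0^{n-i}$ directly from the definitions: $f_1(x_i) = i$ because there are exactly $i$ ones, and $f_2(x_i) = n-i$ because the trailing zero block has length exactly $n-i$. Hence for $0 \leq i < j \leq n$ we have $f_1(x_j) - f_1(x_i) = j - i > 0$ and $f_2(x_j) - f_2(x_i) = i - j < 0$, so no element of $\mathcal{P}$ dominates another; the set lies on the linear front $f_1 + f_2 = n$.

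For the second direction, the key lemma is the inequality
\begin{align}
f_1(x) + f_2(x) \leq n,
\end{align}
with equality if and only if $x$ has the form $1^{|x|_1} 0^{n-|x|_1}$. The justification is a simple disjointness argument: the $f_2(x)$ trailing zeros and the $|x|_1 = f_1(x)$ one-bits sit in disjoint positions among the $n$ bits, so their counts sum to at most $n$; equality forces the complement of the trailing zero block to consist entirely of ones, which is exactly the canonical form. Given any $x \notin \mathcal{P}$, I would set $a = |x|_1$ and $y = 1^a 0^{n-a} \in \mathcal{P}$. Then $f_1(y) = a = f_1(x)$ while $f_2(y) = n - a > f_2(x)$, where the strict inequality follows from the lemma together with $x \neq y$. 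So $y$ strictly dominates $x$, and $x$ is not Pareto optimal.

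The proof is structurally the simplest among the propositions in this section, so there is no real obstacle; the only place requiring a little care is the equality condition in the auxiliary inequality, which I would spell out explicitly to avoid ambiguity. The boundary cases $a = 0$ (giving $y = 0^n$) and $a = n$ (giving $y = 1^n$) cause no problem since both endpoints lie in $\mathcal{P}$ by construction.
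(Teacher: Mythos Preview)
Your proposal is correct and follows essentially the same two-part strategy as the paper; indeed your version is considerably more rigorous than the paper's proof, which is a two-sentence sketch noting that any string not of the form $1^i0^{n-i}$ is dominated by the canonical one and that the Pareto set therefore coincides with that of LOTZ. The only cosmetic difference is that the paper implicitly fixes the TrailingZeroes value and increases OneMax, whereas you fix $|x|_1$ and increase TrailingZeroes via the clean auxiliary inequality $f_1(x)+f_2(x)\le n$; both choices yield a dominating point in $\mathcal{P}$.
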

\begin{proof}
The TrailingZeroes objective enforces the structure of the Pareto set to be \(1^i0^{n-i}\). If the ``leading ones'' component does not follow this structure (i.e., having extra \(0\)s on the left), they are dominated by the solution who follow this structure. Therefore, the Pareto set of OMTZ is the same as the LeadingOnes-TrailingZeroes benchmark.
\end{proof}
The corresponding Pareto front is the same as the LeadingOnes-TrailingZeroes as well.

\subsection{OneMax-ZeroJump (OMZJ)}
\begin{proposition}
Let \(n\in\mathbb{N}, 1<k<\frac{n}{2}\), given a bit-string of length \(n\), for the benchmark OMZJ, the Pareto optimal set is the following.
\begin{align}
\mathcal{P}=\{\,0^n\,\}\,\cup\,\{ \,x \mid |x|_0\leq n-k \}
\end{align}
\end{proposition}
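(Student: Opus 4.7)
The plan is to establish the proposition by the standard two-step argument: (i) show that every candidate in $\mathcal{P}$ is non-dominated, and (ii) show that every solution outside $\mathcal{P}$ is dominated by some element of $\mathcal{P}$. I would first observe that $f_2$ takes exactly two ``regimes'': the plateau regime where $|x|_0 \leq n-k$ (including the special case $x=0^n$ with $f_2=k+n$, its global maximum) and the valley regime where $|x|_0 > n-k$ and $x\neq 0^n$, in which $f_2(x)=n-|x|_0=|x|_1<k$. Rewriting things in terms of $|x|_1$ will make the arithmetic cleaner, and then the argument essentially reduces to tracking the sum $f_1+f_2$.

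For the non-domination direction, the point $0^n$ has $f_2=k+n$, the maximum possible value of $f_2$ on the whole search space, so no solution can strictly dominate it. For any two solutions $x,x'$ with $|x|_0,|x'|_0\leq n-k$, a direct computation gives $f_1(x)+f_2(x) = |x|_1 + k + (n-|x|_1) = n+k$, and likewise for $x'$. Thus the two objectives sum to the same constant on this set, forcing that an improvement in $f_1$ is exactly offset by a loss in $f_2$; hence no element of this set is dominated by another. Finally, $0^n$ has the minimum $f_1$ but the maximum $f_2$, so no mutual domination occurs between the two components of $\mathcal{P}$ either.

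For the domination direction, let $x\notin\mathcal{P}$, so $x\neq 0^n$ and $|x|_0>n-k$, equivalently $1\leq|x|_1<k$. Then $x$ lies in the valley and $f_2(x)=|x|_1<k$, while $f_1(x)=|x|_1<k$ as well. I would compare $x$ with any canonical witness $x^\star$ having $|x^\star|_1=k$, for instance $x^\star = 1^k 0^{n-k}$. Then $x^\star\in\mathcal{P}$, $f_1(x^\star)=k>|x|_1=f_1(x)$, and $f_2(x^\star)=k+(n-k)=n>|x|_1 = f_2(x)$, so $x^\star$ strictly dominates $x$.

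I do not anticipate serious obstacles here: the argument is a bookkeeping exercise around the piecewise definition of $f_2$, and the only subtlety is remembering to include $0^n$ as a separate boundary case that is non-dominated for the trivial reason of maximising $f_2$. The main thing to be careful about is the boundary $|x|_0 = n-k$, which must be in the plateau regime (by the ``$\leq$'' in the definition) rather than the valley, and the assumption $1<k<n/2$, which ensures both that the valley is nonempty and that a witness $x^\star$ with $|x^\star|_1=k$ actually exists.
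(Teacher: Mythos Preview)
Your proof is correct and follows essentially the same idea as the paper---solutions in the single valley $n-k<|x|_0<n$ are dominated, everything else is non-dominated---but you have actually written out the rigorous two-step verification (constant sum $f_1+f_2=n+k$ on the plateau, explicit witness $1^k0^{n-k}$ for valley points), whereas the paper's own proof is a two-sentence analogy to OneMinMax/OJZJ without these details. Your treatment of the boundary cases ($0^n$ and $|x|_0=n-k$) is also careful and correct.
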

\begin{proof}
Since the Jump function resembles an OneMax function with a valley, the OneJump-ZeroJump function resembles an OneMinMax with some solutions no longer being Pareto optimal as they are in one of the two valleys. Similarly, for OMZJ, it resembles an OneMinMax with only one valley \(n-k<|x|_0<n\). The rest of the solutions, denoted by \(\mathcal{P}\) here, are Pareto optimal solutions of OMZJ.
\end{proof}

The corresponding Pareto front of OMZJ is \(\bigl\{\,0^n\,\bigr\}\cup\bigl\{(i,n+k-i) \mid i\in\{k,k+1,\dots,n\}\bigr\}\).

\subsection{OneMax-ZeroRoyalRoad (OMZR)}
\begin{proposition}\label{prop:OMZR-ps}
Let \(n,b,\ell\in\mathbb{N}, n=b\cdot\ell,b>1\), given a bit-string of length \(n\), partitioned into \( b \) disjoint blocks \( x = S_1  S_2  \dots  S_b \), each with the same length \( \ell \). For the benchmark OMZR, the Pareto optimal set is the following.
\begin{align}
\mathcal{P}=\{ x \,\, \bigl| \,\, |\{S \mid S=1^\ell\}|+|\{S \mid S=0^\ell\}|=b \}
\end{align}
\end{proposition}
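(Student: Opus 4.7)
The plan is to mirror the structure of the proof of Proposition~\ref{prop:ORZR-ps}, showing the two inclusions separately: first that every element of $\mathcal{P}$ is Pareto optimal, and second that every $x \notin \mathcal{P}$ is strictly dominated by some element of $\mathcal{P}$. Throughout, I would use the shorthand "completed block" for any block $S_j$ equal to $1^\ell$ or $0^\ell$, and "mixed block" for any block containing both $0$s and $1$s.

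For the first direction, I would fix $x \in \mathcal{P}$ and note that, since every block is completed, $f_1(x)$ equals $\ell$ times the number of all-one blocks and $f_2(x)$ equals $\ell$ times the number of all-zero blocks, so $f_1(x) + f_2(x) = n$. Consequently, for any $x, x' \in \mathcal{P}$ we have $f_1(x) + f_2(x) = f_1(x') + f_2(x') = n$, so $f_1(x') > f_1(x)$ forces $f_2(x') < f_2(x)$ and vice versa; hence no two solutions in $\mathcal{P}$ dominate each other.

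For the second direction, I would take any $x \notin \mathcal{P}$ and show it is dominated by an explicit element of $\mathcal{P}$. By definition of $\mathcal{P}$, there exists at least one mixed block $S_j$ in $x$. I would construct $x'$ from $x$ by flipping every $0$ in every mixed block of $x$ to a $1$, so that every mixed block becomes $1^\ell$ while all previously completed blocks are left untouched. This construction guarantees $x' \in \mathcal{P}$. Moreover, a mixed block contributes $0$ to $f_2$ both before and after the flip, so $f_2(x') = f_2(x)$, while $f_1$ strictly increases because at least one $0$-to-$1$ flip was performed. Therefore $x'$ strictly dominates $x$.

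No step here is genuinely difficult: unlike ORZR, the OneMax objective imposes no structural obstruction to improvement, since any flip from $0$ to $1$ already raises $f_1$ unconditionally. The only subtlety worth stating carefully is the asymmetry between the two objectives in the construction of the dominating solution, namely that filling mixed blocks with $1$s works because doing so only touches bits that contribute $0$ to $f_2$; one could symmetrically use a $1$-to-$0$ filling of mixed blocks to produce a dominating solution of equal $f_1$ and strictly larger $f_2$, which gives an alternative (and equally short) argument. I would mention this briefly as a remark but would not carry out both constructions in the formal proof.
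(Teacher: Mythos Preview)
Your main argument is correct and is essentially the same approach as the paper's (the paper's proof is in fact briefer, simply noting that the ZeroRoyalRoad objective forces the block structure and referring back to the ORZR case).

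However, your proposed closing remark is wrong and should be dropped. Filling mixed blocks with $0$s does \emph{not} yield a dominating solution of equal $f_1$: since $f_1$ is OneMax, every $1\to 0$ flip strictly decreases $f_1$. The asymmetry you correctly identify in the main construction is precisely what breaks this ``symmetric'' alternative; there is no analogue of the ORZR situation here because OneMax, unlike OneRoyalRoad, responds to every bit flip. So keep your $0\to 1$ construction as the only one and delete the remark.
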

\begin{proof}
The ZeroRoyalRoad objective enforces the block structure for the OneMax objective (block of \(1\)s) for the Pareto optimal solution. If the \(1\)s in the bit-string does not follow the block structure (i.e., break \(0^\ell\) blocks or having extra \(0\)s), they are dominated by the solutions who follow the structure. Therefore, the Pareto set of OMZR is the same as the OneRoyalRoad-ZeroRoyalRoad.
\end{proof}
The corresponding Pareto front is the same as the OneRoyalRoad-ZeroRoyalRoad as well.

\section*{Ratio of Pareto optimal solutions}
\subsection{Ratio of Pareto optimal solutions of OJZJ}\label{appendix:OJZJ}

For a OneJump-ZeroJump (OJZJ) problem, the ratio of Pareto optimal solutions is \(R(n, k) = \frac{2^n - 2\sum_{s=n-k+1}^{n-1} \binom{n}{s}}{2^n}\).
The following propositions prove that (1) for a large \(k\) near \(\frac{n}{2}\), this ratio can be very low; (2) For a smaller \(k < \frac{n}{\ln{n}}\), this ratio becomes no less than 0.5. 

\begin{proposition}\label{prop:OJZJ-r}
For the OJZJ problem with a sufficiently large \( n\in\mathbb{Z}^+ \) and a large \(k = \left\lfloor \frac{n}{2} \right\rfloor - 1\), the ratio of Pareto optimal solutions of OJZJ converges to 0 as \(n\to\infty\):
\begin{align}
\lim_{n\to\infty}R(n, k) = \lim_{n\to\infty}\frac{2^n - 2\sum_{s=n-k+1}^{n-1} \binom{n}{s}}{2^n} = 0
\end{align}
\end{proposition}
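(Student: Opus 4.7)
My plan is to show $R(n, k) \to 0$ by expressing the numerator $2^n - 2\sum_{s=n-k+1}^{n-1}\binom{n}{s}$ exactly as a sum of only a constant number of central binomial coefficients, each of which is $O(2^n/\sqrt{n})$ by Stirling's bound.

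First I will substitute $k = \lfloor n/2 \rfloor - 1$, so that the lower summation index becomes $n - k + 1 = \lceil n/2 \rceil + 2$. Using the binomial symmetry $\binom{n}{s} = \binom{n}{n-s}$ to rewrite one of the two copies of the subtracted sum,
\begin{align*}
2 \sum_{s=\lceil n/2 \rceil + 2}^{n-1}\binom{n}{s} = \sum_{s=1}^{\lfloor n/2 \rfloor - 2}\binom{n}{s} + \sum_{s=\lceil n/2 \rceil + 2}^{n-1}\binom{n}{s}.
\end{align*}
Combining this with the identity $\sum_{s=0}^{n}\binom{n}{s} = 2^n$, the numerator of $R(n, k)$ telescopes into
\begin{align*}
2 \;+\; \sum_{s \in I_n}\binom{n}{s},
\end{align*}
where $I_n$ collects the indices lying strictly between $\lfloor n/2 \rfloor - 2$ and $\lceil n/2 \rceil + 2$ --- a set of constant size (at most four, precise content depending on the parity of $n$).

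Finally I invoke the standard bound $\binom{n}{\lfloor n/2 \rfloor} \le 2^n / \sqrt{\pi n / 2}$, which follows from Stirling's formula. Since every element of $I_n$ is at most the central coefficient $\binom{n}{\lfloor n/2 \rfloor}$ and $|I_n|$ is bounded by a constant independent of $n$, the numerator is $O(2^n / \sqrt{n})$, and hence $R(n, k) = O(1/\sqrt{n}) \to 0$. The main obstacle is essentially bookkeeping: handling the two parities of $n$ carefully so that $I_n$ and the doubled-sum identity are recorded exactly; the analytic heart of the argument collapses to a one-line application of Stirling's estimate.
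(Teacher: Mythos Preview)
Your proposal is correct and follows essentially the same route as the paper: both reduce the numerator to a constant-sized sum of near-central binomial coefficients (the paper by an explicit parity split yielding terms like $\tfrac{1}{2}\binom{2m}{m}+\binom{2m}{m-1}+1$, you via the set $I_n$) and then invoke Stirling to get an $O(1/\sqrt{n})$ bound. The only difference is that the paper tracks the exact leading constants ($3\sqrt{2}/\sqrt{\pi n}$ and $4\sqrt{2}/\sqrt{\pi n}$ in the even and odd cases), whereas your bound $|I_n|\le 4$ times the central coefficient is coarser but fully sufficient for the stated limit.
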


\begin{proof}
We begin by simplifying the ratio \( R(n, k) \). 
\[
\begin{split}
R(n, k) &= \frac{2^n - 2\sum_{s=n-k+1}^{n-1} \binom{n}{s}}{2^n} \\
&= 1 - \frac{2\sum_{s=n-k+1}^{n-1} \binom{n}{s}}{2^n} = 1 - \frac{\sum_{s=1}^{k-1} \binom{n}{s}}{2^{n-1}}.
\end{split}
\]

For simplicity, we choose \( k = \left\lfloor \frac{n}{2} \right\rfloor - 1 \) as a representative large value near \(\frac{n}{2}\). We analyse two cases based on the parity of \( n \).

\textbf{Case 1: \( n \) is even.} Let \( n = 2m \), where \( m \) is a positive integer,  we have \( k = m - 1 \) and
\[
\sum_{s=1}^{k-1} \binom{n}{s} = \sum_{s=1}^{m-2} \binom{2m}{s} = 2^{2m-1} - \frac{1}{2}\binom{2m}{m} - \binom{2m}{m-1} - 1
\]
Thus, the ratio becomes:
\[
\begin{split}
R(2m, m-1) &= 1 - \frac{ 2^{2m-1} - \frac{1}{2}\binom{2m}{m} - \binom{2m}{m-1} - 1 }{2^{n-1}} \\
&= \frac{\frac{1}{2}\binom{2m}{m}+\binom{2m}{m-1}+1}{2^{2m-1}}.
\end{split}
\]
Using Stirling’s approximation~\cite{feller1971introduction}, \( \binom{2m}{m} \approx \frac{4^m}{\sqrt{\pi m}} \), Therefore, for large \( m \), \(\frac{m}{m+1}\approx 1\), thus we have:

\[
R(2m, m-1) \approx \frac{(\frac{1}{2}+\frac{m}{m+1})\frac{4^m}{\sqrt{\pi m}}}{\frac{1}{2}\cdot4^{m}} \approx \frac{3\sqrt{2}}{\sqrt{\pi n}}.
\]

\textbf{Case 2: \( n \) is odd.} Let \( n = 2m + 1 \), where \( m \) is a positive integer. Then we have \( k = m - 1 \) and
\[
\begin{split}
\sum_{s=1}^{k-1} \binom{n}{s} &= \sum_{s=1}^{m-2} \binom{2m+1}{s} \\
&= 2^{2m} - \binom{2m+1}{m} - \binom{2m+1}{m-1} - 1.
\end{split}
\]
Thus, the ratio becomes:
\[
\begin{split}
R(2m+1, m-1) &= 1 - \frac{2^{2m} - \binom{2m+1}{m} - \binom{2m+1}{m-1} - 1}{2^{2m}} \\
&= \frac{\binom{2m+1}{m}+\binom{2m+1}{m-1}+1}{2^{2m}}.
\end{split}
\]
Using Stirling’s approximation again, we have:
\[
R(2m+1, m-1) \approx \frac{(\frac{2m+1}{m+1}+\frac{m}{m+2}\cdot\frac{2m+1}{m+1})\frac{4^m}{\sqrt{\pi m}}}{\frac{1}{2}\cdot4^{m}}\approx\frac{4\sqrt{2}}{\sqrt{\pi n}}.
\]

Finally, both cases converge towards 0 as \(n\to\infty\).
\end{proof}

\begin{proposition}\label{prop:OJZJ-k}
For the OJZJ problem with \( n\in\mathbb{Z}^+ \), if \(n\) is sufficiently large and the jump parameter \( k\leq\frac{n}{2}-\sqrt{\frac{n\ln4}{2}}\), we have 
\begin{align}
R(n, k) \geq 0.5
\end{align}
\end{proposition}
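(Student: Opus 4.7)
The plan is to reduce the claim to a one-sided tail bound on a symmetric binomial distribution and then apply Hoeffding's inequality. Using the symmetry $\binom{n}{s}=\binom{n}{n-s}$, the ratio rewrites as
\begin{align*}
R(n,k) \;=\; 1 \;-\; \frac{\sum_{s=1}^{k-1}\binom{n}{s}}{2^{n-1}},
\end{align*}
so the target inequality $R(n,k)\geq 1/2$ is equivalent to
\begin{align*}
\sum_{s=1}^{k-1}\binom{n}{s} \;\leq\; 2^{n-2}.
\end{align*}

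The next step is probabilistic. Let $X\sim\mathrm{Bin}(n,\tfrac12)$, so that $\sum_{s=0}^{k-1}\binom{n}{s}=2^{n}\Pr[X\leq k-1]$. Ignoring the negligible additive constant (which is why the statement requires $n$ to be sufficiently large), it suffices to show $\Pr[X\leq k-1]\leq 1/4$. Since the hypothesis $k\leq n/2-\sqrt{n\ln 4/2}$ forces $k-1<n/2$, we may apply Hoeffding's inequality to the lower tail of $X$:
\begin{align*}
\Pr\!\left[X\leq k-1\right] \;\leq\; \exp\!\left(-\frac{2\bigl(\tfrac{n}{2}-(k-1)\bigr)^{2}}{n}\right).
\end{align*}

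The final step is to check that the hypothesis on $k$ makes this upper bound at most $1/4$. The requirement $\exp(-2(n/2-k+1)^2/n)\leq 1/4$ rearranges to $n/2-k+1\geq \sqrt{n\ln 4/2}$, i.e.\ $k\leq n/2+1-\sqrt{n\ln 4/2}$, which is implied (with slack $1$) by the stated condition $k\leq n/2-\sqrt{n\ln 4/2}$. Combining the two inequalities yields $\sum_{s=1}^{k-1}\binom{n}{s}\leq 2^n\cdot \tfrac{1}{4}-1 \leq 2^{n-2}$ for all sufficiently large $n$, and therefore $R(n,k)\geq 1/2$.

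I do not expect a serious obstacle: the main work is just recognising the partial-sum/binomial-tail correspondence and choosing an appropriate concentration inequality. Hoeffding is the cleanest since it yields a bound matching the exact form $\sqrt{n\ln 4/2}$ appearing in the hypothesis. A mild subtlety is the off-by-one/constant gap between $\sum_{s=1}^{k-1}$ and $2^n\Pr[X\leq k-1]$, which is absorbed by the ``sufficiently large $n$'' clause; one could alternatively sharpen the tail bound (e.g.\ via the reflection trick $\Pr[X\leq k-1]\leq \tfrac12\exp(-2(n/2-k+1)^2/n)$ for strictly sub-median thresholds) to avoid invoking this clause, but that refinement is not needed for the stated conclusion.
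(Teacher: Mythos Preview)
Your proposal is correct and follows essentially the same route as the paper: rewrite $R(n,k)$ as $1-2\Pr[X\le k-1]$ for $X\sim\mathrm{Bin}(n,\tfrac12)$, then bound the lower tail by a concentration inequality and solve for the threshold on $k$. The only cosmetic difference is that the paper invokes the multiplicative Chernoff bound rather than Hoeffding; for $p=\tfrac12$ Hoeffding is at least as sharp and in fact delivers exactly the threshold $\sqrt{n\ln 4/2}$ in the statement, so your version is arguably cleaner.
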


\begin{proof}
\( R(n, k) \geq 0.5 \) implies:
\[
\begin{split}
R(n, k) =\frac{2^n - 2\sum_{s=n-k+1}^{n-1} \binom{n}{s}}{2^n}=1-2\sum_{s=1}^{k-1}\frac{\binom{n}{s}}{2^n}\geq 0.5. 
%\frac{\sum_{s=1}^{k} \binom{n}{s}}{2^{n-1}} &\leq 0.5 \\
%\sum_{s=1}^{k-1} \binom{n}{s} &\leq 2^{n-2}
\end{split}
\]

Treat \(\frac{\binom{n}{s}}{2^n}\) as a binomial distribution (i.e., probability of flipping \(s\) bits from \(n\) bits), we have:
\[
X\sim\text{Bin}(n,\frac{1}{2}),\qquad \text{Pr}[X=s]=\frac{\binom{n}{s}}{2^n},
\]
Thus:
\[
\begin{split}
R(n, k) &= 1 - 2
\sum_{s=1}^{k-1}\frac{\binom{n}{s}}{2^n} \qquad %(\text{ignore} \frac{\binom{n}{0}}{2^n} \text{as }n\to\infty)
\\
&\geq 1-2\text{Pr}[X\leq k-1] \geq 0.5 \\
 & \qquad \quad\;\, \text{Pr}[X\leq k-1] \leq \frac{1}{4}.
\end{split}
\]

Then, with the Chernoff bound~\cite{mitzenmacher05} (Theorem 4.2) 
\[
\text{Pr}[X\leq k-1]\leq \exp(-\frac{\delta^2\mu}{2})
\]
Where \(\mu=\frac{n}{2}\). Rewrite \(k=\frac{n}{2}-t\) and \(k\leq(1-\delta)\mu\), we have \(\delta=\frac{2t}{n}\) and then:
\[
\begin{split}
\text{Pr}[X\leq k-1]\leq \exp(-\frac{(\frac{2t}{n})^2 \frac{n}{2}}{2})=\exp(-\frac{2t^2}{n}) &= \frac{1}{4} \\
\frac{2t^2}{n} &=\ln 4 \\
t &= \sqrt{\frac{n\ln 4}{2}}.
\end{split}
\]
And finally, we obtained the bound for \(k\):
\begin{align}
k\leq\frac{n}{2}-t=\frac{n}{2}-\sqrt{\frac{n\ln 4}{2}}.
\end{align}

Therefore, choosing any \(k\leq \frac{n}{2}-\sqrt{\frac{n\ln 4}{2}}\) makes the \(\text{Pr}[X\leq k-1]\) at most \(\frac{1}{4}\), and therefore \(R(n,k)\geq \frac{1}{2}\)

\end{proof}

%Note that \(k<\frac{n}{\ln n}\) is conservative in this proposition. One may transform the inequality \(\ln k + k \cdot \ln\left( \frac{en}{k} \right) \leq n \ln 2\) to the format of the Lambert W function to obtain a tighter upper bound of \(k\) for \(R(n,k)\geq 0.5\).

\subsection{Ratio of Pareto Optimal Solutions of OJZR}\label{appendix:OJZR}

When \((n-k)\mod\ell\neq0\), the Pareto front shape of OJZR is concave and there is one Pareto optimal solution that has much more corresponding bit-strings in the decision space than the others. In this case, we show that the ratio of Pareto optimal solutions is still low in the following proposition.

Firstly, we define this ratio for OJZR. As stated in Prop.~\ref{prop:OJZR-ps2}, the Pareto set of OJZR is \(\mathcal{P}=\{ \,1^n\,\} \,\cup
\{\; x \,\, \bigl| \,\, |x|_1< n-k, \; |\{S \mid S=1^\ell\}|+|\{S \mid S=0^\ell\}|=b \}\;\cup \{\, x \mid  |x|_1=n-k,\,|\{S\,|\,S=0^\ell\}|=\lfloor\frac{k}{\ell}\rfloor\}\).
The first component has only one corresponding solution.
The second component corresponds to selecting \(i\) \(0^\ell\) blocks out of \(b\) blocks for each \(i\in\{\lceil\frac{k}{\ell}\rceil,1,\dots,m\}\). We start counting blocks from \(lceil\frac{k}{\ell}\rceil\) as a lower number of \(0^\ell\) implies more \(1\)s which already dropped into the ``valley'' of the OneJump function. The third component corresponds to the special solutions with the objective \((n-k, \lfloor\frac{k}{\ell}\rfloor\ell)\) that create the concave region in the Pareto front. It has \(\lfloor\frac{k}{\ell}\rfloor\) completed \(0^\ell\) blocks out of \(b\) blocks. 
In the rest of the \(n-\lfloor\frac{k}{\ell}\rfloor\ell\) bits, it has exactly \(n-k\) \(1\)s to be placed, yielding a total of \( \binom{b}{\lfloor \frac{k}{\ell} \rfloor} \times \binom{n - \lfloor \frac{k}{\ell} \rfloor \ell}{n - k}\) possibilities. Therefore, the ratio is given by
\begin{align}
R(n, k) = \frac{1 + \sum_{i = \lceil \frac{k}{\ell} \rceil}^{b} \binom{b}{i} + \binom{b}{\lfloor \frac{k}{\ell} \rfloor} \times \binom{n - \lfloor \frac{k}{\ell} \rfloor \ell}{n - k}}{2^n}.
\end{align}

\begin{proposition}\label{prop:OJZR}
For OJZR with problem size \( n\in\mathbb{Z}^+ \), jump parameter \( 1 \leq k < \frac{n}{2} \), and block length \( \ell \) with \( 2 \leq \ell < n \), \( n \mod \ell = 0 \) and \((n-k)\mod\ell\neq0\), 
let \( b = \frac{n}{\ell} \), the ratio of Pareto optimal solutions \(R(n, k)\) converges to 0 as \(n\to\infty\).
%\vspace{-5pt}
\end{proposition}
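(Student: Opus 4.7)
The plan is to split the numerator of $R(n,k)$ into its three summands and bound each one separately after division by $2^n$. I would write $R(n,k) = (T_1 + T_2 + T_3)/2^n$ with $T_1 = 1$, $T_2 = \sum_{i=\lceil k/\ell \rceil}^{b}\binom{b}{i}$, and $T_3 = \binom{b}{\lfloor k/\ell \rfloor}\binom{n - \lfloor k/\ell \rfloor \ell}{\,n-k\,}$. Only $T_3$ will require non-trivial work; the first two decay exponentially.

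I would first dispose of the easy terms. Clearly $T_1/2^n = 2^{-n} \to 0$. For $T_2$, using the trivial estimate $T_2 \leq \sum_{i=0}^{b}\binom{b}{i} = 2^b = 2^{n/\ell}$ together with the hypothesis $\ell \geq 2$ gives
\begin{align}
\frac{T_2}{2^n} \,\leq\, 2^{-n(1 - 1/\ell)} \,\leq\, 2^{-n/2},
\end{align}
which is exponentially small.

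The main step is to control $T_3$. Set $q = \lfloor k/\ell \rfloor$ and $r = k - q\ell$, so that $0 \leq r < \ell$ and $m := n - q\ell$; then $T_3 = \binom{b}{q}\binom{m}{n-k} = \binom{b}{q}\binom{m}{r}$. The key observation I would use is that $\binom{b}{q}\binom{m}{r}$ counts a specific family of $k$-subsets of $\{1,\ldots,n\}$, namely those obtained by choosing $q$ complete blocks of length $\ell$ together with $r$ further positions from the remaining $m$ bits; this gives an injection into the collection of all $k$-subsets, hence $\binom{b}{q}\binom{m}{r} \leq \binom{n}{k}$. Combined with $k < n/2$ and the classical Stirling-type bound $\binom{n}{k} \leq \binom{n}{\lfloor n/2\rfloor} = O\bigl(2^n/\sqrt{n}\bigr)$, this yields $T_3/2^n = O(1/\sqrt{n})$. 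Summing the three contributions gives $R(n,k) = O(1/\sqrt{n}) \to 0$ uniformly in the admissible $(k,\ell)$.

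The principal obstacle will be handling $T_3$ in regimes where $\ell$ grows with $n$. A naive bound such as $\binom{b}{q} \leq 2^{n/\ell}$ together with $\binom{m}{r} \leq n^{\ell-1}$ would fail when $\ell$ is linear in $n$, since the factor $n^{\ell-1}$ can overwhelm the $2^{-n(1-1/\ell)}$ decay. The combinatorial injection $\binom{b}{q}\binom{m}{r} \leq \binom{n}{k}$ sidesteps any assumption on how $\ell$ scales with $n$ and reduces the argument to the classical central binomial estimate. The hypothesis $(n-k)\bmod\ell \neq 0$ is not used in the upper bound itself; it merely certifies that $R(n,k)$ is given by the three-term formula above (the concave-front case of Prop.~\ref{prop:OJZR-ps2}) rather than the two-term expression of Prop.~\ref{prop:OJZR-ps1}.
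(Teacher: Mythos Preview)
Your proof is correct, and the treatment of $T_3$ is genuinely different from the paper's. Both proofs handle $T_1$ and $T_2$ the same way (trivial bound $T_2\le 2^{b}=2^{n/\ell}$). For $T_3$, the paper bounds each factor crudely, $\binom{b}{\lfloor k/\ell\rfloor}\le 2^{b}$ and $\binom{n-\lfloor k/\ell\rfloor\ell}{n-k}\le 2^{\,n-\lfloor k/\ell\rfloor\ell}$, obtains $T_3/2^n\le 2^{\,n/\ell-\lfloor k/\ell\rfloor\ell}$, and then has to treat the boundary case $\ell=2$ separately because the resulting exponent does not go to $-\infty$ there. Your combinatorial injection $\binom{b}{q}\binom{m}{r}\le\binom{n}{k}$ (which is valid because $r=k-q\ell<\ell$, so the $q$ chosen blocks are exactly the blocks fully contained in the resulting $k$-set) followed by $\binom{n}{k}\le\binom{n}{\lfloor n/2\rfloor}=O(2^n/\sqrt{n})$ gives a single bound $T_3/2^n=O(1/\sqrt{n})$ that works for every admissible $\ell$ without case analysis. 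Your observation that the hypothesis $(n-k)\bmod\ell\neq 0$ only serves to invoke the three-term formula from Prop.~\ref{prop:OJZR-ps2} is also accurate.
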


\begin{proof}
The numerator of \( R(n, k) \) comprises three terms.
The first term is the constant 1.
The second term \(\sum_{i = \lceil \frac{k}{\ell} \rceil}^{b} \binom{b}{i}\) is maximised when the lower limit \( \lceil \frac{k}{\ell} \rceil \) is as small as possible (i.e., 1). 
Thus:
\[
\sum_{i = \lceil \frac{k}{\ell} \rceil}^{m} \binom{b}{i} \leq \sum_{i=1}^{b} \binom{b}{i} = 2^b - 1.
\]

The third term is a product of binomial coefficients:
\[
\binom{b}{\lfloor \frac{k}{\ell} \rfloor} \times \binom{n - \lfloor \frac{k}{\ell} \rfloor \ell}{n - k }.
\]

For \( \binom{b}{\lfloor \frac{k}{\ell} \rfloor} \), to establish a general upper bound, we use:
\[
\binom{b}{\lfloor \frac{k}{\ell} \rfloor} \leq 2^b.
\]
For \( \binom{n - \lfloor \frac{k}{\ell} \rfloor \ell}{n - k } \), the maximum occurs when \( n - \lfloor \frac{k}{\ell} \rfloor \ell \) is the largest, given:
\[
\binom{n - \lfloor \frac{k}{\ell} \rfloor \ell}{n - k } \leq 2^{n - \lfloor \frac{k}{\ell} \rfloor \ell},
\]
We have:
\[
\binom{b}{\lfloor \frac{k}{\ell} \rfloor} \times \binom{n - \lfloor \frac{k}{\ell} \rfloor \ell}{n - k } \leq 2^b \times 2^{n - \lfloor \frac{k}{\ell} \rfloor \ell} = 2^{b+n-\lfloor \frac{k}{\ell} \rfloor \ell}.
\]

Combining the terms, the numerator is bounded by:
\[
\text{Numerator} \leq 1 + 2^b + 2^{b+n-\lfloor \frac{k}{\ell} \rfloor \ell}
\]

Putting it back to the ratio and substituting \(b=\frac{n}{\ell}\):
\[
R(n, k) \leq 2^{-n} + 2^{-n(1-\frac{1}{\ell})} + 2^{\frac{n}{\ell}-\lfloor \frac{k}{\ell} \rfloor \ell}.
\]

Since \(k<\frac{n}{2}\), the power of the third term
\[
\frac{n}{\ell}-\lfloor \frac{k}{\ell} \rfloor \ell < \frac{n}{\ell}-(\frac{n}{2}+1) = -1 + n(\frac{1}{\ell}-\frac{1}{2}).
\]

The first two terms are negligible as \(n\to\infty\). Therefore, 
\begin{align}
R(n, k) \leq 2^{-1 + n(\frac{1}{\ell}-\frac{1}{2})}.
\end{align}

For \(\ell>2\), that converges towards 0 as \(n\to\infty\).

However, for the edge case \(\ell=2\), this becomes 0.5.

Revisiting the ratio for case \(\ell=2\).
Since \(k<n/2\) and \((n-k)\mod 2\neq 0\) forces \(k\) to be odd, we let \(k=2p+1\).
Then, the third term of the ratio:
\[
\begin{split}
& \binom{b}{\lfloor \frac{k}{\ell} \rfloor} \times \binom{n - \lfloor \frac{k}{\ell} \rfloor \ell}{n - k} \\
&= \binom{n/2}{p}\binom{n-2p}{n-2p-1}\\ 
&= \binom{n/2}{p} \cdot (n-2p) = (n-k)\binom{n/2}{k/2}.
\end{split}
\]

Since \(\binom{n/2}{k/2}<\sum_{i=0}^{n/2}\binom{n/2}{i}=2^{n/2}\), and \((n-k)2^{n/2}\) still grows slower than the denominator \(2^n\),
thus the third term also converges towards 0 as \(n\to\infty\).

Therefore, for all the cases, the ratio converges to 0 as \(n\to\infty\).

\end{proof}

\end{document}